\def\eqref#1{equation~\ref{#1}}
\def\plaineqref#1{\ref{#1}}
\def\1{\bm{1}}
\DeclareMathAlphabet{\mathsfit}{\encodingdefault}{\sfdefault}{m}{sl}
\SetMathAlphabet{\mathsfit}{bold}{\encodingdefault}{\sfdefault}{bx}{n}
\definecolor{RoyalBlue}{rgb}{0.25, 0.41, 0.88}
\definecolor{Tan}{rgb}{0.95, 0.52, 0.0}
\theoremstyle{plain}
\newtheorem{theorem}{Theorem}[section]
\newtheorem{lemma}[theorem]{Lemma}
\theoremstyle{definition}
\newtheorem{definition}[theorem]{Definition}
\newtheorem{assumption}[theorem]{Assumption}
\theoremstyle{remark}
\renewcommand{\cite}{\citep}
\title{Asynchronous Federated Reinforcement Learning with Policy Gradient Updates: %Delay-Adaptive
Algorithm Design and Convergence Analysis}
\author{\textbf{\leftline{Guangchen Lan$^{1}$, \ \ Dong-Jun Han$^{2}$, \ \ Abolfazl Hashemi$^{1}$, \ \ Vaneet Aggarwal$^{1}$,}\newline \leftline{Christopher G. Brinton$^{1}$}}\smallskip

\leftline{$^{1}$ Purdue University, West Lafayette, IN, USA, \ \ $^{2}$ Yonsei University, Seoul, South Korea}\smallskip

\leftline{$^{1}$ \texttt{\{lan44,abolfazl,vaneet,cgb\}@purdue.edu}, \ \ $^{2}$ \texttt{djh@yonsei.ac.kr}}
}
\begin{document}

\maketitle

\begin{abstract}
To improve the efficiency of reinforcement learning (RL), we propose a novel asynchronous federated reinforcement learning (FedRL) framework termed AFedPG, which constructs a global model through collaboration among $N$ agents using policy gradient (PG) updates. To address the challenge of lagged policies in asynchronous settings, we design a delay-adaptive lookahead technique \textit{specifically for FedRL} that can effectively handle heterogeneous arrival times of policy gradients. We analyze the theoretical global convergence bound of AFedPG, and characterize the advantage of the proposed algorithm in terms of both the sample complexity and time complexity. Specifically, our AFedPG method achieves $\mathcal{O}(\frac{{\epsilon}^{-2.5}}{N})$ sample complexity for global convergence at each agent on average. Compared to the single agent setting with $\mathcal{O}(\epsilon^{-2.5})$ sample complexity, it enjoys a linear speedup with respect to the number of agents. Moreover, compared to synchronous FedPG, AFedPG improves the time complexity from $\mathcal{O}(\frac{t_{\max}}{N})$ to $\mathcal{O}({\sum_{i=1}^{N} \frac{1}{t_{i}}})^{-1}$, where $t_{i}$ denotes the time consumption in each iteration at agent $i$, and $t_{\max}$ is the largest one. The latter complexity $\mathcal{O}({\sum_{i=1}^{N} \frac{1}{t_{i}}})^{-1}$ is always smaller than the former one, and this improvement becomes significant in large-scale federated settings with heterogeneous computing powers ($t_{\max}\gg t_{\min}$). Finally, we empirically verify the improved performance of AFedPG in four widely used MuJoCo environments with varying numbers of agents. We also demonstrate the advantages of AFedPG in various computing heterogeneity scenarios.

\end{abstract}

\section{Introduction}
\label{submission}

Policy gradient (PG) methods, also known as REINFORCE \citep{williams1992simple}, are widely used to solve reinforcement learning (RL) problems across a range of applications, with recent notable examples including 
%network optimization \cite{NEURIPS2023_Wei, hu2022effective}, wireless communication \cite{Yu2023RL, xiong2022index}, and 
reinforcement learning from human feedback (RLHF) in Google's Gemma \citep{Gemma2024}, OpenAI's InstructGPT \citep{ouyang2022training}, and ChatGPT (GPT-4 \citep{openai2023gpt4}). PG-based methods has also garnered significant attention in several applications including transportation systems \cite{al2019deeppool}, networking \cite{geng2023reinforcement}, data-center resource allocation \cite{chen2023two, Jianlong2024Resource, Zhang2023InfiniStore}, streaming \cite{elgabli2024cross, liu2023learning}, robotics \cite{gonzalez2023asap}, and diffusion models \cite{zhang2024seppo}. 

%networking and communication to solve network optimization problems, \textit{e.g.}, scheduling \citep{NEURIPS2023_Wei}, and solve wireless communication problems, \textit{e.g.}, video streaming \citep{Yu2023RL}, and metaverse latency optimization \citep{xiong2022index}.

Most practical RL applications operate at large scales and rely on a huge amount of data samples for model training in an online behavior \citep{provost2013data, liu2021elegantrl}, which is considered as a key bottleneck in RL \citep{dulac2021challenges, ladosz2022exploration}. To reduce sample complexity while enabling training on massive data, a conventional approach involves transmitting locally collected samples from distributed agents to a central server \citep{Predd2006}. The transmitted samples can then be used for policy learning on the server side. %However, since all collected samples should be transmitted frequently during the training process, this is not feasible in real-world mobile systems where communication bandwidth is limited and long delays are intolerable \cite{Hu2024, lan2023fl, niknam2020federated}, such as wireless edge devices \cite{han2023federated, lan2023fl}, Internet of Things \cite{nguyen2021federated}, autonomous driving \cite{Kiran2022}, and vehicle transportation \cite{al2019deeppool, haliem2021distributed}. 
However, this may not be feasible in real-world mobile systems where the communication bandwidth is limited and a large training time is intolerable \citep{brinton2024key, chu2024only, Liangqi2024Multimodal, niknam2020federated, elgabli2020q}, such as wireless edge devices \citep{lan2023fl, Ching2023Edge, Ching2024Edge}, cloud computing \citep{Yuning2024Cloud, xu2024cloudeval}, autonomous driving \citep{Kiran2022}, and financial applications \citep{long2024adaptive, liang2024contextual}.
% , and vehicle transportation \cite{al2019deeppool, haliem2021distributed}. 
In RL, since new data is continuously generated based on the current policy, the samples collected at the agents need to be transmitted frequently to the server throughout the training process \citep{shen2023towards}, resulting in a significant bottleneck with large delays. Furthermore, the sharing of individual data collected by agents may lead to privacy and legal issues \citep{MAL-083, mothukuri2021survey}.

Federated learning (FL) \citep{mcmahan2017communication} offers a promising solution to the above challenges in a distributed setting. In FL, instead of directly transmitting the raw datasets to the central server, agents only communicate locally trained model parameters (or gradients) with the server. In \citet{chu2025unlocking, chu2024rethinking}, it shows that FL could achieve fairness among different agents. Although FL has been mostly studied for supervised learning problems, several recent works expanded the scope of FL to federated reinforcement learning (FedRL), where $N$ agents collaboratively learn a global policy without sharing the trajectories they collected during agent-environment interaction \citep{jin2022federated, khodadadian2022federated, fedkl2023}. %However, the FL paradigm is typically applied to supervised learning problems. Recently, several works expanded the scope of FL to federated reinforcement learning (FedRL), where $N$ agents collaboratively learn a global policy without sharing the trajectories they collected during agent-environment interaction \cite{jin2022federated, khodadadian2022federated, fedkl2023}. 
FedRL has been studied in the tabular case \citep{agarwal2021communication, jin2022federated, khodadadian2022federated}, and for value-function based algorithms \citep{wang2023federated, fedkl2023}, where a linear speedup has been demonstrated. For policy-gradient based algorithms, namely FedPG, we note that a linear speedup is easy to achieve, given that trajectories collected at different agents can be processed in parallel  \citep{lan2023}. \citet{ganesh2024global} proposed a policy gradient-based approach that is robust to adversarial agents and achieves optimal sample complexity in the presence of adversaries. \citet{liu2024distributed} analyzes the performances in the multi-agent setting. \citet{zhu2024towards} extends the result to the multi-task setting. However, policy-gradient based FedRL has not been well studied in terms of \textit{global} (non-local) time complexity.

% On the other hand, all the above FedRL works focus on the synchronous setting. However, it is not practical to perform synchronous global updates with large-scale heterogeneous agents \cite{xie2019asynchronous, chen2020vafl}, and the global time consumption heavily depends on the draggers \cite{mishchenko2022asynchronous}. 
% % This motivates us to explore the asynchronous settings in FedRL. 
% Asynchronous federated learning (A-FL) shows superiority compared to synchronous FL in \cite{xie2019asynchronous}. Recent works \cite{mishchenko2022asynchronous, NEURIPS2022_6db3ea52} further improve convergence performances with theoretical guarantees. Motivated by these recent advantages, it is natural to explore asynchronous techniques in FedRL to further reduce training costs. However, to the best of the authors' knowledge, there is no work on policy-based AFedRL so far, because of the challenge, to balance the massive \textit{lagged policies} in the asynchronous setting. Instead of fixed datasets in supervised learning, in each iteration, RL agents collect new samples that depend on the current policy, which makes the problem challenging. The key question that this paper aims to address is:

Despite all the aforementioned works in FedRL, they still face challenges in terms of time complexity, primarily due to their focus on synchronous model aggregation. In large-scale heterogeneous settings \citep{xiong2024personalized, xie2019asynchronous, chen2020vafl}, performing synchronous global updates has limitations, as the overall time consumption heavily depends on the slow agents, \textit{i.e.}, stragglers \citep{badita2021single,mishchenko2022asynchronous}. In this paper, we aim to tackle this issue by strategically leveraging asynchronous federated learning (A-FL) in policy-based FedRL for the first time. A-FL \citep{xie2019asynchronous, chang2024asyn} shows superiority compared to synchronous FL, and recent works \citep{mishchenko2022asynchronous, NEURIPS2022_6db3ea52} further improve convergence performances with theoretical guarantees.

\textbf{Challenges:} However, compared to prior A-FL approaches focusing on supervised learning, integrating A-FL with policy-based FedRL introduces new challenges due to the presence of \textit{lagged policies} in asynchronous settings. Unlike supervised FL where the datasets of the clients are fixed, in RL, agents collect new samples in each iteration based on the current policy. This dynamic nature of the data collection process makes both the problem itself and the theoretical analysis challenging. Guaranteeing the global convergence of the algorithm is especially non-trivial under the asynchronous FedRL setting with lagged policies.  This problem setting and its challenges have been largely overlooked in existing research, despite the significance of employing FL in RL. The key question that this paper aims to address is:

%{\em \textbf{Can we improve the efficiency of FedPG through asynchronous methods while facing lagged policies among different agents?}}

% \begin{table}[t]
% \caption{Performance improvements of 
%  our AFedPG over other policy gradient methods. We compare the complexity for convergence in each agent.}
%  \centering
% % \small
% \label{table:complexity}
% \begin{tabular}{cccc}
% \toprule
% % \hline
% \rowcolor{gray!10} \multicolumn{1}{c}{}& {PG~\cite{fatkhullin2023stochastic}}& {FedPG}& {\textbf{AFedPG} (ours)} \\
% \midrule
% \multicolumn{1}{c}{Sample Complexity (FOSP)} & {$\mathcal{O}({\epsilon}^{-3.5})$} & $\mathcal{O}(\frac{{\epsilon}^{-3.5}}{N})$ & $\mathcal{O}(\frac{{\epsilon}^{-3.5}}{N})$ \\
% \rowcolor{gray!10} \multicolumn{1}{c}{Sample Complexity (Global)} & {$\mathcal{O}({\epsilon}^{-2.5})$} & $\mathcal{O}(\frac{{\epsilon}^{-2.5}}{N})$ & $\mathcal{O}(\frac{{\epsilon}^{-2.5}}{N})$ \\
% \multicolumn{1}{c}{Global Time} & {-} & $\mathcal{O}(\frac{t_{\max}}{N} {\epsilon}^{-2.5})$ & $\mathcal{O}(\bar{t}{\epsilon}^{-2.5})$ \\
% \bottomrule
% \end{tabular}
% \end{table}

\begin{table}[t]
\caption{Performance improvements of 
 our AFedPG over other first-order policy gradient methods. We compare the complexity for convergence in each agent.}
\centering
 % \small
\label{table:complexity}
\begin{tabular}{cccc}
\toprule[1.1pt]
% \hline
\multicolumn{1}{c}{}& {Sample Complexity}& {Sample Complexity}& {} \\
\multirow{-2}{*}{Methods}& {(FOSP)}& {(Global)}& \multirow{-2}{*}{Global Time} \\
\midrule[1.1pt]
\rowcolor{gray!10} \multicolumn{1}{c}{Vanilla PG} & {} & {} & {} \\
\rowcolor{gray!10} {\citep{pmlr-v151-yuan22a}} & {\multirow{-2}{*}{$\mathcal{O}({\epsilon}^{-4})$}} & {\multirow{-2}{*}{$\mathcal{O}({\epsilon}^{-3})$}} & {\multirow{-2}{*}{-}} \\
 \multicolumn{1}{c}{Normalized PG} & {} & {} & {} \\
{\citep{fatkhullin2023stochastic}} & \multirow{-2}{*}{$\mathcal{O}({\epsilon}^{-3.5})$} & \multirow{-2}{*}{$\mathcal{O}({\epsilon}^{-2.5})$} &  \multirow{-2}{*}{-} \\
\rowcolor{gray!10} \multicolumn{1}{c}{} & {} & {} & {} \\
\rowcolor{gray!10} \multirow{-2}{*}{FedPG} & \multirow{-2}{*}{$\mathcal{O}(\frac{{\epsilon}^{-3.5}}{N})$} & \multirow{-2}{*}{$\mathcal{O}(\frac{{\epsilon}^{-2.5}}{N})$} & \multirow{-2}{*}{$\mathcal{O}(\frac{t_{\max}}{N} {\epsilon}^{-2.5})$} \\
 \multicolumn{1}{c}{} & {} & {} & {} \\
\multirow{-2}{*}{\textbf{AFedPG}} & \multirow{-2}{*}{$\mathcal{O}(\frac{{\epsilon}^{-3.5}}{N})$} & \multirow{-2}{*}{$\mathcal{O}(\frac{{\epsilon}^{-2.5}}{N})$} & \multirow{-2}{*}{$\mathcal{O}(\bar{t}{\epsilon}^{-2.5})$} \\
\bottomrule[1.1pt]
\end{tabular}
\end{table}

{\em  Despite the inherent challenge of dealing with lagged policies among different agents, can we improve the efficiency of FedPG through asynchronous methods while ensuring theoretical convergence?}

We answer this question in the affirmative by proposing AFedPG, an algorithm that asynchronously updates the global policy using policy gradients from federated agents. The key components of the proposed approach include a delay-adaptive lookahead technique tailored to PG, which addresses the inconsistent arrival times of updates during the training process. This new approach eliminates the second-order correction terms that do not appear in conventional supervised FL, effectively addressing the unique challenges of asynchronous FedRL. The improvement of AFedPG over other approaches is summarized in Table \ref{table:complexity}. Here, the global time is measured by the number of global updates $\times$ time complexity in each iteration. In terms of sample complexity, the federated learning technique brings a linear speedup with respect to the number of agents $N$. As for the global time, AFedPG improves from $\mathcal{O}(\frac{t_{\max}}{N} {\epsilon}^{-2.5})$ to $\mathcal{O}(\bar{t}{\epsilon}^{-2.5})$, where $\bar{t} \coloneqq 1 / \sum_{i=1}^{N} \frac{1}{t_{i}}$ is a harmonic average which is less than or equal to $t_{\max}/N$, and $t_{i}$ denotes the time complexity in each iteration at agent $i$. %\djh{Since we describe these improvements in Section 1.1, I think it's better the illustrate our new idea here rather than the improvements (\textit{e.g.}, more intuitive description of the delay-adaptive technique and its advantages)}

% {\bf Motivation}: The main challenge in RL is the high sample complexity. Recent works are trying to use FL techniques to improve the efficiency, \textit{i.e.}, federated reinforcement learning (FedRL), where $N$ agents collaboratively learn a global strategy without sharing the trajectories they collected during agent-environment interaction \cite{fedkl2023}. FedRL has been studied in tabular RL \cite{agarwal2021communication,jin2022federated,khodadadian2022federated}, for value-function based algorithms \cite{wang2023federated, fedkl2023}, and for policy-based algorithms \cite{lan2023}, where linear speedup has been demonstrated. However, it is not practical to perform synchronous global updates with large-scale distributed agents \cite{xie2019asynchronous,chen2020vafl}, and the global time consumption heavily depends on the draggers \cite{mishchenko2022asynchronous}. This motivates us to explore the asynchronous settings in FedPG. {\bf \color{red} There is some repetition here since tabular and all was mentioned before. }

\subsection{Summary of Contributions} Our main contributions can be summarized as follows:

\begin{enumerate}[leftmargin=7mm]
%\item \textbf{Novelty}: To the best of the authors' knowledge, this is the first work to design and analyze policy-based reinforcement learning in the asynchronous federated setting. This work also gives the \textit{global} convergence rate of federated policy-based RL for the first time. \djh{We can probably make a separate bullet for the convergence analysis itself.}

\item \textbf{New methodology with a delay-adaptive technique:} We propose AFedPG, an asynchronous training method tailored to FedRL. To handle the delay issue in the asynchronous FedRL setting, we design a delay-adaptive lookahead technique. Specifically, in the $k$-th iteration of training, the agent collects samples according to the local model parameters $\widetilde{\theta}_{k} \leftarrow \theta_{k} + \frac{1-\alpha_{k-\delta_{k}}}{\alpha_{k-\delta_{k}}} (\theta_{k} - \theta_{k-1})$, where $\delta_{k}$ is the delay. Unlike (supervised) FL, a second-order correction term (marked blue in \eqref{eq:e_k}) \textbf{only} occurs in RL because of the sampling mechanism. This updating technique cancels out the second-order correction terms ($(1 - \alpha_{k-\delta_{k}}) \nabla^{2} J(\theta_{k}) (\theta_{k-1} - \theta_{k}) + \alpha_{k-\delta_{k}} \nabla^{2} J(\theta_{k}) (\widetilde{\theta}_{k} - \theta_{k}) = 0$) and thus assists the convergence analysis. This technique is specifically designed for AFedRL, and not developed by previous FL works.
% \begin{equation}
% (1 - \alpha_{k-\delta_{k}}) \nabla^{2} J(\theta_{k}) (\theta_{k-1} - \theta_{k}) + \alpha_{k-\delta_{k}} \nabla^{2} J(\theta_{k}) (\widetilde{\theta}_{k} - \theta_{k}) \rightarrow 0,
% \end{equation}

\item \textbf{Convergence analysis:} This work gives both the \textit{global} and the first-order stationary point (FOSP) convergence guarantees of the asynchronous federated policy-based RL for the \textbf{first time}. We analytically characterize the convergence bound of AFedPG using the key lemmas, and show the impact of various parameters including delay and number of iterations. %To achieve the convergence rates, we assume that the reward function is bounded, the score functions are bounded and continuous, and the function for the approximation has adequate expressivity without any extra strong assumptions.

\item \textbf{Linear speedup in sample complexity:} As shown in Table \ref{table:complexity}, our AFedPG approach improves the sample complexity in each agent from $\mathcal{O}({\epsilon}^{-2.5})$ (single agent PG) to $\mathcal{O}(\frac{{\epsilon}^{-2.5}}{N})$, where $N$ is the number of federated agents. This represents the linear speedup of our method with respect to the number of agents $N$.

\item \textbf{Time complexity improvement:} Our AFedPG also reduces the time complexity of synchronous FedPG from $\mathcal{O}(\frac{t_{\max}}{N})$ to $\mathcal{O}(\bar{t} \coloneqq \frac{1}{\sum_{i=1}^{N} \frac{1}{t_{i}}})$. The latter is always smaller than the former. This improvement is significant in large-scale federated settings with heterogeneous delays ($t_{\max}\gg t_{\min}$).

\item \textbf{Experiments under the MuJoCo environment:} We empirically verify the improved performances of AFedPG in four different MuJoCo environments with varying numbers of agents. We also demonstrate the improvements with different computing heterogeneity.
\end{enumerate}

To the best of our knowledge, this is the first work to successfully integrate policy-based reinforcement learning with asynchronous federated learning and analyze its behavior, accompanied by theoretical convergence guarantees. This new setting necessitates us to deal with the lagged policies under a time-varying data scenario depending on the updated policy.
 
% To the best of the authors' knowledge, this is the first work to design and analyze policy-based reinforcement learning in the asynchronous federated setting. 
 
\textbf{Notation:} We denote the Euclidean norm by $\|\cdot \|$, and the vector inner product by $\langle\cdot \rangle$. For a vector $a \in\mathbb{R}^{n}$, we use $a^{\top}$ to denote the transpose of $a$. A calligraphic font letter denotes a set, \textit{e.g.}, $\mathcal{C}$, and $|\mathcal{C}|$ denotes its cardinality. We use $\mathcal{C} \setminus \{j\}$ to denote a set that contains all the elements in $\mathcal{C}$ except for $j$.

%The remainder of this paper is organized as follows. In the next subsection, we summarize the related works. In Section 2, we introduce our problem setting, including some backgrounds on reinforcement learning and policy gradients. In Section 3, we describe our proposed AFedPG method. In Section 4, we analyze the global convergence rate of AFedPG and discuss its time complexity compared to the synchronous setting. Section 5 presents the experimental results, and we conclude this paper in Section 6.

%%%%%%%%%%%%%%%%%%%%%%%%%%%%%%%%%%%%%%%%%%%%%%%%%%%%%%%%%%%%%%%%%%%%%%%%%%%%%%%%%
\section{Related Work}

In this section, we review previous works that are most relevant to the present work.

\textbf{Policy gradient methods:} For vanilla PG, the state-of-the-art result is presented in \citet{pmlr-v151-yuan22a}, achieving a sample complexity of $\widetilde{\mathcal{O}}({\epsilon}^{-4})$ for the local convergence. Several recent works have improved this boundary with PG variants. In \citet{huang2020momentum}, a PG with momentum method is proposed with convergence rate $\mathcal{O}({\epsilon}^{-3})$ for the local convergence. The authors of \citet{ding2022} further improve the convergence analysis of PG with momentum and achieve a global convergence with the rate $\mathcal{O}({\epsilon}^{-3})$. In \citet{fatkhullin2023stochastic}, a normalized PG technique is introduced and improves the global convergence rate to $\mathcal{O}({\epsilon}^{-2.5})$. In \citet{mondal2024improved}, an acceleration-based natural policy gradient method is proposed with sample complexity $\mathcal{O}({\epsilon}^{-2})$, but second-order matrices are computed in each iteration (with first-order information), which brings more computational cost. However, all previous works have focused on the single-agent scenario, and the federated PG has not been explored. Compared to these works, we focus on a practical federated PG setting with distributed agents to improve the efficiency of RL. Our scheme achieves a linear speedup with respect to the number of agents, significantly reducing the sample complexity compared to the conventional PG approaches. 

% {\bf \color{red}confusing between FOSP and global - fix one and give results there. Seems your result is for global - so only mention that.}

\textbf{Asynchronous FL:} In \citet{xie2019asynchronous}, the superiority of asynchronous federated learning (A-FL) has been empirically shown compared to synchronous FL in terms of convergence performances. In \citet{chen2020vafl}, the asynchronous analysis is extended to the vertical FL with a better convergence performance compared to the synchronous setting. In \citet{pmlr-v206-dun23a}, a dropout regularization method is introduced to handle the heterogeneous problems in A-FL. About the same time, recent works \citep{mishchenko2022asynchronous, NEURIPS2022_6db3ea52} further improve the convergence performance with theoretical guarantees, and show that A-FL always exceeds synchronous FL without any changes to the algorithm. We note that all previous A-FL works focus on supervised learning with fixed datasets on the client side. However, in RL, agents collect new samples that depend on the current policy (model parameters) in each iteration, which makes the problem fundamentally different and challenging. In this work, we address this challenge by developing an A-FL method highly tailored to policy gradient, leveraging the proposed delay-adaptive lookahead technique.

\textbf{FedRL:} For value-function based algorithms, \citet{zheng2024frl, jin2022federated, woo2023blessing} analyze the convergence performances with environment heterogeneity. \citet{salgia2024sample} analyzes the trade-off between sample and communication complexity. \citet{zheng2024frl, khodadadian2022federated} shows a linear speedup with respect to the number of agents. \citet{zhang2024finitetime} extends the result with linear function approximation. However, all of the above works are limited to tabular or linear approximation analysis (without deep learning). For actor-critic (AC) based method, \citet{wang2023federated} analyzes the convergence performances with the linear function approximation. \citet{pmlr-v48-mniha16} builds a practical system to implement the neural network approximation. \citet{yang2024federatednaturalpolicygradient} analyzes the sample complexity in the multi-task setting. In \citet{fedkl2023}, it adds KL divergence and experimentally validates the actor-critic based method with neural network approximation. For policy-based methods, \citet{chen2021communication} gives a convergence guarantee for the vanilla FedPG. \cite{wang2024momentum} analyzes the performance in the heterogeneous setting. \citet{lan2023} further shows the simplicity compared to the other RL methods, and a linear speedup has been demonstrated in the synchronous setting. Further, optimal sample complexity for global optimality in federated RL even in the presence of adversaries is studied in \citet{ganesh2024global}. However, with online behavior, it is not practical to perform synchronous global updates with heterogeneous computing power, and the global time consumption heavily depends on the stragglers \citep{mishchenko2022asynchronous}. This motivates us to consider the asynchronous policy-based method for FedRL. %We show that our method can further reduce the time complexity compared to the synchronous FedRL approach.
We demonstrate both theoretically and empirically that our method further reduces the time complexity compared to the synchronous FedRL approach.

% [2] Fault-tolerant federated reinforcement learning with theoretical guarantee

%%%%%%%%%%%%%%%%%%%%%%%%%%%%%%%%%%%%%%%%%%%%%%%%%%%%%%%%%%%%%

% \vspace{-1mm}

\section{Problem Setup}

\textbf{Markov decision process:}
We consider the Markov decision process (MDP) as a tuple $( \mathcal{S}, \mathcal{A}, \mathcal{P}, \mathcal{R}, \gamma )$, where $\mathcal{S}$ is the state space, $\mathcal{A}$ is a finite action space, $\mathcal{P}:\mathcal{S}\times\mathcal{A}\times\mathcal{S}\rightarrow\mathbb{R}$ is a Markov kernel that determines transition probabilities, $\mathcal{R}:\mathcal{S}\times\mathcal{A}\rightarrow\mathbb{R}$ is a reward function, and $\gamma \in(0,1)$ is a discount factor. At each time step $t$, the agent executes an action $a_t \in\mathcal{A}$ from the current state $s_t \in\mathcal{S}$, following a stochastic policy $\pi$, \textit{i.e.}, $a_t \sim \pi(\cdot|s_t)$. The corresponding reward is defined as $r_t$. The state value function is defined as
\begin{equation}
\label{eq:v_value}
    V_{\pi}(s) = \mathop{\mathbb{E}}_{a_{t} \sim \pi(\cdot|s_t),\atop s_{t+1} \sim P(\cdot|s_t,a_t)} \left[\sum_{t=0}^{\infty}\gamma^{t}r(s_t,a_t)|s_0=s \right].
\end{equation}
Similarly, the state-action value function (Q-function) is defined as
\begin{equation}
\label{eq:q_value}
    Q_{\pi}(s,a) = \mathop{\mathbb{E}}_{a_{t} \sim \pi(\cdot|s_t),\atop s_{t+1} \sim P(\cdot|s_t,a_t)} \left[\sum_{t=0}^{\infty}\gamma^{t}r(s_t,a_t)|s_0=s,~a_0=a \right].
\end{equation}
 An advantage function is then define as $A_{\pi}(s,a)=Q_{\pi}(s,a) - V_{\pi}(s)$. With continuous states, the policy is parameterized by $\theta \in\mathbb{R}^{d}$, and then the policy is referred as $\pi_{\theta}$ (Deep RL parameterizes $\pi_{\theta}$ by deep neural networks). A state-action visitation measure induced by $\pi_{\theta}$ is given as
 \begin{equation}
\label{eq:visitation}
    \nu_{\pi_\theta}(s,a) = (1-\gamma) \mathop{\mathbb{E}}_{s_0 \sim \rho} \left[\sum_{t=0}^{\infty} \gamma^{t} P(s_t = s,~ a_t = a|s_0,~\pi_\theta) \right],
\end{equation}
where the starting state $s_0$ is drawn from a distribution $\rho$. The goal of the agent is to maximize the expected discounted return defined as follows:
\begin{equation}
\label{eq:j_value}
    \max_{\theta} J(\theta) \coloneqq \mathop{\mathbb{E}}_{s \sim \rho} \left[V_{\pi_{\theta}}(s) \right].
\end{equation}

The gradient of $J(\theta)$ \citep{schulman2018highdimensional} can be written as:
\begin{equation}
\label{eq:gradient}
    \nabla_{\theta} J(\theta) = \mathop{\mathbb{E}}_{\tau} \left[\sum_{t=0}^{\infty}\nabla_{\theta} \log \pi_{\theta}(a_t|s_t) \cdot A_{\pi_{\theta}}(s_t, a_t) \right],
\end{equation}
where $\tau = (s_0, a_0, r_0,s_1, a_1,r_1\cdots)$ is a trajectory induced by policy $\pi_{\theta}$. We omit the $\theta$ notation in the gradient operation and denote the policy gradient by $g$ for short. Then, $g$ is estimated by
% \begin{equation}
% \label{eq:es_gradient}
%     \widehat{g}= \frac{1}{B} \mathop{\sum}_{\tau \in\mathcal{B}} \sum_{t=0}^{\infty}\nabla \log\pi_{\theta}(a_t|s_t) \cdot A_{\pi_{\theta}}(s_t, a_t),
% \end{equation}
\begin{equation}
\label{eq:es_gradient}
    g(\theta, \tau) = \sum_{t=0}^{\infty}\nabla \log\pi_{\theta}(a_t|s_t) \sum_{h=t}^{\infty} \gamma^{h}r(s_h, a_h).
\end{equation}
% where $B = |\mathcal{B}|$ is the batch size. In practice, we can sample $(s, a) \sim \nu^{\pi_{\theta^{k}}}$ and obtain the unbiased estimate $\widehat{A}_{\pi_{\theta^{k}}}(s, a)$ using Algorithm 3 in \cite{agarwal2021theory}. 
% We define the importance sampling weight as
% \begin{equation}
% \label{eq:importance}
%     w(\tau | \theta', \theta) = \frac{p(\tau | \theta')}{p(\tau | \theta)} = \prod_{t=0}^{\infty} \frac{\pi_{\theta'}(a_t | s_t)}{\pi_{\theta}(a_t | s_t)}
% \end{equation}
% to obtain $ \mathop{\mathbb{E}}_{\tau} \left[ w(\tau | \theta', \theta) \cdot {g} \right] = \nabla J(\theta')$.

\textbf{Federated policy gradient:} We aim to solve the above problem in an FL setting, where $N$ agents collaboratively train a common policy $\pi_{\theta}$. Specifically, each agent collects trajectories and corresponding reward $r(s_h, a_h)$ based on its local policy. Then, each agent $i$ estimates $g(\theta_{i}, \tau)$ for training the model $\theta$, and the updated models are aggregated at the server. Motivated by the limitations of synchronous model aggregation in terms of time complexity, in the next section, we present our AFedPG methodology that takes an asynchronous approach to solve \eqref{eq:j_value} in an FL setting. %tackle the above problem.

\section{Proposed Asynchronous FedPG}
\begin{figure*}[htbp]
\centering
% \vspace{-2mm}
% \setlength{\abovecaptionskip}{0}
\includegraphics[width=5.5in]{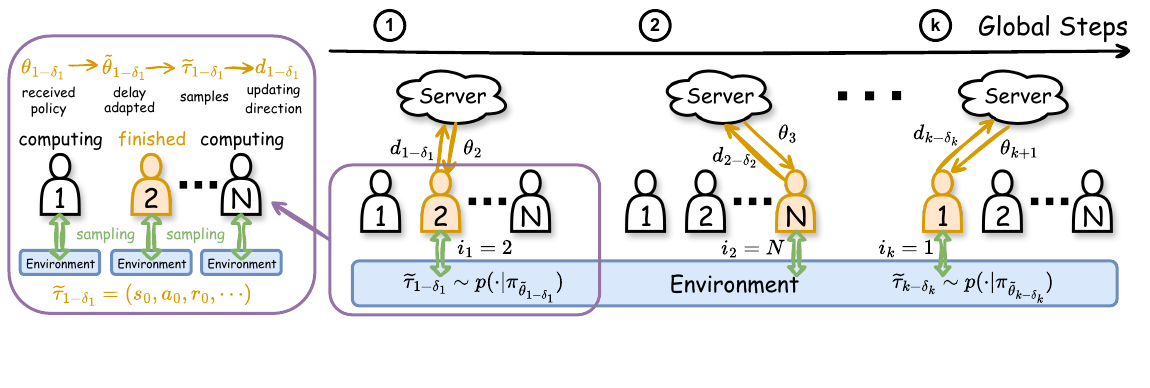}
\vspace{-10mm}
 \caption{An illustration of the asynchronous federated policy gradient updates. Each agent has a local copy of the environment, and agents may collect data according to different local policies. At each iteration, the agent in the yellow color finishes the local process and then communicates with the server, while the other agents keep sampling and computing local gradients in parallel. In the $k$-th global iteration, $\delta_{k} \in\mathbb{N}$ is the delay, $\widetilde{\tau}_{k-\delta_{k}
}$ is the sample collected according to the policy $\pi_{\widetilde{\theta}_{k-\delta_{k}
}}$, and $d_{k-\delta_{k}}$ is the updating direction calculated from the sample $\widetilde{\tau}_{k-\delta_{k}}$.}
\label{fig:fed_illus}
\end{figure*}

The proposed algorithm consists of $K$ global iterations, indexed by $k=0, 1, \dots, K-1$. We first introduce the definition of concurrency and delay in our asynchronous federated setting. We then present the proposed AFedPG methodology.

\begin{definition}
\label{def:concurrency}
(Concurrency) We denote $\mathcal{C}_{k}$ as the set of active agents in the $k$-th global iteration, and define $\omega_{k} \coloneqq |\mathcal{C}_{k}|$ as the concurrency. 
We define the average and maximum concurrency as $\bar{\omega} \coloneqq \frac{1}{K} \sum_{k=0}^{K-1} \omega_{k}$, and $\omega_{\max} \coloneqq \max \omega_{k}$,  respectively.
\end{definition}
% In practice, the server can utilize all resources to accelerate the learning process, \textit{i.e.} $\bar{\omega} = N$, and thus $\omega_{k}$ is a constant.

In each global iteration of AFedPG, the server applies only one gradient to update the model from the agent who has finished its local computation, while the other $N-1$ agents keep computing local gradients (unapplied gradients) in parallel. In this asynchronous setting, the models used in each agent are outdated, as the server keeps updating the model. Thus, we introduce the notion of delay (or staleness) $\delta \in\mathbb{N}$, which measures the difference between the current global iteration and the past global iteration when the agent received the updated model from the server. 
%Thus, we measure the difference between the number of current global updates and the number of global updates used on the agent side as delay $\delta$.

\begin{definition}
\label{def:delay}
(Delay) In the $k$-th global iteration, we denote the delay of the applied gradient as $\delta_{k} \in\mathbb{N}$ (an integer), and the delays of the unapplied gradients as $\{ \delta_{k}^{i} \}_{i \in\mathcal{C}_{k} \setminus \{j_{k}\}}$, where $j_{k}$ denotes the agent that communicates with the server. $\delta_{k}^{i}$ is the difference between the $k$-th iteration and the iteration where the agent $i$ started to compute the latest gradient. In the final $K$-th global iteration, we have $K$ applied gradients $\{ \delta_{k} \}_{k=0}^{K-1}$ and unapplied gradients $\{ \delta_{k}^{i} \}_{i \in\mathcal{C}_{K} \setminus \{j_{K}\}}$. The average delay can be expressed as 
% The average and maximum delays can be expressed as 
\begin{equation}
\begin{aligned}
\label{eq:avg_delay}
\bar{\delta} ~\coloneqq~ \frac{1}{K-1+|\mathcal{C}_{K}|} \big( \sum_{k=0}^{K-1} \delta_{k} + \sum_{i\in\mathcal{C}_{K} \setminus \{j_{K}\}} \delta_{K}^{i} \big).
\end{aligned}
\end{equation}
% and $\delta_{\max} \coloneqq \max ( \max_{k} \delta_{k}, \max_{i\in\mathcal{C}_{K}} \delta_{K}^{i} )$, respectively.
\end{definition}

\textbf{Asynchronous FedPG:}
%Federated learning is a paradigm in which $N$ agents collaboratively train a common global policy with parameters $\theta \in\mathbb{R}^{d}$. 
Our goal is to train a global policy with parameters $\theta \in\mathbb{R}^{d}$ via FL across $N$ distributed agents. As shown in Figure \ref{fig:fed_illus}, during the training process, agent $i$ collects trajectories in an online behavior, and computes gradients or updating directions using its \textit{local trajectories} (also known as samples). Then, agent $i$ transmits local gradients to the central server. In particular, in the $k$-th global iteration, the training of our AFedPG consists of the following three steps:
\begin{itemize}
\item \textbf{Local computation and uplink transmission:} Agent $i$ receives the previous policy $\pi_{\theta_{j}}$ from the server in the $j$-th global iteration. Agent $i$ collects its own local trajectory $\tau_{j}$ based on its current policy $\pi_{\theta_{j}}$. Agent $i$ then computes its local updating direction $d_{j} \in\mathbb{R}^{d}$ based on the trajectory $\tau_{j}$, and sends it back to the server.
\item \textbf{Server-side model update:} The server starts to operate the $k$-th global iteration as soon as it receives $d_{j}$ from agent $i$. Thus, denote $d_{k - \delta_{k}} = d_{j}$, where $\delta_{k}$ is the delay in the $k$-th global iteration. The server updates global policy parameters by $\theta \leftarrow \theta - \eta d_{k - \delta_{k}}$, where $\eta$ is the learning rate.
\item \textbf{Downlink transmission:} The server transmits the current global policy parameters $\theta \in\mathbb{R}^{d}$ back to the agent $i$ as soon as it finishes the global update.
\end{itemize}

The server side procedure of AFedPG is shown in Algorithm \ref{algo_server} and the process at the agent side is shown in Algorithm \ref{algo_agent}. In the $k$-th global iteration, the server operates one global update (Steps 4 and 5) as soon as it receives a direction $d_{k-\delta_{k}}$ from an agent with delay $\delta_{k}$. After the global update, the server sends the updated model back to that agent. In Algorithm \ref{algo_agent}, after receiving the global model from the server, an agent first gets model parameters $\widetilde{\theta}_{k}$ according to Step 2, and then collects samples based on the policy $\pi_{\widetilde{\theta}_{k}}$ (Steps 3). At last, the agent computes the updating direction $d_{k}$, and sends it to the server as soon as it finishes the local process. Overall, all agents conduct local computation in parallel, but the global model is updated in an asynchronous manner as summarized in Figure \ref{fig:fed_illus}. 

\textbf{Normalized update at the server:} In Step 5 of Algorithm \ref{algo_server}, to handle the updates with various delays, we use normalized gradients with controllable sizes. Specifically, the error term $\|e_{k}\|$ in Lemma \ref{lemma:ascent_lemma} is related to $\| \nabla J(\widetilde{\theta}_{k-\delta_{k}}) - \nabla J(\widetilde{\theta}_{k}) \|$ and $\| \nabla J(\theta_{k-1}) - \nabla J(\theta_{k}) \|$. With the smoothness in Lemma \ref{lemma:exp_return}, we are able to bound the error as $\|\theta_{k-1} - \theta_{k}\| = \eta_{k-1}$. The details of the boundaries are shown in \eqref{eq:ab_bound}.

\textbf{Delay-adaptive lookahead at the agent:} In Step 7 of Algorithm \ref{algo_server} (in blue), we design a delay-adaptive lookahead update technique, which is designed specifically for asynchronous FedPG. It operates as follows:
\begin{equation}
\label{eq:delay_adaptive}
{\theta}_{k} = (1 - \alpha_{k-\delta_{k}}) {\theta}_{k-1} + \alpha_{k-\delta_{k}} \widetilde{\theta}_{k}.
\end{equation}
We note that \eqref{eq:delay_adaptive} is not the conventional momentum method, because the orders are different. Here, $\widetilde{\theta}_{k}$ ``looks ahead'' based on global policies ${\theta}_{k-1}$ and ${\theta}_{k}$, and the learning rate $\alpha_{k-\delta_{k}}$ depends on the delay $\delta_{k}$.

In \eqref{eq:delay_adaptive}, the agent collects samples according to the current parameter $\widetilde{\theta}_{k}$ in an online behavior, which only happens in the RL setting. This makes the problem fundamentally different from the conventional FL on supervised learning with a fixed dataset. This mechanism cancels out the second-order Hessian correction terms:
\begin{equation}
(1 - \alpha_{k-\delta_{k}}) \nabla^{2} J(\theta_{k}) (\theta_{k-1} - \theta_{k}) + \alpha_{k-\delta_{k}} \nabla^{2} J(\theta_{k}) (\widetilde{\theta}_{k} - \theta_{k}) \rightarrow 0, 
\end{equation}
and thus assists the convergence analysis. The details of the derivation are shown in Appendix \ref{app:proof_theorem_1} marked in blue.

\begin{algorithm}[!tbp]\caption{AFedPG: Server.}
\label{algo_server}
\begin{algorithmic}[1] %[1] enables line numbers
\REQUIRE {MDP $( \mathcal{S} , \mathcal{A}, \mathcal{P}, \mathcal{R}, \gamma )$; Number of iterations $K$; Step size $\eta_{k}$, $\alpha_{k}$; Initial $\theta_{0},d_{0}\in\mathbb{R}^{d}$.}

\STATE {Broadcast $\theta_{0}$ to $N$ agents.}
\STATE {\textbf{for} $k = 1, \cdots, K$ \textbf{do}}
\STATE \textcolor{Tan}{~~~~~~~~$\rhd$ Uplink Transmit}
\STATE {~~~~~~~~Receive $g(\widetilde{\tau}_{k-\delta_{k}}, \widetilde{\theta}_{k-\delta_{k}})$ from the agent $i_{k}$.}
\STATE {~~~~~~~~$d_{k-\delta_{k}} \leftarrow (1 - \alpha_{k-\delta_{k}}) d_{k-1-\delta_{k-1}} + \alpha_{k-\delta_{k}} g(\widetilde{\tau}_{k-\delta_{k}}, \widetilde{\theta}_{k-\delta_{k}})$}
\STATE \textcolor{Tan}{~~~~~~~~$\rhd$ Server update}
\STATE {~~~~~~~~$\theta_{k} \leftarrow \theta_{k-1} + \eta_{k-1} \frac{d_{k-\delta_{k}}}{\|d_{k-\delta_{k}}\|}$}
\STATE {~~~~~~~~\textcolor{blue}{$\widetilde{\theta}_{k} \leftarrow \theta_{k} + \frac{1-\alpha_{k-\delta_{k}}}{\alpha_{k-\delta_{k}}} (\theta_{k} - \theta_{k-1})$}~~~~\# Lookahead}
\STATE \textcolor{Tan}{~~~~~~~~$\rhd$ Downlink Transmit}
\STATE {~~~~~~~~Transmit $\widetilde{\theta}_{k}$ back to the agent $i_{k}$.}
\STATE {\textbf{end for}}
\ENSURE
{${\theta^{K}}$}
\end{algorithmic}
\end{algorithm}
\begin{algorithm}[!tbp]\caption{AFedPG: Agent $i$ Update, $i=1,\cdots,N$.}
\label{algo_agent}
\begin{algorithmic}[1] %[1] enables line numbers
\REQUIRE {$\widetilde{\theta}_{k'} \in\mathbb{R}^{d}$}
% \STATE {\textcolor{blue}{$\widetilde{\theta}_{k} \leftarrow \theta_{k} + \frac{1-\alpha_{k-\delta_{k}}}{\alpha_{k-\delta_{k}}} (\theta_{k} - \theta_{k-1})$}~~~~\# This is actually computed on the server side. We write it down on the agent side to simplify the notations.}
\STATE {Receive $\widetilde{\theta}_{k'}$ from the server.}
\STATE {$\widetilde{\tau}_{k'} \sim p(\cdot | \pi_{\widetilde{\theta}_{k'}})$~~~~\# Sampling}
\STATE {Estimate policy gradient $g(\widetilde{\tau}_{k'}, \widetilde{\theta}_{k'})$ according to \eqref{eq:es_gradient}.}
\STATE {When finish computing, transmit $g(\widetilde{\tau}_{k'}, \widetilde{\theta}_{k'})$ to the server.~~~~\# When the server receives the policy gradient, it is the $k$-th step on the server, where $k - {\delta}_{k} = k'$.}
\ENSURE
{$g(\widetilde{\tau}_{k'}, \widetilde{\theta}_{k'})$}
\end{algorithmic}
\end{algorithm}

\section{Convergence Analysis}
\label{sec:Convergence_Analysis}

In this section, we derive the convergence rates of AFedPG with the following criterions for the FOSP and global convergence, respectively.

\textbf{Global Criterion:} We focus on the global convergence, \textit{i.e.}, finding the parameter $\theta$ \textit{s.t.} $J^{\star} - J(\theta) \leq \epsilon'$, where $J^{\star}$ is the optimal expected return.

\textbf{FOSP Criterion:} We focus on the first-order stationary convergence, \textit{i.e.}, finding the parameter $\theta$ \textit{s.t.} $\| \nabla J(\theta) \| \leq \epsilon$.

%%%%%%%%%%%%%%%%%%%%%%%%%%%%%%%%%%%%%%%%%%%%%%%%%%%%%%%%%%%%%%%%%%%%%%%%%%%%
%\subsection{Convergence Rates}

We use several standard assumptions listed in Appendix \ref{app:assumptions}. Based on these assumptions, the convergence rates of AFedPG are given in Theorem \ref{theorem:afedpg_rate_FOSP} and \ref{theorem:afedpg_rate}.

\begin{theorem}
\label{theorem:afedpg_rate}
(Global) Let Assumption \ref{assum:policy} and \ref{assum:func_approx} hold. With suitable learning rates $\eta_{k}$ and $\alpha_{k}$, after $K$ global iterations, AFedPG satisfies
\begin{equation}
\begin{aligned}
J^{\star} - \mathbb{E}[J(\theta_{K})] ~\leq~ \mathcal{O}\big({K^{-\frac{2}{5}}} \cdot (1-\gamma)^{-3}\big) + \frac{\sqrt{\epsilon_{{\rm bias}}}}{1-\gamma},
\end{aligned}
\end{equation}
where $\epsilon_{{\rm bias}}$ is from \eqref{eq:approx_error}. Thus, to satisfy $J^{\star} - J(\theta_{K}) \leq \epsilon + \frac{\sqrt{\epsilon_{{\rm bias}}}}{1-\gamma}$, we need $K = \mathcal{O}(\frac{\epsilon^{-2.5}}{(1-\gamma)^{7.5}})$ iterations. As only one trajectory is required in each iteration, the number of trajectories is equal to $K$, \textit{i.e.}, the sample complexity is $\mathcal{O}(\frac{\epsilon^{-2.5}}{(1-\gamma)^{7.5}})$.
\end{theorem}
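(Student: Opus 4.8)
The plan is to follow the standard "variance-reduced normalized policy gradient" template (as in the single-agent analysis of Fatkhullin et al.\ \cite{fatkhullin2023stochastic} and the momentum-based global analysis of Ding et al.\ \cite{ding2022}), adapted to the asynchronous federated setting by carefully tracking the delays $\delta_k$. The backbone is a three-part decomposition: (i) an \emph{ascent lemma} (Lemma \ref{lemma:ascent_lemma}) that lower-bounds the per-iteration increase of $J$ using smoothness of $J$ (Lemma \ref{lemma:exp_return}) and the normalized update $\theta_{k+1} = \theta_k + \eta_k d_{k-\delta_k}/\|d_{k-\delta_k}\|$, giving something like $J(\theta_{k+1}) \ge J(\theta_k) + \eta_k \|\nabla J(\theta_k)\| - 2\eta_k \|e_k\| - \tfrac{L\eta_k^2}{2}$, where $e_k \coloneqq d_{k-\delta_k} - \nabla J(\widetilde{\theta}_{k-\delta_k})$ is the estimation-error term; (ii) a \emph{gradient-bias / weak-gradient-domination} step that converts $\|\nabla J(\theta_k)\|$ back into the optimality gap $J^\star - J(\theta_k)$, incurring the additive $\sqrt{\epsilon_{bias}}/(1-\gamma)$ floor (this uses Assumption \ref{assum:func_approx} together with the Fisher-non-degenerate structure and yields a bound of the form $J^\star - J(\theta) \le C_1 \|\nabla J(\theta)\| + \sqrt{\epsilon_{bias}}/(1-\gamma)$); and (iii) a \emph{recursive variance bound} on $\mathbb{E}\|e_k\|^2$ coming from the momentum-style recursion $d_k = (1-\alpha_k)d_{k-1} + \alpha_k g(\widetilde{\tau}_k,\widetilde{\theta}_k)$, which — this is the crux — must be shown to satisfy a contraction like $\mathbb{E}\|e_k\|^2 \le (1-\alpha_k)\mathbb{E}\|e_{k-1}\|^2 + O(\alpha_k^2 \sigma^2) + O((1-\alpha_k)^2)\cdot(\text{displacement terms})$.

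The key novelty to handle in step (iii) is where the delay-adaptive lookahead $\widetilde{\theta}_k = \theta_k + \tfrac{1-\alpha_{k-\delta_k}}{\alpha_{k-\delta_k}}(\theta_k - \theta_{k-1})$ enters. First I would write $e_k = (1-\alpha_k)e_{k-1} + (1-\alpha_k)\big(\nabla J(\widetilde{\theta}_{k-1}) - \nabla J(\widetilde{\theta}_k)\big) + \alpha_k\big(g(\widetilde{\tau}_k,\widetilde{\theta}_k) - \nabla J(\widetilde{\theta}_k)\big)$ (modulo re-indexing for the applied vs.\ local iterate), so the martingale-difference term is controlled by the bounded-variance of $g$ (which follows from Assumption \ref{assum:policy}: $M_g$-bounded score, bounded reward $R$, and the $\gamma$-geometric horizon give $\sigma^2 = O(R^2 M_g^2/(1-\gamma)^4)$), and the deterministic displacement term $\|\nabla J(\widetilde{\theta}_{k-1}) - \nabla J(\widetilde{\theta}_k)\| \le L_J \|\widetilde{\theta}_{k-1} - \widetilde{\theta}_k\|$ by $J$-smoothness. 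The entire point of the lookahead is that when you expand $\widetilde{\theta}_k - \widetilde{\theta}_{k-1}$ using the definition and the server update, the second-order Hessian correction terms cancel (as flagged in the paper's discussion), leaving $\|\widetilde{\theta}_k - \widetilde{\theta}_{k-1}\| = O(\eta_{k-1}/\alpha_{k-\delta_k})$ up to higher-order terms; this is what prevents the delay from blowing up the recursion. I would bound $1/\alpha_{k-\delta_k}$ in terms of $1/\alpha_k$ and $\delta_{\max}$ (or $\bar\delta$) using the monotonic schedule, so the delay dependence is only polynomial.

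With the three ingredients assembled, I would choose the schedules as in the normalized-PG literature — roughly $\alpha_k = \Theta(k^{-2/5})$ (or a constant-in-$K$ choice $\alpha = \Theta(K^{-2/5})$) and $\eta_k = \Theta(k^{-3/5})$ scaled by appropriate $(1-\gamma)$ powers — so that telescoping the ascent inequality over $k = 0,\dots,K-1$, dividing by $\sum_k \eta_k = \Theta(K^{2/5})$, and plugging in the accumulated error $\tfrac{1}{\sum\eta_k}\sum_k \eta_k \mathbb{E}\|e_k\| = O(K^{-2/5})$ (obtained by summing the solved recursion) yields $\min_k \mathbb{E}\|\nabla J(\theta_k)\| = O(K^{-2/5}(1-\gamma)^{-3/2})$ — or more directly, combining with step (ii), $J^\star - \mathbb{E}[J(\theta_K)] = O(K^{-2/5}(1-\gamma)^{-3}) + \sqrt{\epsilon_{bias}}/(1-\gamma)$. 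Setting the first term to $\epsilon$ gives $K = O(\epsilon^{-2.5}(1-\gamma)^{-7.5})$, and since each global iteration consumes exactly one fresh trajectory $\widetilde{\tau}_k$, the sample complexity equals $K$.

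The main obstacle I anticipate is step (iii) under delays: verifying that the lookahead truly cancels the dominant error terms requires a careful Taylor expansion of $\nabla J$ around $\theta_k$ with an explicit remainder bound (using the $M_h$-Lipschitz score to control $\nabla^2 J$), and one must confirm that the leftover third-order remainder, together with the $1/\alpha_{k-\delta_k}$ factor from stale step sizes, is still summable at the chosen rates — i.e.\ that the delay enters only through benign factors like $\delta_{\max}$ or $\bar\delta$ multiplying lower-order terms, rather than degrading the $K^{-2/5}$ rate. A secondary subtlety is the bookkeeping between the "applied-gradient" index $k-\delta_k$ and the agent's local iterate index, which must be made precise so that the recursion for $e_k$ is a genuine contraction and the conditional-expectation (martingale) argument for the stochastic term is valid despite the asynchrony.
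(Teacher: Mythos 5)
Your overall template matches the paper's: a normalized-update ascent lemma with the delayed direction (Lemma \ref{lemma:ascent_lemma}), the relaxed weak gradient domination inequality (Lemma \ref{lemma:gradient_domination}) supplying the $\sqrt{\epsilon_{bias}}/(1-\gamma)$ floor, and a momentum-error recursion in which the delay-adaptive lookahead cancels the Hessian correction terms, with the delay entering only through $\bar{\delta}$ (Lemma \ref{lemma:delay_concurrency}). However, two concrete steps would fail as written. First, the schedule $\alpha_k=\Theta(k^{-2/5})$, $\eta_k=\Theta(k^{-3/5})$ cannot deliver the claimed $K^{-2/5}$ bound: the martingale part of the recursion $d_k=(1-\alpha_k)d_{k-1}+\alpha_k g_k$ has an irreducible stochastic floor of order $\sqrt{\alpha_k}\,\sigma_g$ (this is exactly the $c_1\sqrt{\alpha_K}\,\sigma_g$ term in the paper's error bound), which with your choice is $\Theta(k^{-1/5})$; whether you then average with weights $\eta_k$ or look at the final iterate, the resulting rate is at best $K^{-1/5}$, i.e.\ sample complexity $\epsilon^{-5}$. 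The paper instead takes $\alpha_k=(k+1)^{-4/5}$ and $\eta_k=\eta_0/(k+1)$, which balances $\sqrt{\alpha_K}\sim K^{-2/5}$ against the lookahead-bias term $L_h\eta_K^2/\alpha_K^2\sim K^{-2/5}$ and makes the delay contribution $L_g\bar{\delta}\,\eta_K^2/\alpha_K\sim\bar{\delta}K^{-6/5}$ lower order.

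Second, your primary assembly (telescope the ascent inequality and divide by $\sum_k\eta_k$) only bounds $\min_k\mathbb{E}\|\nabla J(\theta_k)\|$ or a weighted average, whereas the theorem is a last-iterate statement about $J^\star-\mathbb{E}[J(\theta_K)]$. What you describe as the ``more direct'' alternative is in fact the required step, not a shortcut: plug the ascent lemma into Lemma \ref{lemma:gradient_domination} to obtain the one-step contraction $J^\star-\mathbb{E}[J(\theta_{k+1})]\le\big(1-\tfrac{\sqrt{2\mu}\eta_k}{3}\big)\big(J^\star-\mathbb{E}[J(\theta_k)]\big)+\tfrac{\eta_k}{3}\epsilon_g+\tfrac{8}{3}\eta_k\mathbb{E}[\|e_k\|]+\tfrac{L_g}{2}\eta_k^2$, and unroll it with $\eta_k\propto 1/(k+1)$ and $\eta_0=3\mu_F/M_g$, so the initial gap decays as $(K+1)^{-2}$ and the error terms yield $K^{-2/5}$. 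The remaining items you flag (defining the error against $\nabla J(\theta_k)$ rather than $\nabla J(\widetilde{\theta}_{k-\delta_k})$, and bounding the stale-policy drift by $\|\nabla J(\widetilde{\theta}_{k-\delta_k})-\nabla J(\widetilde{\theta}_k)\|\le 2L_g\delta_k\eta_{k-\delta_k}/\alpha_{k-1}$ before averaging to $\bar{\delta}$) are handled exactly as you anticipate, so with the corrected schedule and the contraction-based unrolling your plan coincides with the paper's proof.
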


\begin{theorem}
\label{theorem:afedpg_rate_FOSP}
(FOSP) Let Assumption \ref{assum:policy} hold. With suitable learning rates $\eta_{k}$ and $\alpha_{k}$, after $K$ global iterations, AFedPG satisfies
\begin{equation}
\begin{aligned}
\mathbb{E}[\| \nabla J(\bar{\theta}_{K}) \|] ~\leq~ \mathcal{O}\big({K^{-\frac{2}{7}}} \cdot (1-\gamma)^{-3}\big),
\end{aligned}
\end{equation}
where $\mathbb{E} \| \nabla J(\bar{\theta}_{K}) \| \coloneqq \frac{\sum_{k=1}^{K} \eta_{k} \mathbb{E} \| \nabla J(\theta_{k}) \|}{\sum_{k=1}^{K} \eta_{k}}$ is the average of gradient expectations. To satisfy $\nabla J(\bar{\theta}_{K}) \leq \epsilon$, we need $K = \mathcal{O}(\frac{\epsilon^{-3.5}}{(1-\gamma)^{7.5}})$ iterations. As only one trajectory is required in each iteration, the number of collected trajectories is equal to $K$, \textit{i.e.}, the sample complexity is $\mathcal{O}(\frac{\epsilon^{-3.5}}{(1-\gamma)^{7.5}})$.
\end{theorem}

{\bf Comparison to the synchronous setting:} Synchronous FedPG needs $\mathcal{O}(\epsilon^{-2.5})$ trajectories in total, and each agent needs $\mathcal{O}(\frac{\epsilon^{-2.5}}{N})$ trajectories. However, in synchronous FedGP the server has to wait for the slowest agent at each global step, which can still slow down the training process. Let $t_{i}$ denote the time consumption for agent $i=1,\cdots, N$ at local steps with finite values. As the agent has the same computation requirement in each iteration (The number of collected samples is the same.), we assume that the time complexity in each iteration is the same. 
%For agent $i$, $i=1,\cdots,N$, denote the time consumption as $t_{i}$ at each local step. 
Then, the waiting time on the server for each step becomes $t_{\max} \coloneqq \max t_{i}$ for FedPG. Our AFedPG approach keeps the same sample complexity as FedPG, but the server processes the global step as soon as it receives an update, speeding up training. Specifically, the average waiting time on the server is $\bar{t} \coloneqq \frac{1}{\sum_{i=1}^{N} \frac{1}{t_{i}}} < \frac{t_{\max}}{N}$ at each step. Thus, the asynchronous FedPG achieves less time complexity than the synchronous approach regardless of the delay pattern. The advantage is significant when $t_{\max} \gg t_{min}$, which occurs in many practical settings with heterogeneous computation powers across different agents. We illustrate the advantage of AFedPG over synchronous FedPG in Figure \ref{fig:asyn_time} in Appendix \ref{app:experiments_comp}. As the server only operates one simple summation, without loss of generality, the time consumption at the server side is negligible.

\section{Experiments}

\subsection{Setup}
\label{sec:exp_setup}

\textbf{Environment:} To validate the effectiveness of our approach via experiments, we consider four popular MuJoCo environments for robotic control (Swimmer-v4, Hopper-v4, Walker2D-v4, and Humanoid-v4) \citep{todorov2012mujoco} with the MIT License. Both the state and action spaces are continuous. Environmental details are described in Table \ref{table:mujoco} in Appendix \ref{sec:app_exp_setting}, and the MuJoCo tasks are visualized in Figure \ref{fig:mujoco}.

\textbf{Measurement:} All convergence performances are measured over $10$ runs with random seeds from $0$ to $9$. The solid lines in our main experimental results are the averaged results, and the shadowed areas are confidence intervals with the confidence level $95\%$. The lines are smoothed for better visualization.

\textbf{Implementation:} Policies are parameterized by fully connected multi-layer perceptions (MLPs) with settings listed in Table \ref{table:hyperparameters} in Appendix \ref{app:experiments_comp}. We follow the practical settings in stable-baselines3 \citep{stable-baselines3} to update models with generalized advantage estimation (GAE) ($0.95$) \citep{schulman2018highdimensional} in our implementation. We use PyTorch \citep{paszke2019pytorch} to implement deep neural networks (DNNs). All tasks are trained on NVIDIA A100 GPUs with $40$ GB of memory.

\textbf{Baselines:} We first consider the conventional PG approach with $N=1$, to see the effect of using multiple agents for improving sample complexity. We then consider the synchronous FedPG method as a baseline to observe the impact of asynchronous updates on enhancing the time complexity. To see the effect of our delay-adaptive technique, we also consider the performance of AFedPG without the delay-adaptive updates, namely vanilla in Figure \ref{fig:fed_time}. Finally, we consider A3C~\citep{pmlr-v48-mniha16}, an asynchronous method designed for RL. We note that only a few prior works could be used on the federated PG problem, \textit{e.g.}, A3C, and many existing works in federated supervised learning are not directly applicable to our federated RL setting.

\textbf{Performance metrics}: We consider the following metrics:
\begin{enumerate}
\item Rewards: the average trajectory rewards collected at each iteration;
\item Convergence: rewards versus iterations during the training process;
\item Time consumption: global time with certain numbers of collected samples.
\end{enumerate}

\begin{figure}[ht]
% \centering
% 	\hspace{-0.3cm}
    \subfigure[Swimmer-v4]{
\includegraphics[width=1.28in]{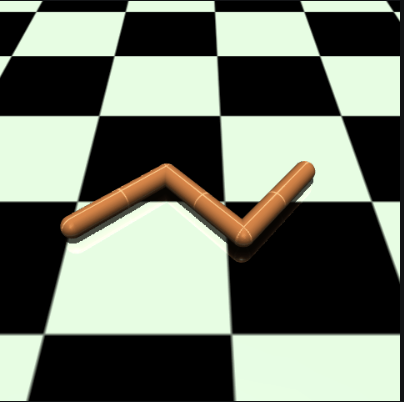}
	}
    \subfigure[Hopper-v4]{	\includegraphics[width=1.28in]{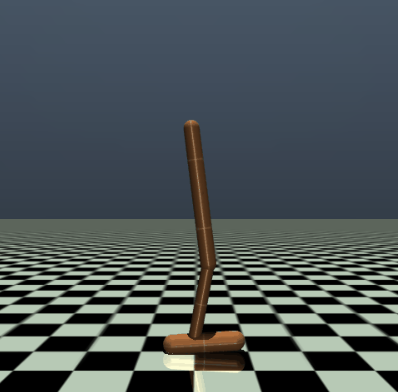}
	}
   \subfigure[Walker2D-v4]{	\includegraphics[width=1.28in]{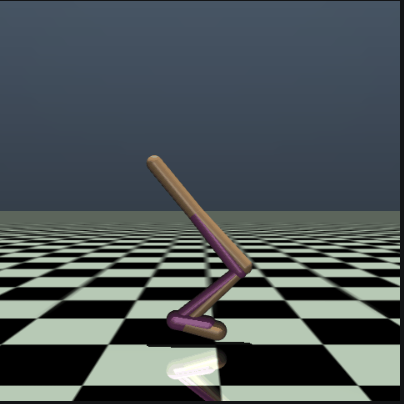}
	}
    \subfigure[Humanoid-v4]{	\includegraphics[width=1.28in]{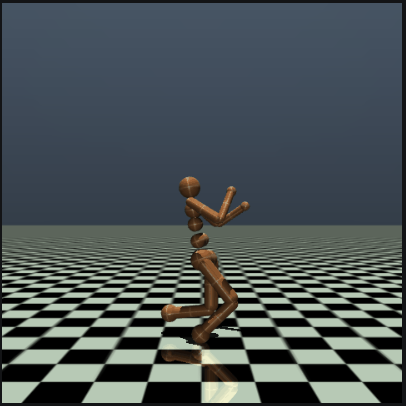}
	}
 %\vspace{-2mm}
\caption{Visualization of the four MuJoCo tasks considered in this paper for experiments.}
\label{fig:mujoco}
\end{figure}

\subsection{Results}

\textbf{Sample complexity improvement:} First, to verify the improvement of sample complexity in the first row of Table \ref{table:complexity}, we evaluate the speedup effects of the number of federated agents $N$. In Figure \ref{fig:fed_speedup}, with different numbers of agents, we test the convergence performances of AFedPG ($N=2, 4, 8$) and the single agent PG ($N=1$). The x-axis is the number of samples collected by each agent on average, and the y-axis is the reward. In all four MuJoCo tasks, AFedPG beats the single-agent PG: AFedPG converges faster, has lower variances, and achieves higher final rewards when more agents are involved in collecting trajectories and estimating policy gradients. These results confirm the advantage of AFedPG in terms of sample complexity.

\begin{figure*}[htbp]
\centering
    \subfigure[Swimmer-v4]{
	\includegraphics[width=1.28in]{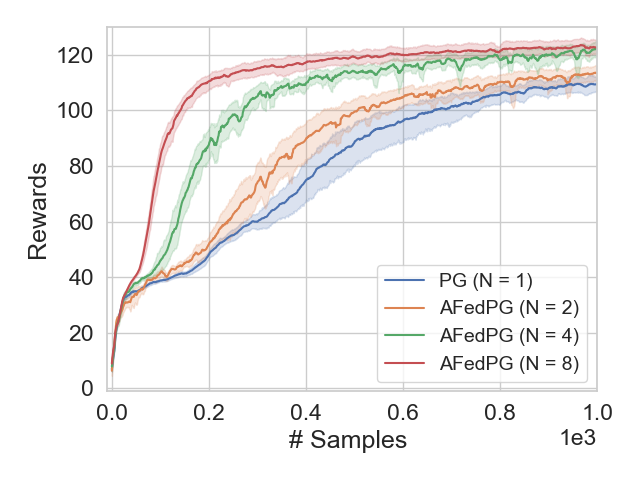}
	}
    \subfigure[Hopper-v4]{
	\includegraphics[width=1.28in]{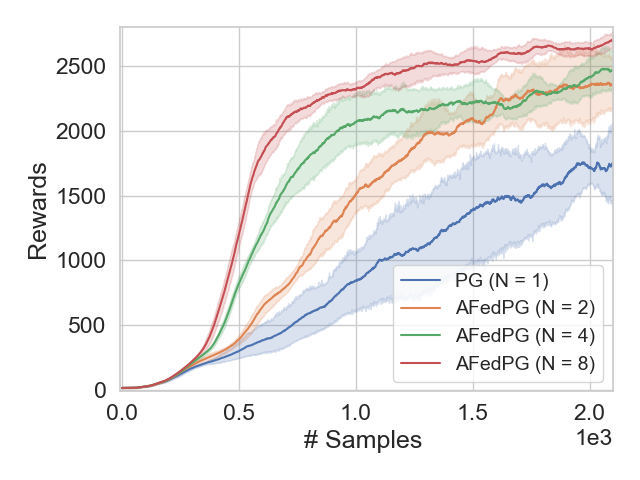}
	}	
    \subfigure[Walker2D-v4]{
	\includegraphics[width=1.28in]{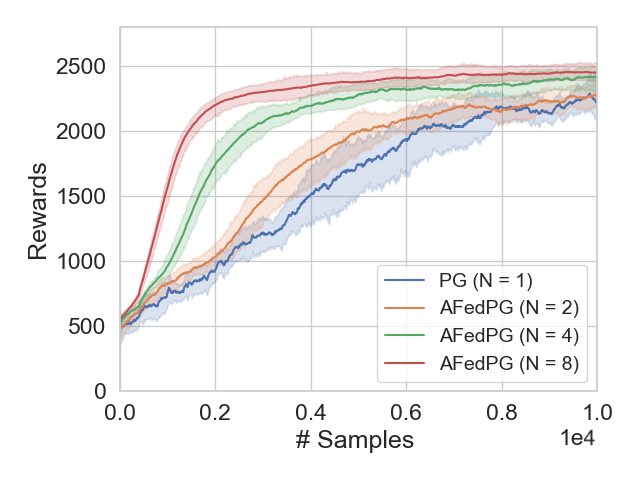}
	}
    \subfigure[Humanoid-v4]{
	\includegraphics[width=1.28in]{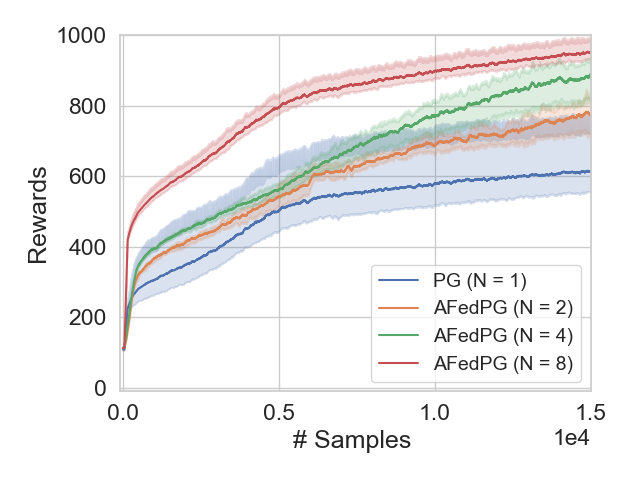}
	}
  % \vspace{-4mm}
 \caption{Reward performances of AFedPG ($N=2,4,8$) and PG ($N=1$) on various MuJoCo environments, where $N$ is the number of federated agents. The solid lines are averaged results over $10$ runs with random seeds from $0$ to $9$. The shadowed areas are confidence intervals with $95\%$ confidence level.}
\label{fig:fed_speedup}
\end{figure*}
\begin{figure*}[htbp]
\centering
    \subfigure[Swimmer-v4]{
	\includegraphics[width=1.28in]{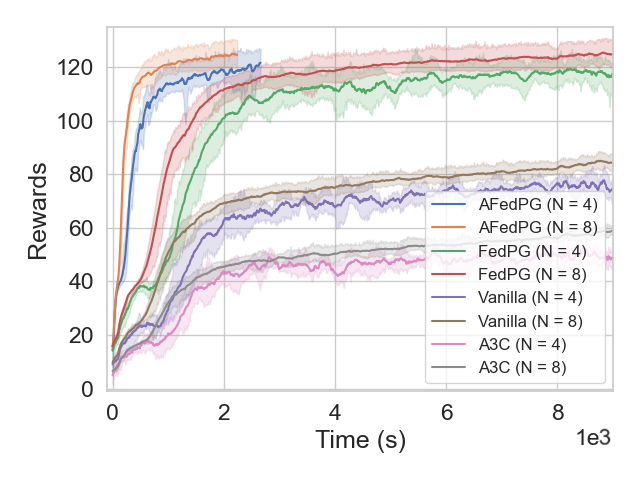}
	}
    \subfigure[Hopper-v4]{
	\includegraphics[width=1.28in]{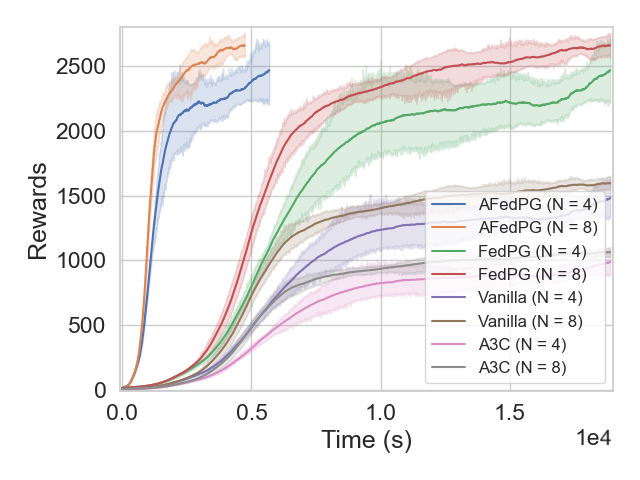}
	}
    \subfigure[Walker2D-v4]{
	\includegraphics[width=1.28in]{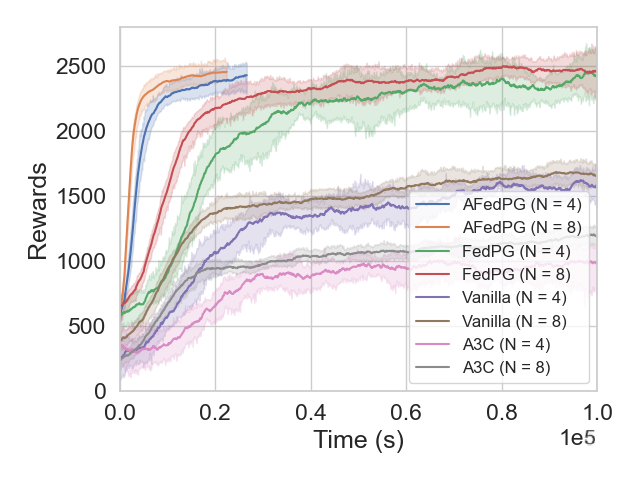}
	}
    \subfigure[Humanoid-v4]{
	\includegraphics[width=1.28in]{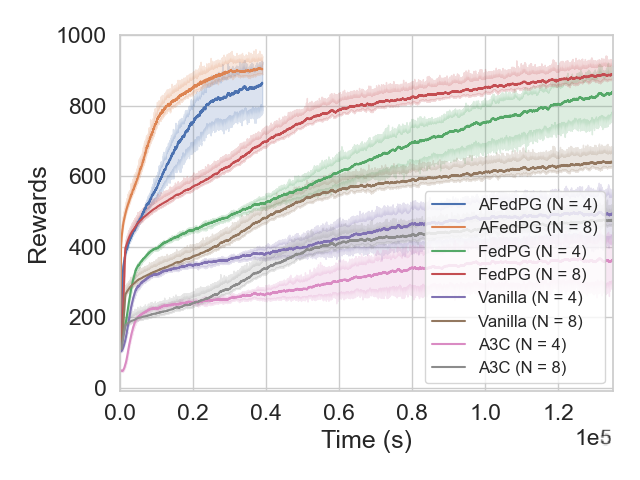}
	}
% \vspace{-4mm}
 \caption{Global time of AFedPG and FedPG with certain numbers of collected samples on various MuJoCo environments, where $N$ is the number of federated agents. The solid lines are averaged results over $10$ runs. The shadowed areas are confidence intervals with $95\%$ confidence level.}
\label{fig:fed_time}
\end{figure*}

\textbf{Speedup in global time complexity:} 
 Second, to verify the improvement of global time complexity in the second row of Table \ref{table:complexity}, we compared the time consumption in the asynchronous and synchronous settings. In Figure \ref{fig:fed_time}, we set $N=4,~8$ and fix the number of samples collected by all agents. Here, $t_{\max}$ is about $4$ times more than $t_{\min}$. The numbers of total samples (trajectories) are $8\times 10^{3}$, $1.6\times 10^{4}$, $8\times 10^{4}$, and $1.2\times 10^{5}$ for the Swimmer-v4 task, the Hopper-v4 task, the Walker2D-v4 task, and the Humanoid-v4 task individually when $N=8$. When $N=4$, the numbers are halved. In all four environments, AFedPG has much lower time consumption compared to the synchronous FedPG, confirming the enhancement in terms of time complexity. Compared to the A3C baseline, AFedPG achieves much higher rewards with less variance.

 \textbf{Ablation study:} In Figure \ref{fig:fed_time}, we also observe the impact of our delay-adaptive lookahead approach. It is seen that the vanilla scheme without the delay-adaptive lookahead technique does not provide a satisfactory performance, confirming the importance of the proposed approach. We also analyze the effect of computation heterogeneity in Appendix \ref{app:supp_results}.

%%%%%%%%%%%%%%%%%%%%%%%%%%%%%%%%%%%%%%%%%%%%%%%%%%%%%%%%%%%%%%
\section{Discussions}

{\bf Conclusions:} We proposed AFedPG, a novel asynchronous FedRL framework that updates the global model using PGs from multiple agents. To handle the challenge of lagged (heterogeneous) policies in the asynchronous setting, we designed a delay-adaptive lookahead technique and used normalized updates to integrate PGs. We then analytically characterized the convergence bound of AFedPG and showed both global and first-order stationary point convergence guarantees. We also showed that AFedPG achieves a speedup for both sample complexity and time complexity. First, our AFedPG method achieves $\mathcal{O}(\frac{{\epsilon}^{-2.5}}{N})$ sample complexity at each agent for global convergence. Compared to the SOTA result in the single agent setting, \textit{i.e.} PG, with $\mathcal{O}(\epsilon^{-2.5})$ sample complexity, it enjoys a linear speedup with respect to the number of agents $N$. Second, compared to synchronous FedPG, AFedPG improves the time complexity from $\mathcal{O}(\frac{t_{\max}}{N})$ to $\mathcal{O}({\sum_{i=1}^{N} \frac{1}{t_{i}}})^{-1}$, where $t_{i}$ denotes the time complexity in each iteration at the agent $i$, and $t_{\max}$ is the largest one. The latter complexity $\mathcal{O}({\sum_{i=1}^{N} \frac{1}{t_{i}}})^{-1}$ is always smaller than the former one, and this improvement is significant in a large-scale federated setting with heterogeneous computing powers ($t_{\max}\gg t_{\min}$). Finally, we empirically verified the performances of AFedPG compared to various baselines in four MuJoCo environments with different $N$. We also demonstrated improvements with different computing heterogeneity. It is shown that AFedPG achieves speedup in terms of both sample complexity and time complexity, especially in scenarios with high computing power heterogeneity. 
%We also demonstrated the improvements with different computing heterogeneity. In summary, AFedPG achieves speedup in terms of both sample complexity and time complexity, especially with large computing power heterogeneity.

% Investigating AFedRL with the second-order methods (\textit{e.g.}, natural policy gradients) and adversarial attacks analysis (\textit{e.g.}, Byzantine fault tolerance) are interesting directions for future research.

%Future works include asynchronous FedRL with the second-order methods, \textit{e.g.}, natural policy gradients, and adversary analysis, \textit{e.g.}, Byzantine fault tolerance.
{\bf Limitations and future work directions:}
%\begin{enumerate}[leftmargin=*]
 %   \item 
(i) Although the system is resilient to stragglers, addressing threats posed by adversarial or malicious workers in this setup remains an open problem. While robustness to such attacks has been explored in the synchronous setting \citep{ganesh2024global}, extending these methods to an asynchronous environment is both a challenging and promising avenue for future research. 
 %   \item 
 (ii) It is worth studying whether the asynchronous method is compatible with other methods, \textit{e.g.}, local update, quantization, and low-rank decomposition, to improve communication efficiency in the deep RL setting. 
 %   \item
(iii) Extending second-order policy optimization methods, such as natural policy gradient methods, which achieve optimal sample complexity in centralized settings \citep{mondal2024improved}, to an asynchronous federated setup is an important direction for future work.
  %  \item
(iv)  While this paper focuses on discounted rewards, exploring such results for average reward setup \citep{bai2024regret,ganesh2025aistats,ganesh2024order} is an important open direction. 
    
    %How to extend this approach to second-order policy optimization methods, \textit{e.g.}, natural policy gradient methods, is open;
    % \item In the experimental study, limited by computing resources, it would be more persuasive to extend the number of federated agents to a larger scale.
%\end{enumerate}

%%%%%%%%%%%%%%%%%%%%%%%%%%%%%%%%%%%%%%%%%%%%%%%%%%%%%%%%%%%%%%
\section*{Reproducibility Statement}

In this statement, we discuss the efforts that have been made to ensure reproducibility.

For theoretical results, we clearly explain all assumptions in Appendix \ref{app:assumptions}, and a complete proof of the lemmas and theorems in Appendix \ref{app:proof}.

For algorithms, we provide the pseudocode in Algorithm \ref{algo_server} and Algorithm \ref{algo_agent}. % We also provide an anonymous downloadable source code in the supplementary materials.

For datasets used in the experiments, we use open source datasets, and describe them in Section \ref{sec:exp_setup} and Appendix \ref{sec:app_exp_setting}.

\section*{Acknowledgments}
This work was supported in part by the National Science Foundation (NSF) under grants CPS-2313109 and ITE-2326898, and by the Office of Naval Research (ONR) under grants N00014-22-1-2305 and N00014-23-1-2532.

% Use unnumbered third level headings for the acknowledgments. All acknowledgments, including those to funding agencies, go at the end of the paper.

% \newpage
\bibliography{ref}

\begin{thebibliography}{83}
\providecommand{\natexlab}[1]{#1}
\providecommand{\url}[1]{\texttt{#1}}
\expandafter\ifx\csname urlstyle\endcsname\relax
  \providecommand{\doi}[1]{doi: #1}\else
  \providecommand{\doi}{doi: \begingroup \urlstyle{rm}\Url}\fi

\bibitem[Agarwal et~al.(2021{\natexlab{a}})Agarwal, Kakade, Lee, and Mahajan]{agarwal2021theory}
Alekh Agarwal, Sham~M. Kakade, Jason~D. Lee, and Gaurav Mahajan.
\newblock On the theory of policy gradient methods: Optimality, approximation, and distribution shift.
\newblock \emph{Journal of Machine Learning Research}, 22\penalty0 (1):\penalty0 4431--4506, 2021{\natexlab{a}}.

\bibitem[Agarwal et~al.(2021{\natexlab{b}})Agarwal, Ganguly, and Aggarwal]{agarwal2021communication}
Mridul Agarwal, Bhargav Ganguly, and Vaneet Aggarwal.
\newblock Communication efficient parallel reinforcement learning.
\newblock In \emph{Uncertainty in Artificial Intelligence (UAI)}, volume 161 of \emph{Proceedings of Machine Learning Research}, pp.\  247--256. PMLR, 27--30 Jul 2021{\natexlab{b}}.

\bibitem[Al-Abbasi et~al.(2019)Al-Abbasi, Ghosh, and Aggarwal]{al2019deeppool}
Abubakr~O. Al-Abbasi, Arnob Ghosh, and Vaneet Aggarwal.
\newblock {DeepPool}: Distributed model-free algorithm for ride-sharing using deep reinforcement learning.
\newblock \emph{IEEE Transactions on Intelligent Transportation Systems}, 20\penalty0 (12):\penalty0 4714--4727, 2019.

\bibitem[Badita et~al.(2021)Badita, Parag, and Aggarwal]{badita2021single}
Ajay Badita, Parimal Parag, and Vaneet Aggarwal.
\newblock Single-forking of coded subtasks for straggler mitigation.
\newblock \emph{IEEE/ACM Transactions on Networking}, 29\penalty0 (6):\penalty0 2413--2424, 2021.

\bibitem[Bai et~al.(2024)Bai, Mondal, and Aggarwal]{bai2024regret}
Qinbo Bai, Washim~Uddin Mondal, and Vaneet Aggarwal.
\newblock Regret analysis of policy gradient algorithm for infinite horizon average reward {Markov} decision processes.
\newblock In \emph{Proceedings of the AAAI Conference on Artificial Intelligence}, volume~38, pp.\  10980--10988, 2024.

\bibitem[Brinton et~al.(2024)Brinton, Chiang, Kim, Love, Beesley, Repeta, Roese, Beming, Ekudden, Li, et~al.]{brinton2024key}
Christopher~G Brinton, Mung Chiang, Kwang~Taik Kim, David~J Love, Michael Beesley, Morris Repeta, John Roese, Per Beming, Erik Ekudden, Clara Li, et~al.
\newblock Key focus areas and enabling technologies for {6G}.
\newblock \emph{arXiv preprint arXiv:2412.07029}, 2024.

\bibitem[Chang et~al.(2024)Chang, Hosseinalipour, Chiang, and Brinton]{chang2024asyn}
Zhan-Lun Chang, Seyyedali Hosseinalipour, Mung Chiang, and Christopher~G. Brinton.
\newblock Asynchronous multi-model dynamic federated learning over wireless networks: Theory, modeling, and optimization.
\newblock \emph{IEEE Transactions on Cognitive Communications and Networking}, 10\penalty0 (5):\penalty0 1989--2004, 2024.

\bibitem[Chen et~al.(2023)Chen, Zhou, Chen, Pedramfar, Aggarwal, Lan, Zhu, Zhou, Gasser, Ruiz, et~al.]{chen2023two}
Chang-Lin Chen, Hanhan Zhou, Jiayu Chen, Mohammad Pedramfar, Vaneet Aggarwal, Tian Lan, Zheqing Zhu, Chi Zhou, Tim Gasser, Pol~Mauri Ruiz, et~al.
\newblock Two-tiered online optimization of region-wide datacenter resource allocation via deep reinforcement learning.
\newblock \emph{arXiv preprint arXiv:2306.17054}, 2023.

\bibitem[Chen et~al.(2024{\natexlab{a}})Chen, Xiao, and Xu]{Jianlong2024Resource}
Jianlong Chen, Jue Xiao, and Wei Xu.
\newblock A hybrid stacking method for short-term price forecasting in electricity trading market.
\newblock In \emph{2024 8th International Conference on Information Technology, Information Systems and Electrical Engineering (ICITISEE)}, pp.\  1--5, 2024{\natexlab{a}}.

\bibitem[Chen et~al.(2020)Chen, Jin, Sun, and Yin]{chen2020vafl}
Tianyi Chen, Xiao Jin, Yuejiao Sun, and Wotao Yin.
\newblock {VAFL}: A method of vertical asynchronous federated learning.
\newblock \emph{arXiv preprint arXiv:2007.06081}, 2020.

\bibitem[Chen et~al.(2021)Chen, Zhang, Giannakis, and Ba{\c{s}}ar]{chen2021communication}
Tianyi Chen, Kaiqing Zhang, Georgios~B Giannakis, and Tamer Ba{\c{s}}ar.
\newblock Communication-efficient policy gradient methods for distributed reinforcement learning.
\newblock \emph{IEEE Transactions on Control of Network Systems}, 9\penalty0 (2):\penalty0 917--929, 2021.

\bibitem[Chen et~al.(2024{\natexlab{b}})Chen, Yang, An, Holder, Paloutzian, Bali, and Du]{Yuning2024Cloud}
Yuning Chen, Kang Yang, Zhiyu An, Brady Holder, Luke Paloutzian, Khaled~M. Bali, and Wan Du.
\newblock {MARLP}: Time-series forecasting control for agricultural managed aquifer recharge.
\newblock In \emph{Proceedings of the 30th ACM SIGKDD Conference on Knowledge Discovery and Data Mining}, KDD '24, pp.\  4862–4872. Association for Computing Machinery, 2024{\natexlab{b}}.

\bibitem[Ching et~al.(2023)Ching, Chang, Kuo, and Wang]{Ching2023Edge}
Cheng-Wei Ching, Jia-Ming Chang, Jian-Jhih Kuo, and Chih-Yu Wang.
\newblock Dual-objective personalized federated service system with partially-labeled data over wireless networks.
\newblock \emph{IEEE Transactions on Services Computing}, 16\penalty0 (5):\penalty0 3265--3279, 2023.

\bibitem[Ching et~al.(2024)Ching, Chen, Kim, Ji, Wang, Da~Silva, and Hu]{Ching2024Edge}
Cheng-Wei Ching, Xin Chen, Taehwan Kim, Bo~Ji, Qingyang Wang, Dilma Da~Silva, and Liting Hu.
\newblock Totoro: A scalable federated learning engine for the edge.
\newblock In \emph{Proceedings of the Nineteenth European Conference on Computer Systems}, EuroSys '24, pp.\  182–199. Association for Computing Machinery, 2024.

\bibitem[Chu et~al.(2024{\natexlab{a}})Chu, Han, and Brinton]{chu2024only}
Yun-Wei Chu, Dong-Jun Han, and Christopher~G Brinton.
\newblock Only send what you need: Learning to communicate efficiently in federated multilingual machine translation.
\newblock In \emph{Companion Proceedings of the ACM on Web Conference (WWW)}, pp.\  1548--1557, 2024{\natexlab{a}}.

\bibitem[Chu et~al.(2024{\natexlab{b}})Chu, Han, Hosseinalipour, and Brinton]{chu2024rethinking}
Yun-Wei Chu, Dong-Jun Han, Seyyedali Hosseinalipour, and Christopher~G Brinton.
\newblock Rethinking the starting point: Enhancing performance and fairness of federated learning via collaborative pre-training.
\newblock \emph{arXiv preprint arXiv:2402.02225}, 2024{\natexlab{b}}.

\bibitem[Chu et~al.(2025)Chu, Han, Hosseinalipour, and Brinton]{chu2025unlocking}
Yun-Wei Chu, Dong-Jun Han, Seyyedali Hosseinalipour, and Christopher~G Brinton.
\newblock Unlocking the potential of model calibration in federated learning.
\newblock In \emph{The Thirteenth International Conference on Learning Representations (ICLR)}, 2025.

\bibitem[Ding et~al.(2020)Ding, Zhang, Basar, and Jovanovic]{NEURIPS2020_5f7695de}
Dongsheng Ding, Kaiqing Zhang, Tamer Basar, and Mihailo Jovanovic.
\newblock Natural policy gradient primal-dual method for constrained {Markov} decision processes.
\newblock In \emph{Advances in Neural Information Processing Systems (NeurIPS)}, volume~33, pp.\  8378--8390. Curran Associates, Inc., 2020.

\bibitem[Ding et~al.(2022)Ding, Zhang, and Lavaei]{ding2022}
Yuhao Ding, Junzi Zhang, and Javad Lavaei.
\newblock On the global optimum convergence of momentum-based policy gradient.
\newblock In \emph{Proceedings of The 25th International Conference on Artificial Intelligence and Statistics (AISTATS)}, volume 151 of \emph{Proceedings of Machine Learning Research}, pp.\  1910--1934. PMLR, 28--30 Mar 2022.

\bibitem[Dulac-Arnold et~al.(2021)Dulac-Arnold, Levine, Mankowitz, Li, Paduraru, Gowal, and Hester]{dulac2021challenges}
Gabriel Dulac-Arnold, Nir Levine, Daniel~J Mankowitz, Jerry Li, Cosmin Paduraru, Sven Gowal, and Todd Hester.
\newblock Challenges of real-world reinforcement learning: Definitions, benchmarks and analysis.
\newblock \emph{Machine Learning}, 110\penalty0 (9):\penalty0 2419--2468, 2021.

\bibitem[Dun et~al.(2023)Dun, Hipolito, Jermaine, Dimitriadis, and Kyrillidis]{pmlr-v206-dun23a}
Chen Dun, Mirian Hipolito, Chris Jermaine, Dimitrios Dimitriadis, and Anastasios Kyrillidis.
\newblock Efficient and light-weight federated learning via asynchronous distributed dropout.
\newblock In \emph{Proceedings of The 26th International Conference on Artificial Intelligence and Statistics (AISTATS)}, volume 206 of \emph{Proceedings of Machine Learning Research}, pp.\  6630--6660, Palau de Congressos, Valencia, Spain, 25--27 Apr 2023. PMLR.

\bibitem[Elgabli et~al.(2020)Elgabli, Park, Bedi, Issaid, Bennis, and Aggarwal]{elgabli2020q}
Anis Elgabli, Jihong Park, Amrit~Singh Bedi, Chaouki~Ben Issaid, Mehdi Bennis, and Vaneet Aggarwal.
\newblock {Q-GADMM}: Quantized group {ADMM} for communication efficient decentralized machine learning.
\newblock \emph{IEEE Transactions on Communications}, 69\penalty0 (1):\penalty0 164--181, 2020.

\bibitem[Elgabli et~al.(2024)Elgabli, Elbamby, Perfecto, Krouka, Bennis, and Aggarwal]{elgabli2024cross}
Anis Elgabli, Mohammed~S Elbamby, Cristina Perfecto, Mounssif Krouka, Mehdi Bennis, and Vaneet Aggarwal.
\newblock Cross layer optimization and distributed reinforcement learning for wireless 360° video streaming.
\newblock In \emph{Proceedings of IEEE International Conference on Mobile Ad-Hoc and Smart Systems (MASS)}, 2024.

\bibitem[Fatkhullin et~al.(2023)Fatkhullin, Barakat, Kireeva, and He]{fatkhullin2023stochastic}
Ilyas Fatkhullin, Anas Barakat, Anastasia Kireeva, and Niao He.
\newblock Stochastic policy gradient methods: Improved sample complexity for {Fisher}-non-degenerate policies.
\newblock In \emph{International Conference on Machine Learning (ICML)}, volume 202 of \emph{Proceedings of Machine Learning Research}, pp.\  9827--9869, Honolulu, HI, USA, 23--29 Jul 2023. PMLR.

\bibitem[Fu et~al.(2021)Fu, Yang, and Wang]{fu2021singletimescale}
Zuyue Fu, Zhuoran Yang, and Zhaoran Wang.
\newblock Single-timescale actor-critic provably finds globally optimal policy.
\newblock In \emph{International Conference on Learning Representations (ICLR)}, 2021.

\bibitem[Ganesh et~al.(2024{\natexlab{a}})Ganesh, Chen, Thoppe, and Aggarwal]{ganesh2024global}
Swetha Ganesh, Jiayu Chen, Gugan Thoppe, and Vaneet Aggarwal.
\newblock Global convergence guarantees for federated policy gradient methods with adversaries.
\newblock \emph{Transactions on Machine Learning Research}, 2024{\natexlab{a}}.
\newblock ISSN 2835-8856.

\bibitem[Ganesh et~al.(2024{\natexlab{b}})Ganesh, Mondal, and Aggarwal]{ganesh2024order}
Swetha Ganesh, Washim~Uddin Mondal, and Vaneet Aggarwal.
\newblock Order-optimal global convergence for average reward reinforcement learning via actor-critic approach.
\newblock \emph{arXiv preprint arXiv:2407.18878}, 2024{\natexlab{b}}.

\bibitem[Ganesh et~al.(2025)Ganesh, Mondal, and Aggarwal]{ganesh2025aistats}
Swetha Ganesh, Washim~Uddin Mondal, and Vaneet Aggarwal.
\newblock Order-optimal regret with novel policy gradient approaches in infinite horizon average reward mdps.
\newblock In \emph{International Conference on Artificial Intelligence and Statistics (AISTATS)}. PMLR, 2025.

\bibitem[Gaur et~al.(2024)Gaur, Bedi, Wang, and Aggarwal]{Gaur2024icml}
Mudit Gaur, Amrit~S. Bedi, Di~Wang, and Vaneet Aggarwal.
\newblock Closing the gap: Achieving global convergence (last iterate) of actor-critic under {Markovian} sampling with neural network parametrization.
\newblock In \emph{International Conference on Machine Learning (ICML)}, 2024.

\bibitem[Geng et~al.(2023)Geng, Bai, Liu, Lan, Aggarwal, Yang, and Xu]{geng2023reinforcement}
Nan Geng, Qinbo Bai, Chenyi Liu, Tian Lan, Vaneet Aggarwal, Yuan Yang, and Mingwei Xu.
\newblock A reinforcement learning framework for vehicular network routing under peak and average constraints.
\newblock \emph{IEEE Transactions on Vehicular Technology}, 72\penalty0 (5):\penalty0 6753--6764, 2023.

\bibitem[Gonzalez et~al.(2023)Gonzalez, Balakuntala, Agarwal, Low, Knoth, Kirkpatrick, McKee, Hager, Aggarwal, Xue, et~al.]{gonzalez2023asap}
Glebys Gonzalez, Mythra Balakuntala, Mridul Agarwal, Tomas Low, Bruce Knoth, Andrew~W Kirkpatrick, Jessica McKee, Gregory Hager, Vaneet Aggarwal, Yexiang Xue, et~al.
\newblock {ASAP}: A semi-autonomous precise system for telesurgery during communication delays.
\newblock \emph{IEEE Transactions on Medical Robotics and Bionics}, 5\penalty0 (1):\penalty0 66--78, 2023.

\bibitem[Huang et~al.(2020)Huang, Gao, Pei, and Huang]{huang2020momentum}
Feihu Huang, Shangqian Gao, Jian Pei, and Heng Huang.
\newblock Momentum-based policy gradient methods.
\newblock In \emph{International conference on machine learning (ICML)}, volume 119 of \emph{Proceedings of Machine Learning Research}, pp.\  4422--4433. PMLR, 13--18 Jul 2020.

\bibitem[Jin et~al.(2022)Jin, Peng, Yang, Wang, and Zhang]{jin2022federated}
Hao Jin, Yang Peng, Wenhao Yang, Shusen Wang, and Zhihua Zhang.
\newblock Federated reinforcement learning with environment heterogeneity.
\newblock In \emph{International Conference on Artificial Intelligence and Statistics (AISTATS)}, volume 151 of \emph{Proceedings of Machine Learning Research}, pp.\  18--37. PMLR, 28--30 Mar 2022.

\bibitem[Kairouz et~al.(2021)Kairouz, McMahan, Avent, Bellet, Bennis, Bhagoji, Bonawitz, Charles, Cormode, Cummings, et~al.]{MAL-083}
Peter Kairouz, H.~Brendan McMahan, Brendan Avent, Aur{\'e}lien Bellet, Mehdi Bennis, Arjun~Nitin Bhagoji, Kallista Bonawitz, Zachary Charles, Graham Cormode, Rachel Cummings, et~al.
\newblock Advances and open problems in federated learning.
\newblock \emph{Foundations and trends{\textregistered} in machine learning}, 14\penalty0 (1--2):\penalty0 1--210, 2021.

\bibitem[Khodadadian et~al.(2022)Khodadadian, Sharma, Joshi, and Maguluri]{khodadadian2022federated}
Sajad Khodadadian, Pranay Sharma, Gauri Joshi, and Siva~Theja Maguluri.
\newblock Federated reinforcement learning: Linear speedup under {Markovian} sampling.
\newblock In \emph{International Conference on Machine Learning (ICML)}, volume 162, pp.\  10997--11057, Baltimore, MD, USA, 2022. PMLR.

\bibitem[Kiran et~al.(2022)Kiran, Sobh, Talpaert, Mannion, Sallab, Yogamani, and Pérez]{Kiran2022}
B.~Ravi Kiran, Ibrahim Sobh, Victor Talpaert, Patrick Mannion, Ahmad A.~Al Sallab, Senthil Yogamani, and Patrick Pérez.
\newblock Deep reinforcement learning for autonomous driving: A survey.
\newblock \emph{IEEE Transactions on Intelligent Transportation Systems}, 23\penalty0 (6):\penalty0 4909--4926, 2022.

\bibitem[Koloskova et~al.(2022)Koloskova, Stich, and Jaggi]{NEURIPS2022_6db3ea52}
Anastasiia Koloskova, Sebastian~U. Stich, and Martin Jaggi.
\newblock Sharper convergence guarantees for asynchronous {SGD} for distributed and federated learning.
\newblock In \emph{Advances in Neural Information Processing Systems (NeurIPS)}, volume~35, pp.\  17202--17215, New Orleans, LA, USA, 28 Nov -- 9 Dec 2022. PMLR.

\bibitem[Ladosz et~al.(2022)Ladosz, Weng, Kim, and Oh]{ladosz2022exploration}
Pawel Ladosz, Lilian Weng, Minwoo Kim, and Hyondong Oh.
\newblock Exploration in deep reinforcement learning: A survey.
\newblock \emph{Information Fusion}, 85:\penalty0 1--22, 2022.

\bibitem[Lan et~al.(2023{\natexlab{a}})Lan, Liu, Zhang, and Wang]{lan2023fl}
Guangchen Lan, Xiao-Yang Liu, Yijing Zhang, and Xiaodong Wang.
\newblock Communication-efficient federated learning for resource-constrained edge devices.
\newblock \emph{IEEE Transactions on Machine Learning in Communications and Networking}, 1:\penalty0 210--224, 2023{\natexlab{a}}.

\bibitem[Lan et~al.(2023{\natexlab{b}})Lan, Wang, Anderson, Brinton, and Aggarwal]{lan2023}
Guangchen Lan, Han Wang, James Anderson, Christopher Brinton, and Vaneet Aggarwal.
\newblock Improved communication efficiency in federated natural policy gradient via {ADMM}-based gradient updates.
\newblock In \emph{Thirty-seventh Conference on Neural Information Processing Systems (NeurIPS)}, volume~36, pp.\  59873--59885, New Orleans, LA, USA, 10--16 Dec 2023{\natexlab{b}}. Curran Associates, Inc.

\bibitem[Liang et~al.(2024)Liang, Gao, Ma, Zhan, Sun, and Gu]{liang2024contextual}
Yaxin Liang, Erdi Gao, Yuhan Ma, Qishi Zhan, Dan Sun, and Xingxin Gu.
\newblock Contextual analysis using deep learning for sensitive information detection.
\newblock In \emph{International Conference on Computers, Information Processing and Advanced Education (CIPAE)}, 2024.

\bibitem[Liu \& Zhu(2022)Liu and Zhu]{liu2024distributed}
Shicheng Liu and Minghui Zhu.
\newblock Distributed inverse constrained reinforcement learning for multi-agent systems.
\newblock In \emph{Advances in Neural Information Processing Systems (NeurIPS)}, volume~35, pp.\  33444--33456, 2022.

\bibitem[Liu \& Zhu(2023)Liu and Zhu]{liu2023learning}
Shicheng Liu and Minghui Zhu.
\newblock Learning multi-agent behaviors from distributed and streaming demonstrations.
\newblock In \emph{Advances in Neural Information Processing Systems (NeurIPS)}, 2023.

\bibitem[Liu et~al.(2021)Liu, Li, Yang, Zheng, Wang, Walid, Guo, and Jordan]{liu2021elegantrl}
Xiao-Yang Liu, Zechu Li, Zhuoran Yang, Jiahao Zheng, Zhaoran Wang, Anwar Walid, Jian Guo, and Michael~I Jordan.
\newblock {ElegantRL-Podracer}: Scalable and elastic library for cloud-native deep reinforcement learning.
\newblock In \emph{Advances in Neural Information Processing Systems Workshop on Deep Reinforcement Learning}, 2021.

\bibitem[Liu et~al.(2020)Liu, Zhang, Basar, and Yin]{liu2020improved}
Yanli Liu, Kaiqing Zhang, Tamer Basar, and Wotao Yin.
\newblock An improved analysis of (variance-reduced) policy gradient and natural policy gradient methods.
\newblock In \emph{Advances in Neural Information Processing Systems (NeurIPS)}, volume~33, pp.\  7624--7636. Curran Associates, Inc., 2020.

\bibitem[Long et~al.(2024)Long, Yi, Jiang, Liu, Huang, and Du]{long2024adaptive}
Shiqing Long, Didi Yi, Mohan Jiang, Minghao Liu, Guanming Huang, and Junliang Du.
\newblock Adaptive transaction sequence neural network for enhanced money laundering detection.
\newblock In \emph{International Conference on Electronics and Devices, Computational Science (ICEDCS)}, 2024.

\bibitem[McMahan et~al.(2017)McMahan, Moore, Ramage, Hampson, and y~Arcas]{mcmahan2017communication}
Brendan McMahan, Eider Moore, Daniel Ramage, Seth Hampson, and Blaise~Aguera y~Arcas.
\newblock Communication-efficient learning of deep networks from decentralized data.
\newblock In \emph{International Conference on Artificial Intelligence and Statistics (AISTATS)}, volume~54 of \emph{Proceedings of Machine Learning Research}, pp.\  1273--1282, Ft. Lauderdale, FL, USA, 20--22 Apr 2017. PMLR.

\bibitem[Mishchenko et~al.(2022)Mishchenko, Bach, Even, and Woodworth]{mishchenko2022asynchronous}
Konstantin Mishchenko, Francis Bach, Mathieu Even, and Blake~E Woodworth.
\newblock Asynchronous {SGD} beats minibatch {SGD} under arbitrary delays.
\newblock In \emph{Advances in Neural Information Processing Systems (NeurIPS)}, volume~35, pp.\  420--433, New Orleans, LA, USA, 28 Nov -- 9 Dec 2022. PMLR.

\bibitem[Mnih et~al.(2016)Mnih, Badia, Mirza, Graves, Lillicrap, Harley, Silver, and Kavukcuoglu]{pmlr-v48-mniha16}
Volodymyr Mnih, Adria~Puigdomenech Badia, Mehdi Mirza, Alex Graves, Timothy Lillicrap, Tim Harley, David Silver, and Koray Kavukcuoglu.
\newblock Asynchronous methods for deep reinforcement learning.
\newblock In \emph{Proceedings of The 33rd International Conference on Machine Learning (ICML)}, volume~48 of \emph{Proceedings of Machine Learning Research}, pp.\  1928--1937, New York, NY, USA, 20--22 Jun 2016. PMLR.

\bibitem[Mondal \& Aggarwal(2024)Mondal and Aggarwal]{mondal2024improved}
Washim~U Mondal and Vaneet Aggarwal.
\newblock Improved sample complexity analysis of natural policy gradient algorithm with general parameterization for infinite horizon discounted reward {Markov} decision processes.
\newblock In \emph{International Conference on Artificial Intelligence and Statistics (AISTATS)}, pp.\  3097--3105. PMLR, 2024.

\bibitem[Mothukuri et~al.(2021)Mothukuri, Parizi, Pouriyeh, Huang, Dehghantanha, and Srivastava]{mothukuri2021survey}
Viraaji Mothukuri, Reza~M. Parizi, Seyedamin Pouriyeh, Yan Huang, Ali Dehghantanha, and Gautam Srivastava.
\newblock A survey on security and privacy of federated learning.
\newblock \emph{Future Generation Computer Systems}, 115:\penalty0 619--640, 2021.

\bibitem[Niknam et~al.(2020)Niknam, Dhillon, and Reed]{niknam2020federated}
Solmaz Niknam, Harpreet~S. Dhillon, and Jeffrey~H. Reed.
\newblock Federated learning for wireless communications: Motivation, opportunities, and challenges.
\newblock \emph{IEEE Communications Magazine}, 58\penalty0 (6):\penalty0 46--51, 2020.

\bibitem[OpenAI(2023)]{openai2023gpt4}
OpenAI.
\newblock {GPT}-4 technical report.
\newblock \emph{arXiv preprint arXiv:2303.08774}, 2023.

\bibitem[Ouyang et~al.(2022)Ouyang, Wu, Jiang, Almeida, Wainwright, Mishkin, Zhang, Agarwal, Slama, Gray, Schulman, Hilton, Kelton, Miller, Simens, Askell, Welinder, Christiano, Leike, and Lowe]{ouyang2022training}
Long Ouyang, Jeffrey Wu, Xu~Jiang, Diogo Almeida, Carroll Wainwright, Pamela Mishkin, Chong Zhang, Sandhini Agarwal, Katarina Slama, Alex Gray, John Schulman, Jacob Hilton, Fraser Kelton, Luke Miller, Maddie Simens, Amanda Askell, Peter Welinder, Paul Christiano, Jan Leike, and Ryan Lowe.
\newblock Training language models to follow instructions with human feedback.
\newblock In \emph{Advances in Neural Information Processing Systems (NeurIPS)}, volume~35, pp.\  27730--27744, New Orleans, LA, USA, 28 Nov -- 9 Dec 2022. Curran Associates, Inc.

\bibitem[Papini et~al.(2018)Papini, Binaghi, Canonaco, Pirotta, and Restelli]{pmlr-v80-papini18a}
Matteo Papini, Damiano Binaghi, Giuseppe Canonaco, Matteo Pirotta, and Marcello Restelli.
\newblock Stochastic variance-reduced policy gradient.
\newblock In \emph{Proceedings of the 35th International Conference on Machine Learning (ICML)}, volume~80 of \emph{Proceedings of Machine Learning Research}, pp.\  4026--4035, Stockholm, Sweden, 10--15 Jul 2018. PMLR.

\bibitem[Paszke et~al.(2019)Paszke, Gross, Massa, Lerer, Bradbury, Chanan, Killeen, Lin, Gimelshein, Antiga, et~al.]{paszke2019pytorch}
Adam Paszke, Sam Gross, Francisco Massa, Adam Lerer, James Bradbury, Gregory Chanan, Trevor Killeen, Zeming Lin, Natalia Gimelshein, Luca Antiga, et~al.
\newblock {PyTorch}: An imperative style, high-performance deep learning library.
\newblock In \emph{Advances in Neural Information Processing Systems (NeurIPS)}, volume~32, Vancouver, Canada, 8 -- 14 Dec 2019. Curran Associates, Inc.

\bibitem[Predd et~al.(2006)Predd, Kulkarni, and Poor]{Predd2006}
Joel~B. Predd, Sanjeev~B. Kulkarni, and H.~Vincent Poor.
\newblock Distributed learning in wireless sensor networks.
\newblock \emph{IEEE Signal Processing Magazine}, 23\penalty0 (4):\penalty0 56--69, 2006.

\bibitem[Provost \& Fawcett(2013)Provost and Fawcett]{provost2013data}
Foster Provost and Tom Fawcett.
\newblock Data science and its relationship to big data and data-driven decision making.
\newblock \emph{Big Data}, 1\penalty0 (1):\penalty0 51--59, 2013.

\bibitem[Raffin et~al.(2021)Raffin, Hill, Gleave, Kanervisto, Ernestus, and Dormann]{stable-baselines3}
Antonin Raffin, Ashley Hill, Adam Gleave, Anssi Kanervisto, Maximilian Ernestus, and Noah Dormann.
\newblock Stable-baselines3: Reliable reinforcement learning implementations.
\newblock \emph{Journal of Machine Learning Research}, 22\penalty0 (268):\penalty0 1--8, 2021.

\bibitem[Salgia \& Chi(2024)Salgia and Chi]{salgia2024sample}
Sudeep Salgia and Yuejie Chi.
\newblock The sample-communication complexity trade-off in federated {Q}-learning.
\newblock \emph{arXiv preprint arXiv:2408.16981}, 2024.

\bibitem[Schulman et~al.(2018)Schulman, Moritz, Levine, Jordan, and Abbeel]{schulman2018highdimensional}
John Schulman, Philipp Moritz, Sergey Levine, Michael Jordan, and Pieter Abbeel.
\newblock High-dimensional continuous control using generalized advantage estimation.
\newblock \emph{arXiv preprint arXiv:1506.02438}, 2018.

\bibitem[Shen et~al.(2023)Shen, Zhang, Hong, and Chen]{shen2023towards}
Han Shen, Kaiqing Zhang, Mingyi Hong, and Tianyi Chen.
\newblock Towards understanding asynchronous advantage actor-critic: Convergence and linear speedup.
\newblock \emph{IEEE Transactions on Signal Processing}, 71:\penalty0 2579--2594, 2023.

\bibitem[Team \& DeepMind(2024)Team and DeepMind]{Gemma2024}
Gemma Team and Google DeepMind.
\newblock {Gemma}: Open models based on {Gemini} research and technology.
\newblock \emph{arXiv preprint arXiv:2403.08295}, 2024.

\bibitem[Todorov et~al.(2012)Todorov, Erez, and Tassa]{todorov2012mujoco}
Emanuel Todorov, Tom Erez, and Yuval Tassa.
\newblock {MuJoCo}: A physics engine for model-based control.
\newblock In \emph{IEEE/RSJ International Conference on Intelligent Robots and Systems (IROS)}, pp.\  5026--5033, Vilamoura-Algarve, Portugal, 2012.

\bibitem[Wang et~al.(2024{\natexlab{a}})Wang, He, Zhang, Miao, and Anderson]{wang2024momentum}
Han Wang, Sihong He, Zhili Zhang, Fei Miao, and James Anderson.
\newblock Momentum for the win: Collaborative federated reinforcement learning across heterogeneous environments.
\newblock In \emph{International Conference on Machine Learning (ICML)}, 2024{\natexlab{a}}.

\bibitem[Wang et~al.(2024{\natexlab{b}})Wang, Mitra, Hassani, Pappas, and Anderson]{wang2023federated}
Han Wang, Aritra Mitra, Hamed Hassani, George~J Pappas, and James Anderson.
\newblock Federated {TD} learning with linear function approximation under environmental heterogeneity.
\newblock \emph{Transactions on Machine Learning Research}, 2024{\natexlab{b}}.
\newblock ISSN 2835-8856.

\bibitem[Wang et~al.(2020)Wang, Cai, Yang, and Wang]{Wang2020Neural}
Lingxiao Wang, Qi~Cai, Zhuoran Yang, and Zhaoran Wang.
\newblock Neural policy gradient methods: Global optimality and rates of convergence.
\newblock In \emph{International Conference on Learning Representations (ICLR)}, 2020.

\bibitem[Williams(1992)]{williams1992simple}
Ronald~J. Williams.
\newblock Simple statistical gradient-following algorithms for connectionist reinforcement learning.
\newblock \emph{Machine learning}, 8:\penalty0 229--256, 1992.

\bibitem[Woo et~al.(2023)Woo, Joshi, and Chi]{woo2023blessing}
Jiin Woo, Gauri Joshi, and Yuejie Chi.
\newblock The blessing of heterogeneity in federated {Q}-learning: Linear speedup and beyond.
\newblock In \emph{International Conference on Machine Learning (ICML)}, pp.\  37157--37216. PMLR, 2023.

\bibitem[Xie et~al.(2020)Xie, Koyejo, and Gupta]{xie2019asynchronous}
Cong Xie, Sanmi Koyejo, and Indranil Gupta.
\newblock Asynchronous federated optimization.
\newblock In \emph{12th Annual Workshop on Optimization for Machine Learning (OPT)}, 2020.

\bibitem[Xie \& Song(2023)Xie and Song]{fedkl2023}
Zhijie Xie and Shenghui Song.
\newblock {FedKL}: Tackling data heterogeneity in federated reinforcement learning by penalizing {KL} divergence.
\newblock \emph{IEEE Journal on Selected Areas in Communications}, 41\penalty0 (4):\penalty0 1227--1242, 2023.

\bibitem[Xiong et~al.(2024)Xiong, Wang, Jiang, and Li]{xiong2024personalized}
Guojun Xiong, Shufan Wang, Daniel Jiang, and Jian Li.
\newblock Personalized federated reinforcement learning with shared representations.
\newblock In \emph{Deployable RL: From Research to Practice @ Reinforcement Learning Conference}, 2024.

\bibitem[Xu et~al.(2020)Xu, Gao, and Gu]{xu2020Sample}
Pan Xu, Felicia Gao, and Quanquan Gu.
\newblock Sample efficient policy gradient methods with recursive variance reduction.
\newblock In \emph{International Conference on Learning Representations (ICLR)}, 2020.

\bibitem[Xu et~al.(2024)Xu, Chen, Zhang, Lin, Hu, Ma, Lu, Du, Mao, Zhai, et~al.]{xu2024cloudeval}
Yifei Xu, Yuning Chen, Xumiao Zhang, Xianshang Lin, Pan Hu, Yunfei Ma, Songwu Lu, Wan Du, Zhuoqing Mao, Ennan Zhai, et~al.
\newblock {CloudEval-YAML}: A practical benchmark for cloud configuration generation.
\newblock \emph{Proceedings of Machine Learning and Systems}, 6:\penalty0 173--195, 2024.

\bibitem[Yang et~al.(2024)Yang, Cen, Wei, Chen, and Chi]{yang2024federatednaturalpolicygradient}
Tong Yang, Shicong Cen, Yuting Wei, Yuxin Chen, and Yuejie Chi.
\newblock Federated natural policy gradient and actor critic methods for multi-task reinforcement learning.
\newblock In \emph{The Thirty-eighth Annual Conference on Neural Information Processing Systems (NeurIPS)}, 2024.

\bibitem[Yuan et~al.(2024)Yuan, Han, Wang, Upadhyay, and Brinton]{Liangqi2024Multimodal}
Liangqi Yuan, Dong-Jun Han, Su~Wang, Devesh Upadhyay, and Christopher~G Brinton.
\newblock Communication-efficient multimodal federated learning: Joint modality and client selection.
\newblock \emph{arXiv preprint arXiv:2401.16685}, 2024.

\bibitem[Yuan et~al.(2022)Yuan, Gower, and Lazaric]{pmlr-v151-yuan22a}
Rui Yuan, Robert~M. Gower, and Alessandro Lazaric.
\newblock A general sample complexity analysis of vanilla policy gradient.
\newblock In \emph{Proceedings of The 25th International Conference on Artificial Intelligence and Statistics (AISTATS)}, volume 151 of \emph{Proceedings of Machine Learning Research}, pp.\  3332--3380. PMLR, 28--30 Mar 2022.

\bibitem[Zhang et~al.(2024)Zhang, Wang, Mitra, and Anderson]{zhang2024finitetime}
Chenyu Zhang, Han Wang, Aritra Mitra, and James Anderson.
\newblock Finite-time analysis of on-policy heterogeneous federated reinforcement learning.
\newblock In \emph{The Twelfth International Conference on Learning Representations (ICLR)}, 2024.

\bibitem[Zhang* et~al.(2024)Zhang*, Lan*, Han, Yao, Pan, Zhang, Li, Chen, Dong, Brinton, and Luo]{zhang2024seppo}
Daoan Zhang*, Guangchen Lan*, Dong-Jun Han, Wenlin Yao, Xiaoman Pan, Hongming Zhang, Mingxiao Li, Pengcheng Chen, Yu~Dong, Christopher Brinton, and Jiebo Luo.
\newblock {SePPO}: Semi-policy preference optimization for diffusion alignment.
\newblock \emph{arXiv preprint arXiv:2410.05255}, 2024.

\bibitem[Zhang et~al.(2023)Zhang, Wang, Ma, Carver, Newman, Anwar, Rupprecht, Tarasov, Skourtis, Yan, and Cheng]{Zhang2023InfiniStore}
Jingyuan Zhang, Ao~Wang, Xiaolong Ma, Benjamin Carver, Nicholas~John Newman, Ali Anwar, Lukas Rupprecht, Vasily Tarasov, Dimitrios Skourtis, Feng Yan, and Yue Cheng.
\newblock Infinistore: Elastic serverless cloud storage.
\newblock \emph{Proc. VLDB Endow.}, 16\penalty0 (7):\penalty0 1629–1642, 2023.

\bibitem[Zhang et~al.(2020)Zhang, Koppel, Zhu, and Basar]{zhang2020global}
Kaiqing Zhang, Alec Koppel, Hao Zhu, and Tamer Basar.
\newblock Global convergence of policy gradient methods to (almost) locally optimal policies.
\newblock \emph{SIAM Journal on Control and Optimization}, 58\penalty0 (6):\penalty0 3586--3612, 2020.

\bibitem[Zheng et~al.(2024)Zheng, Zhang, and Xue]{zheng2024frl}
Zhong Zheng, Haochen Zhang, and Lingzhou Xue.
\newblock Federated {Q}-learning with reference-advantage decomposition: Almost optimal regret and logarithmic communication cost.
\newblock \emph{arXiv preprint arXiv:2405.18795}, 2024.

\bibitem[Zhu et~al.(2024)Zhu, Heath~Jr, and Mitra]{zhu2024towards}
Feng Zhu, Robert~W Heath~Jr, and Aritra Mitra.
\newblock Towards fast rates for federated and multi-task reinforcement learning.
\newblock \emph{arXiv preprint arXiv:2409.05291}, 2024.

\end{thebibliography}
\bibliographystyle{iclr2025_conference}

%% If your work has an appendix, this is the place to put it.
\newpage
\appendix
\onecolumn
% \section{Appendix}
\section{Supplementary Results}
\label{app:experiments}

In this section, we first compare the time complexity between the synchronous and the asynchronous settings. We then list the experimental settings in Appendix \ref{sec:app_exp_setting}. At last, we have supplementary experiments in Appendix \ref{app:supp_results}.

\subsection{Comparison to the Synchronous Setting}
\label{app:experiments_comp}

Let $T$ be the computation time during the entire training process to achieve a given number $K$ of cumulative communication rounds of all agents, where $K = \mathcal{O}({\epsilon}^{-2.5})$ for a global convergence according to Theorem \ref{theorem:afedpg_rate}.

In AFedPG, for agent $i$, the number of communication rounds is $\frac{T}{t_{i}}$. For all agents, the total number is $\sum_{i=1}^{N} \frac{T}{t_{i}}$. As $\sum_{i=1}^{N} \frac{T}{t_{i}} = K$, we have $T = \frac{K}{\sum_{i=1}^{N} \frac{1}{t_{i}}} = \mathcal{O}(\frac{1}{\sum_{i=1}^{N} \frac{1}{t_{i}}} {\epsilon}^{-2.5})$ as a harmonic average of all agents.

In FedPG, the number of global communication rounds on the server is $\frac{T}{t_{\max}}$. As $\frac{T}{t_{\max}} = \frac{K}{N}$, we have $T = \frac{K t_{\max}}{N} = \mathcal{O}(\frac{t_{\max}}{N} {\epsilon}^{-2.5}) \ge \mathcal{O}(\frac{1}{\sum_{i=1}^{N} \frac{1}{t_{i}}} {\epsilon}^{-2.5})$ as the harmonic mean is always smaller or equal to the maximum one.

The asynchronous FedPG achieves better time complexity than the synchronous approach regardless of the delay pattern. The advantage is significant when $t_{\max} \gg t_{min}$, which occurs in many practical settings with heterogeneous computation powers across different agents. We illustrate the advantage of AFedPG over synchronous FedPG in Figure \ref{fig:asyn_time}. As the server only operates one simple summation, without loss of generality, the time consumption at the server-side is negligible.

It is noticeable that we do not make any assumptions or requirements on $t_{\max}$ in the analysis of AFedPG. In the extreme case, the slowest agent does not communicate with the server, and thus, $t_{\max}$ is infinite. In this scenario, the time consumption of AFedPG does not hurt a lot, while the time consumption of FedPG becomes infinite.
\begin{figure}[ht]
\centering
\includegraphics[width=4in]{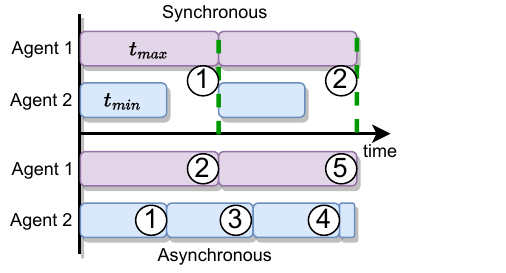}
\caption{Comparison of time consumptions between synchronous and asynchronous approaches. The circled numbers denote the indices of global steps.}
\label{fig:asyn_time}
\end{figure}

\subsection{Supplementary Experimental Settings}
\label{sec:app_exp_setting}

In Section \ref{sec:exp_setup}, we list the key experimental settings. We list the rest with details in this subsection. The environmental details (the four MuJoCo tasks) are described in Table \ref{table:mujoco}. The policies $\pi_{\theta}$ are parameterized by fully connected multi-layer perceptions (MLPs) with settings listed in Table \ref{table:hyperparameters}.

\begin{table}[t]
\centering
\renewcommand\arraystretch{1.1} 
\caption{Detailed descriptions of the four tasks in the MuJoCo environment.}
\label{table:mujoco}
\begin{tabular}{ccccc}
\toprule[1.1pt]
% \hline
\rowcolor{gray!10} {}& {}& {Action Space}& {State Space} \\
\rowcolor{gray!10} \multirow{-2}{*}{MuJoCo Tasks} & \multirow{-2}{*}{Agent Types} & {Dimension} & {Dimension} \\
\midrule[1.1pt]
 {} & {Three-link} & {} & {} \\
\multirow{-2}{*}{Swimmer-v4} & {swimming robot} & \multirow{-2}{*}{$2$} & \multirow{-2}{*}{$8$} \\
\hline
\rowcolor{gray!10} {} & {Two-dimensional} & {} & {} \\
\rowcolor{gray!10} \multirow{-2}{*}{Hopper-v4} & {one-legged robot} & \multirow{-2}{*}{$3$} & \multirow{-2}{*}{$11$} \\
\hline
{} & {Two-dimensional} & {} & {} \\
\multirow{-2}{*}{Walker2D-v4} & {bipedal robot} & \multirow{-2}{*}{$6$} & \multirow{-2}{*}{$17$} \\
\hline
\rowcolor{gray!10} {} & {Three-dimensional} & {} & {} \\
\rowcolor{gray!10} \multirow{-2}{*}{Humanoid-v4} & {bipedal robot} & \multirow{-2}{*}{$17$} & \multirow{-2}{*}{$376$}\\
\bottomrule[1.1pt]
\end{tabular}
\end{table}
% \begin{table*}[!htbp]
% \centering
% \caption{Description of the MuJoCo environments.}
% \label{table:mujoco}
% \begin{tabular}{ccccc}
% \toprule
% % \hline
% \rowcolor{gray!10} \multicolumn{1}{c}{Tasks}& {Agents}& {Action Space Dimension}& {State Space Dimension} \\
% \midrule
% \multicolumn{1}{c}{Swimmer-v4}& {Three-link swimming robot} & $2$ & $8$ \\
% \rowcolor{gray!10} \multicolumn{1}{c}{Hopper-v4} & {Two-dimensional one-legged robot} & $3$ & $11$ \\
% % \multicolumn{1}{c}{Walker2D-v4} & {Two-dimensional bipedal robot} & $6$ & $17$ \\
% \multicolumn{1}{c}{Humanoid-v4} & {Three-dimensional bipedal robot} & $17$ & $376$ \\
% \bottomrule
% \end{tabular}
% \end{table*}

\begin{table}[t]
\centering
\caption{Hyperparameters of AFedPG and the MLP policy parameterization settings.}
\label{table:hyperparameters}
\begin{tabular}{l|c|c|c|c}
\toprule[1.1pt]
\rowcolor{gray!10} {Hyperparameter} & \multicolumn{4}{c}{Setting} \\
\midrule[1.1pt]
{Task} & {Swimmer-v4} & {Hopper-v4} & {Walker2D-v4} & {Humanoid-v4} \\
\rowcolor{gray!10} MLP & $64\times 64$ & $256\times 256$ & $512\times 512$ & $512\times 512\times 512$ \\
 Activation function & ReLU & ReLU & ReLU & ReLU \\
\rowcolor{gray!10} Output function & Tanh & Tanh & Tanh & Tanh \\
 Learning rate ($\alpha$) & $1\times 10^{-3}$ & $1\times 10^{-3}$ & $1\times 10^{-3}$ & $1\times 10^{-3}$ \\
\rowcolor{gray!10} Discount ($\gamma$) & $0.99$ & $0.99$ & $0.99$ & $0.99$ \\
Timesteps ($T$) & $2048$ & $1024$ & $1024$ & $512$ \\
\rowcolor{gray!10} Iterations ($K$) & $1\times 10^{3}$ & $2\times 10^{3}$ & $5\times 10^{3}$  & $1.5\times 10^{4}$ \\
Learning rate ($\eta$) & $3\times 10^{-4}$ & $1\times 10^{-4}$ & $5\times 10^{-5}$  & $1\times 10^{-5}$ \\
\bottomrule[1.1pt]
\end{tabular}
\end{table}
% \begin{table*}[!htbp]
% \centering
% \caption{Hyperparameter and MLP settings.}
% \label{table:hyperparameters}
% \begin{tabular}{l|c|c|c|c}
% \toprule
% \rowcolor{gray!10} {Hyperparameter} & \multicolumn{4}{c}{Setting} \\
% \hline
% {Task} & {Swimmer-v4} & {Hopper-v4} & {Walker2D-v4} & {Humanoid-v4} \\
% \rowcolor{gray!10} MLP & $64\times 64$ & $256\times 256$ & $256\times 256\times 256$ & $512\times 512\times 512$ \\
%  Activation function & ReLU & ReLU & ReLU & ReLU \\
% \rowcolor{gray!10} Output function & Tanh & Tanh & Tanh & Tanh \\
%  Learning rate ($\alpha$) & $1\times 10^{-3}$ & $1\times 10^{-3}$ & $1\times 10^{-3}$ & $1\times 10^{-3}$ \\
% \rowcolor{gray!10} Discount ($\gamma$) & $0.99$ & $0.99$ & $0.99$ & $0.99$ \\
% Timesteps ($T$) & $2048$ & $1024$ & $512$ & $512$ \\
% \rowcolor{gray!10} Iterations ($K$) & $1\times 10^{3}$ & $2\times 10^{3}$ & $3\times 10^{3}$  & $1.5\times 10^{4}$ \\
% Learning rate ($\eta$) & $3\times 10^{-4}$ & $1\times 10^{-4}$ & $5\times 10^{-5}$ & $1\times 10^{-5}$ \\
% \bottomrule
% \end{tabular}
% \end{table*}

\subsection{Supplementary Experiments}
\label{app:supp_results}

In this subsection, we show three experimental results to study the effect of computation heterogeneity, communication overhead, and reward performances with long runs.

\textbf{Effect of computation heterogeneity.} 
We study the effect of the computation heterogeneity among federated agents. The heterogeneity of computing powers is measured by the ratio $\frac{t_{\max}}{t_{\min}}$, and the effect is measured by the speedup, which is the global time of FedPG divided by that of AFedPG. Without loss of generality, we test the performances with (1) One straggler, which is one agent has $t_{\min}$ time consumption, and all the other agents have $t_{\max}$ time consumption. This is the scenario with the largest speedup. (2) One leader, which is one agent has $t_{\max}$ and the others have $t_{\min}$. This is the scenario with the smallest speedup. 

In Figure \ref{fig:speedup}, the results show that the speedup increases as the heterogeneity ratio $\frac{t_{\max}}{t_{\min}}$ increases. With one leader, as more agents participate, the speedup approximately decreases to a quadratic function w.r.t. the time heterogeneity ratio. With one straggler, the more agents that participate in the training process, the higher speedup they achieve. The overall results indicate that AFedPG has the potential to scale up to a very large RL system, particularly in scenarios with extreme stragglers (The ratio is large: $t_{\max}\gg t_{\min}$).
\begin{figure}[ht]
\centering
    \subfigure[With one straggler]{
	\includegraphics[width=2.66in]{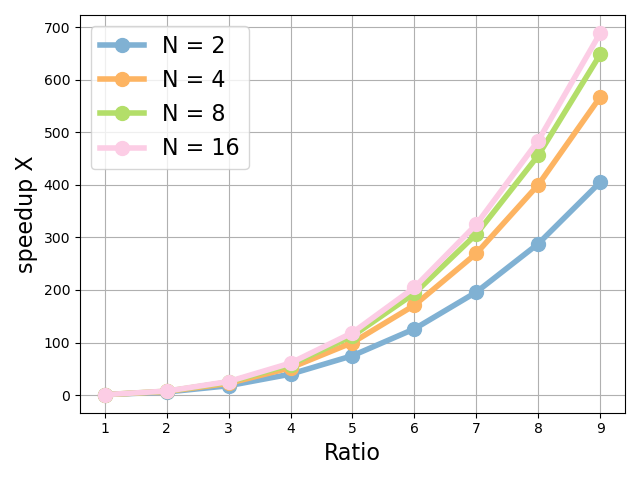}
	}
    \subfigure[With one leader]{
	\includegraphics[width=2.66in]{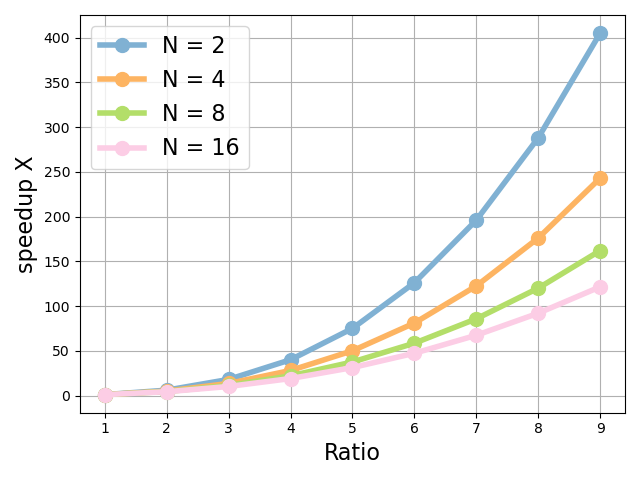}
	}
% \vspace{-4mm}
\caption{The time complexity speedup of AFedPG compared to FedPG. $N$ is the number of agents. The x-axis is the heterogeneity ratio of computing power measured by $\frac{t_{\max}}{t_{\min}}$. The y-axis is the global time of FedPG divided by that of AFedPG.}
\label{fig:speedup}
\end{figure}

\textbf{Communication overhead analysis.}
We compare the communication overhead of FedPG and AFedPG in Figure \ref{fig:commun_a}. For neural network parameters, we use the standard format float32. The communication overhead is measured by the number of transmitted bytes. The number of federated agents $N$ is set to $8$. The MuJoCo task is Swimmer-v4. Overall, the commutative communication overhead is similar. However, when we dive deep into the agent side and the server side separately in fine-grained time, AFedPG shows advantages on both sides.

On the agent side, Figure \ref{fig:commun_a}~(a) shows the cumulative communication bytes during the training process. The cumulative communication overhead is similar in FedPG and AFedPG. In AFedPG, the faster ones, \textit{e.g.}, Agent 8, communicate more, and the slower ones, \textit{e.g.}, Agent 1, communicate less, which is more reasonable than the equal allocation in the synchronous setting. The Agent 1, in fact, commutes with the server during the training process, but it is insignificant in the plot with a log scale.

The agents have heterogeneous resources, but equal allocation could bring too much burden for the slower ones. In AFedPG, it naturally shifts these burdens to the faster ones considering the local resources.

On the server side, we show the downlink communication overhead in a time window in Figure \ref{fig:commun_a}~(b). The time window is set as ``one global round in the synchronous setting''. At time step 5, it is higher because two agents communicate with the server in a short time period. The total amounts of AFedPG and FedPG are similar. In AFedPG, it is almost evenly distributed during the time span, while in FedPG, the server has a huge burden with a peak. This makes the server in FedPG require huge resources.

\begin{figure}[ht]
\centering
    \subfigure[On the agent side]{
	\includegraphics[width=2.66in]{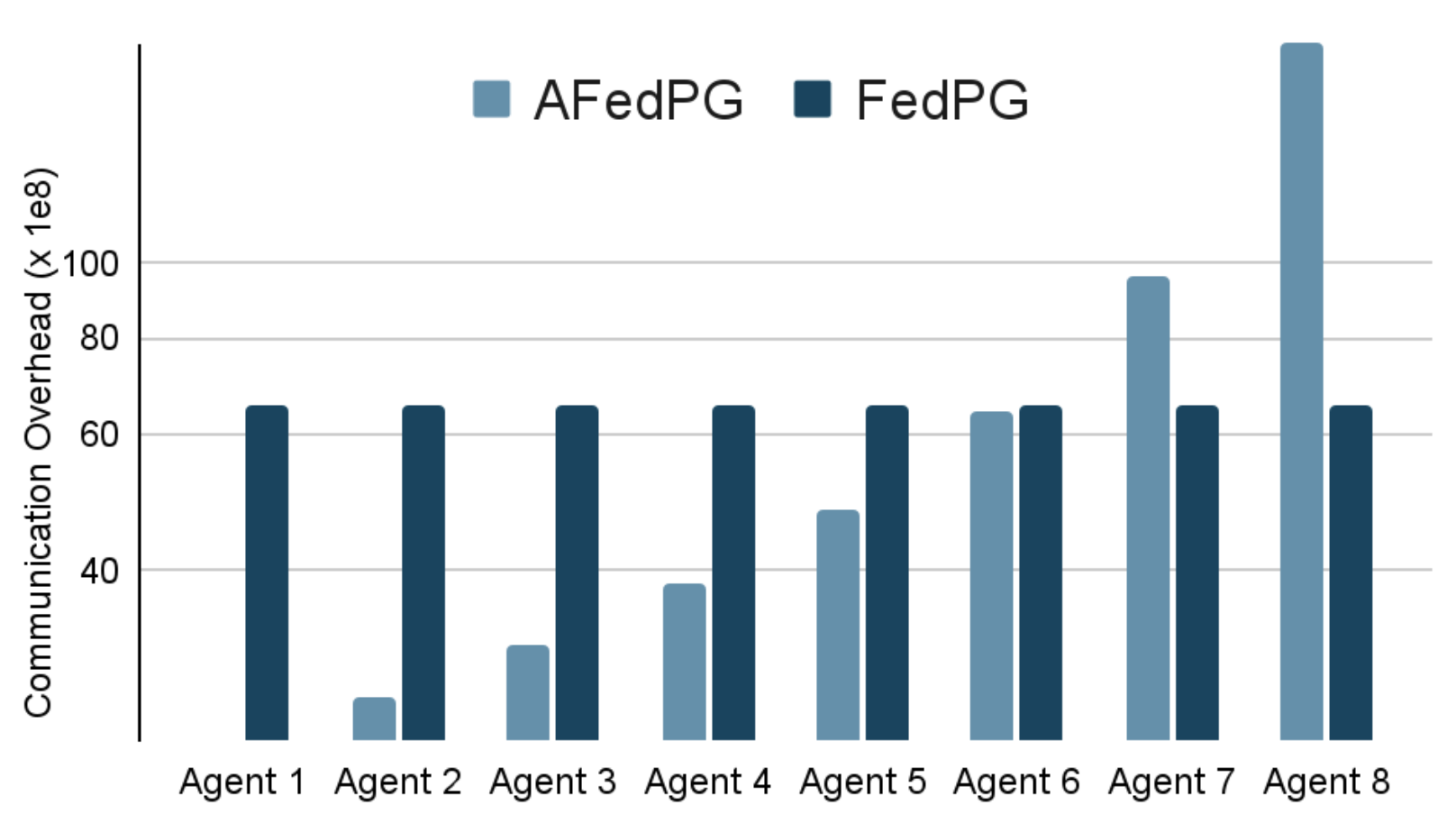}
	}
    \subfigure[Downlink on the server side]{
	\includegraphics[width=2.66in]{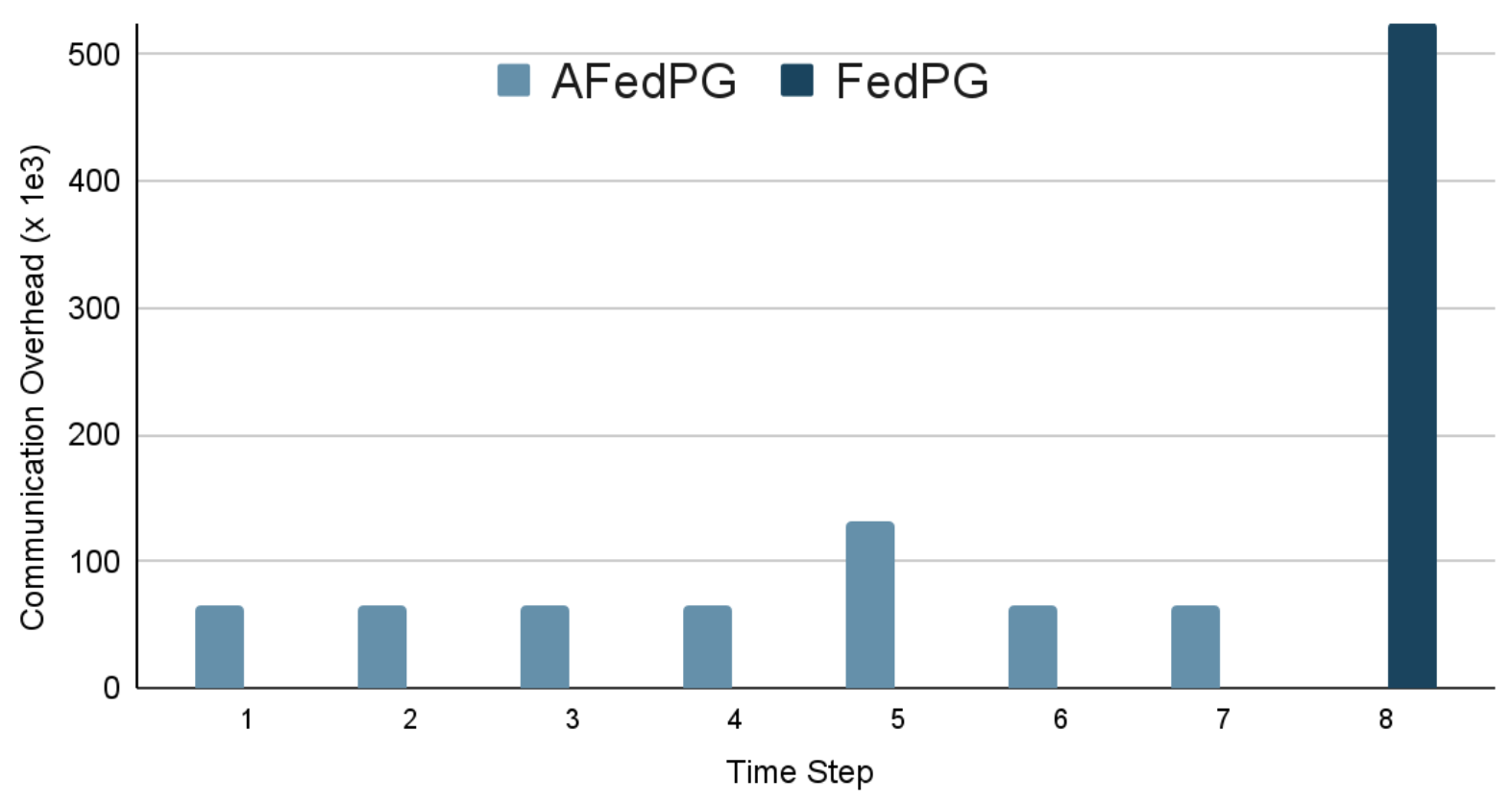}
	}
\caption{The communication overhead of AFedPG compared to FedPG. The number of agents is $N=8$. (a) The cumulative communication overhead on the agent side. (b) The downlink communication overhead in a time window on the server side.}
\label{fig:commun_a}
\end{figure}

\newpage
\textbf{Reward performances with long runs.}
To further enhance the results in Figure \ref{fig:fed_speedup}, we extend the running time with more samples. Compared to the results in Figure \ref{fig:fed_speedup}, we increase the number of samples by $5$ times in each Mujoco task in Figure \ref{fig:fed_speedup_long}. The solid lines are averaged results over $5$ runs with random seeds from $0$ to $4$. The shadowed areas are confidence intervals with $95\%$ confidence level. 

With enough samples, it basically achieves a similar reward performance for different numbers of agents $N$ in AFedPG. However, the more agents engage, the faster it achieves. Notably, the solid line is the average result with $5$ independent runs. With different numbers of agents, the shadowed area has a large overlap. The more overlaps they have, the more runs that have similar reward performances because of the inherent randomness (with different random seeds) in the deep reinforcement learning tasks.

For the Humanoid-v4 task, though in some runs (shadowed area), PG achieves the optimal reward performance, the average (solid line) performance of PG is relatively lower than the others in AFedPG. The reason is that the Humanoid-v4 task has the largest state and action space, which makes the hyperparameter tuning difficult, \textit{e.g.}, learning rates, and brings huge GPU hours. The hyperparameter setting of PG is suboptimal here, as it has no contribution to our main claim. Recall that we aim to use these experiments to verify the speedup effect in AFedPG in Table \ref{table:complexity}. The suboptimal hyperparameters of PG in the Humanoid-v4 task do not influence the conclusion: The more agents in AFedPG, the faster the optimal reward will be achieved.

\begin{figure*}[ht]
\centering
    \subfigure[Swimmer-v4]{
	\includegraphics[width=2.6in]{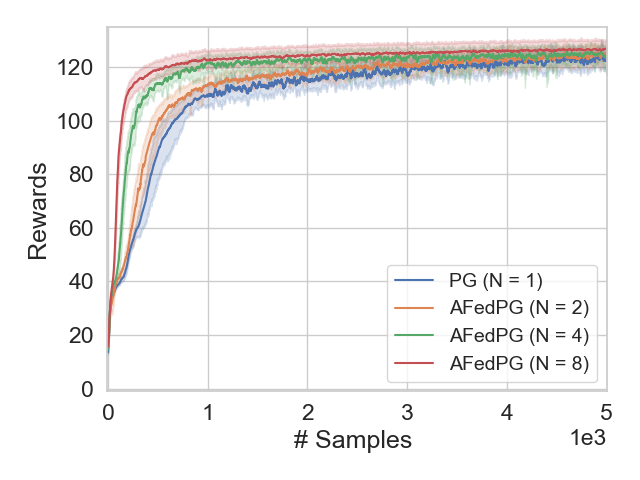}
	}
    \subfigure[Hopper-v4]{
	\includegraphics[width=2.6in]{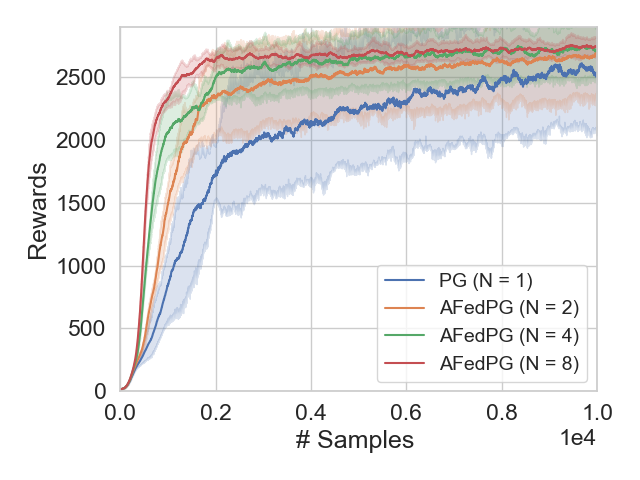}
	}	
    
    \subfigure[Walker2D-v4]{
	\includegraphics[width=2.6in]{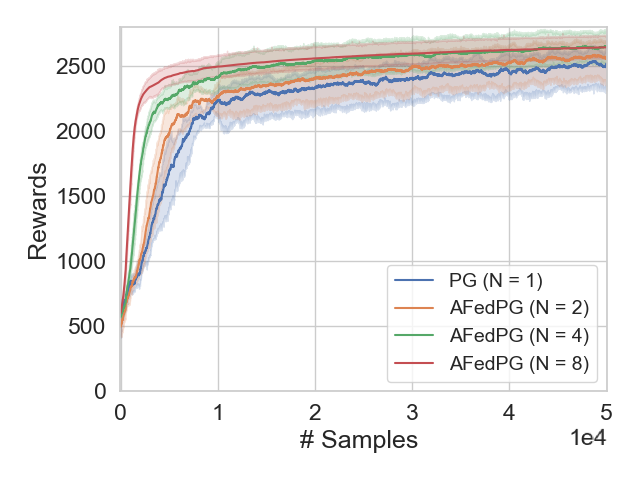}
	}
    \subfigure[Humanoid-v4]{
	\includegraphics[width=2.6in]{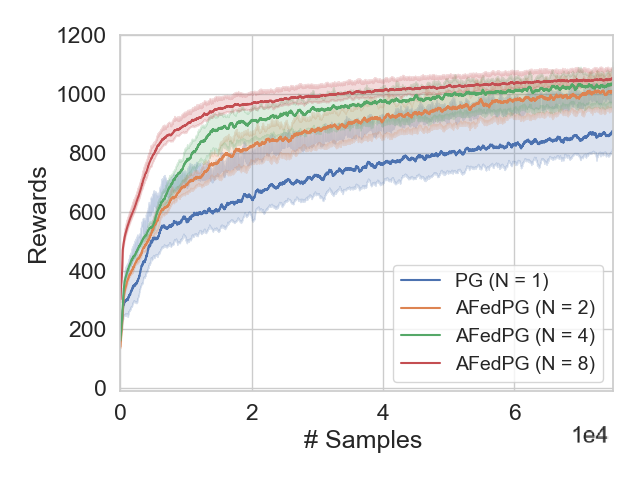}
	}
  % \vspace{-4mm}
 \caption{Reward performances of AFedPG ($N=2,4,8$) and PG ($N=1$) on various MuJoCo environments, where $N$ is the number of federated agents. The solid lines are averaged results over $5$ runs with random seeds from $0$ to $4$. The shadowed areas are confidence intervals with $95\%$ confidence level.}
\label{fig:fed_speedup_long}
\end{figure*}

%%%%%%%%%%%%%%%%%%%%%%%%%%%%%%%%%%%%%%%%%%%%%%%%%%%%%%%%%%%%%%%%%%%%%%%%%%%%%%%%%%%%%
\clearpage
\newpage
\section{Theoretical Proofs}
\label{app:proof}

In this section, we give the assumptions in Appendix \ref{app:assumptions}, the technical lemmas in Appendix \ref{app:Technical_Lemmas}, our key lemmas in Appendix \ref{app:Key_Lemmas}, the proof of Theorem \ref{theorem:afedpg_rate} (the global convergence) in Appendix \ref{app:proof_theorem_1}, and the proof of Theorem \ref{theorem:afedpg_rate_FOSP} (the FOSP convergence) in Appendix \ref{app:proof_theorem_2}.

\subsection{Assumptions}
\label{app:assumptions}

In order to derive the global convergence rates, we make the following standard assumptions \citep{agarwal2021theory, NEURIPS2020_5f7695de, liu2020improved, pmlr-v80-papini18a, xu2020Sample} on policy gradients, and rewards.

\begin{assumption}{\ }
\label{assum:policy}
\begin{enumerate}
\item The score function is bounded as $\left\|\nabla \log \pi_\theta(a \mid s)\right\| \leq M_{g}$, $\text{ for all }\theta \in\mathbb{R}^d$, $s\in\mathcal{S}$, and $a\in\mathcal{A}$.
\item The score function is $M_{h}$-Lipschitz continuous. In other words, for all $\theta_{i},~\theta_{j} \in\mathbb{R}^d,~s\in\mathcal{S},~\text{and}~ a\in\mathcal{A}$, we have
\begin{equation}
\begin{aligned}
\left\|\nabla \log \pi_{\theta_{i}}(a \mid s) - \nabla \log \pi_{\theta_{j}}(a \mid s)\right\| &\leq M_{h} \left\|\theta_{i} - \theta_{j}\right\|.
\end{aligned}
\end{equation}
\item The reward function is bounded as $r(s,a) \in [0, R], \text{ for all } s\in\mathcal{S}$, and $a\in\mathcal{A}$.
\end{enumerate}
\end{assumption}

These standard assumptions naturally state that the reward, the first-order, and the second-order derivatives of the score function are not infinite, and they hold with common practical parametrization methods, \textit{e.g.}, softmax policies. With function approximation $\pi_{\theta}$, the approximation error may not be $0$ in practice. 

Thus, we follow previous works \citet{liu2020improved, agarwal2021theory, ding2022, huang2020momentum} with an assumption on the expressivity of the policy parameterization class.

\begin{assumption}
\label{assum:func_approx}
(Function approximation)
$\exists~\epsilon_{{\rm bias}}\geq 0$ \textit{s.t.} for all $\theta \in\mathbb{R}^{d}$, the transfer error satisfies
\begin{equation}
\label{eq:approx_error}
\mathbb{E}[\big( A_{\pi_{\theta}}(s,a) - (1-\gamma)u^{\star}(\theta)^{\top} \nabla \log \pi_{\theta}(a \mid s) \big)^{2}] \leq \epsilon_{{\rm bias}},
\end{equation}
where $u^{\star}(\theta) \coloneqq F_{\rho}(\theta)^{\dagger} \nabla J(\theta)$, and $F_{\rho}(\theta)^{\dagger}$ is the Moore-Penrose pseudo-inverse of the Fisher matrix $F_{\rho}$.
\end{assumption}
It means that the parameterized policy $\pi_{\theta}$ makes the advantage function $A_{\pi_{\theta}}(s, a)$ approximated by the score function $\nabla \log \pi_{\theta}(a \mid s)$ as the features. This assumption is widely used with Fisher-non-degenerate parameterization. $\epsilon_{{\rm bias}}$ can be very small with rich neural network parameterization, and $0$ with a soft-max parameterization \citep{Wang2020Neural}.

To achieve the global convergence, we make a standard assumption on the Fisher information matrix \citep{liu2020improved, ding2022, lan2023}.

\begin{assumption}
\label{assum:semi_pos}
(Positive definite) For all $\theta \in\mathbb{R}^d$, there exists a constant $\mu_{F} > 0$ \textit{s.t.} the Fisher information matrix $F_{\rho}(\theta)$ induced by the policy $\pi_{\theta}$ and the initial state distribution $\rho$ satisfies
\begin{equation}
F_{\rho}(\theta) \succcurlyeq \mu_{F} \cdot I,
\end{equation}
where $I \in\mathbb{R}^{d \times d}$ is an identity matrix.
\end{assumption}

For any two symmetric matrices $A$ and $B$ with the same dimension, $A \succcurlyeq B$ denotes that the eigenvalues of $A - B$ are greater or equal to zero.

%%%%%%%%%%%%%%%%%%%%%%%%%%%%%%%%%%%%%%%%%%%%%%%%%%%%%
\subsection{Technical Lemmas}
\label{app:Technical_Lemmas}

\begin{lemma}
\label{lemma:triangle}
For arbitrary $n$ vectors $\{\mathbf{a}_{i} \in\mathbb{R}^{d}\}_{i=1}^{n}$, we have
\begin{equation}
\label{eq:triangle}
\|\sum_{i=1}^{n} \mathbf{a}_{i} \|^{2} ~\leq~ n\sum_{i=1}^{n}\| \mathbf{a}_{i} \|^{2}.
\end{equation}
\end{lemma}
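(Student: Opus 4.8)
The plan is to prove this by convexity of the squared Euclidean norm, which is the quickest route. Since the map $x \mapsto \|x\|^{2}$ is convex on $\mathbb{R}^{d}$, Jensen's inequality applied to the uniform average of the $\mathbf{a}_{i}$ gives $\left\| \frac{1}{n}\sum_{i=1}^{n}\mathbf{a}_{i} \right\|^{2} \leq \frac{1}{n}\sum_{i=1}^{n}\|\mathbf{a}_{i}\|^{2}$; multiplying both sides by $n^{2}$ yields exactly \eqref{eq:triangle}.

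If one prefers an elementary argument that avoids invoking Jensen, I would instead expand the inner product: $\left\| \sum_{i=1}^{n}\mathbf{a}_{i} \right\|^{2} = \sum_{i=1}^{n}\|\mathbf{a}_{i}\|^{2} + \sum_{i \neq j}\langle \mathbf{a}_{i}, \mathbf{a}_{j}\rangle$, then bound each of the $n(n-1)$ cross terms using $\langle \mathbf{a}_{i}, \mathbf{a}_{j}\rangle \leq \tfrac{1}{2}\left(\|\mathbf{a}_{i}\|^{2}+\|\mathbf{a}_{j}\|^{2}\right)$, which is just $0 \leq \tfrac{1}{2}\|\mathbf{a}_{i}-\mathbf{a}_{j}\|^{2}$ (equivalently Cauchy--Schwarz together with AM--GM). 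Summing these bounds over all ordered pairs $i \neq j$ contributes $(n-1)\sum_{i=1}^{n}\|\mathbf{a}_{i}\|^{2}$, and adding back the diagonal term $\sum_{i=1}^{n}\|\mathbf{a}_{i}\|^{2}$ gives the upper bound $n\sum_{i=1}^{n}\|\mathbf{a}_{i}\|^{2}$.

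There is no real obstacle here: the statement is a textbook inequality, tight precisely when all the $\mathbf{a}_{i}$ coincide (so no slack is being discarded), and it needs no assumption on the vectors beyond membership in $\mathbb{R}^{d}$. The only points worth a quick sanity check are the trivial case $n = 1$ (where both sides are equal) and the fact that the constant on the right is $n$ rather than $n-1$ — the diagonal contribution in the expansion argument is exactly what accounts for the extra $+1$.
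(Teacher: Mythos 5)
Your proposal is correct: both the Jensen/convexity route and the inner-product expansion with the bound $\langle \mathbf{a}_{i},\mathbf{a}_{j}\rangle \leq \tfrac{1}{2}(\|\mathbf{a}_{i}\|^{2}+\|\mathbf{a}_{j}\|^{2})$ yield \eqref{eq:triangle}, and your bookkeeping of the $n(n-1)$ cross terms plus the diagonal is accurate, as is the equality case. The paper states this as a standard technical lemma without giving any proof, so there is no authorial argument to compare against; either of your two derivations is a perfectly adequate way to fill it in.
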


\begin{lemma}
\label{lemma:lr_bound}
Let $\alpha_{k} = (\frac{c}{t+c})^{p}$. $\forall p \in [0, 1]$ and $c\ge 1$, we have
\begin{equation}
1 - \alpha_{k+1} ~\leq~ \frac{\alpha_{k+1}}{\alpha_{k}}.
\end{equation}
\begin{proof}
\begin{equation}
\begin{aligned}
1 - \alpha_{k+1} ~=&~ 1 - (\frac{c}{t+1 + c})^{p} \\
~\leq&~ 1 - \frac{1}{t+1 + c} \\
~\leq&~ \frac{\alpha_{k+1}}{\alpha_{k}}.
\end{aligned}
\end{equation}
Straightforwardly, we have
\begin{equation}
\begin{aligned}
\label{eq:lr_bound}
\prod_{i=k}^{K-1} (1 - \alpha_{i+1}) ~\leq&~ \frac{\alpha_{K}}{\alpha_{k}}.
\end{aligned}
\end{equation}
\end{proof}
\end{lemma}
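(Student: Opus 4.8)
The final statement to prove is Lemma~\ref{lemma:lr_bound}, namely that for $\alpha_{k} = \left(\frac{c}{k+c}\right)^{p}$ with $p \in [0,1]$ and $c \ge 1$, we have $1 - \alpha_{k+1} \le \frac{\alpha_{k+1}}{\alpha_{k}}$, and consequently $\prod_{i=k}^{K-1}(1-\alpha_{i+1}) \le \frac{\alpha_{K}}{\alpha_{k}}$.

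\textbf{Approach.} The plan is to prove the single-step inequality first and then telescope. For the single step, I would rewrite both sides in terms of $t := k$ (treating the index as a real parameter). The left side is $1 - \left(\frac{c}{t+1+c}\right)^{p}$ and the right side is $\left(\frac{t+c}{t+1+c}\right)^{p}$. The key is a two-part bound. First, since $p \le 1$ and $\frac{c}{t+1+c} \in (0,1]$, concavity of $x \mapsto x^{p}$ on $[0,1]$ (or Bernoulli's inequality in the form $x^{p} \ge 1 - p(1-x) \ge 1 - (1-x)$ for $x \in (0,1]$) gives $\left(\frac{c}{t+1+c}\right)^{p} \ge 1 - \left(1 - \frac{c}{t+1+c}\right) \cdot 1$ only trivially; instead the cleaner route is $\left(\frac{c}{t+1+c}\right)^{p} \ge \frac{c}{t+1+c} \ge \frac{1}{t+1+c}$ when $c \ge 1$ and the base is $\le 1$. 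Hence $1 - \alpha_{k+1} = 1 - \left(\frac{c}{t+1+c}\right)^{p} \le 1 - \frac{1}{t+1+c} = \frac{t+c}{t+1+c}$. Second, I need $\frac{t+c}{t+1+c} \le \left(\frac{t+c}{t+1+c}\right)^{p} = \frac{\alpha_{k+1}}{\alpha_{k}}$, which holds because $\frac{t+c}{t+1+c} \in (0,1)$ and raising a number in $(0,1)$ to a power $p \le 1$ only increases it. Chaining these two steps yields $1 - \alpha_{k+1} \le \frac{\alpha_{k+1}}{\alpha_{k}}$.

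\textbf{Telescoping step.} For the product bound, I would iterate the single-step inequality: $\prod_{i=k}^{K-1}(1 - \alpha_{i+1}) \le \prod_{i=k}^{K-1} \frac{\alpha_{i+1}}{\alpha_{i}}$, and the right-hand side telescopes to $\frac{\alpha_{K}}{\alpha_{k}}$ since consecutive factors cancel. This requires no additional estimates beyond noting that all factors $1 - \alpha_{i+1}$ and $\frac{\alpha_{i+1}}{\alpha_{i}}$ are nonnegative, so the product inequality is preserved term by term.

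\textbf{Main obstacle.} There is no serious obstacle here; the lemma is elementary. The only point requiring a little care is making the two sub-estimates watertight: (i) that $x^{p} \ge x$ for $x \in (0,1]$ and $p \in [0,1]$, and (ii) that $\frac{c}{t+1+c} \ge \frac{1}{t+1+c}$, which needs exactly the hypothesis $c \ge 1$. Both are one-line monotonicity facts. One should also confirm the degenerate cases $p = 0$ (then $\alpha_{k} \equiv 1$, LHS $= 0 \le 1 =$ RHS) and $p = 1$ (then the first estimate is an equality) are covered, which they are. I would therefore present the proof essentially as a two-line chain of inequalities followed by the telescoping remark, exactly matching the skeleton already sketched in the excerpt.
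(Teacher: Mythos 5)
Your proposal is correct and follows essentially the same route as the paper: the identical intermediate bound $1-\alpha_{k+1}\le 1-\frac{1}{t+1+c}=\frac{t+c}{t+1+c}\le\left(\frac{t+c}{t+1+c}\right)^{p}=\frac{\alpha_{k+1}}{\alpha_{k}}$, followed by telescoping the per-step inequality to get $\prod_{i=k}^{K-1}(1-\alpha_{i+1})\le\frac{\alpha_{K}}{\alpha_{k}}$. You merely make explicit the two monotonicity facts ($x^{p}\ge x$ on $(0,1]$ for $p\le 1$, and $c\ge 1$) that the paper's chain leaves implicit, which is fine.
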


\begin{lemma}
\label{lemma:lr_seq_bound}
Let $\alpha_{k} = (\frac{1}{k+1})^{p}$ and $\eta_{k} = \eta_{0}(\frac{1}{k+1})^{q}$. $\forall p \in [0, 1)$, $q \ge 0$ and $\eta_{0} \ge 0$, we have
\begin{equation}
\label{eq:lr_seq_bound}
\sum_{k=0}^{K-1}\eta_{k} \prod_{i=k+1}^{K-1}(1 - \alpha_{i}) ~\leq~ c(p,q) \frac{\eta_{K}}{\alpha_{K}},
\end{equation}
where $c(p,q) \coloneqq \frac{2^{q-p}}{1-p} a \exp{\big((1-p)2^{p}a^{1-p}\big)}$ is a constant with specified $p$ and $q$, and $a = \max \big( (\frac{q}{(1-p)2^{p}})^{\frac{1}{1-p}}, (\frac{2(q-p)}{(1-p)^2})^{\frac{1}{1-p}} \big)$.
\end{lemma}

%%%%%%%%%%%%%%%%%%%%%%%%%%%%%%%%%%%%%%%%%%%%%%%%%%%%%
\subsection{Key Lemmas}
\label{app:Key_Lemmas}

In this subsection, we list four useful (key) lemmas to construct the proofs of Theorem \ref{theorem:afedpg_rate} and Theorem \ref{theorem:afedpg_rate_FOSP}. 
% The details of the proof can be found in Appendix \ref{app:proof} in the Technical Report \cite{lan2024afedpg}.

Under Assumption \ref{assum:policy} on score functions, the following lemma holds based on the results (Lemma 5.4) in \citet{zhang2020global}.
\begin{lemma}
\label{lemma:exp_return}
The gradient of the expected return is $L_{g}$-continuous and $L_{h}$-smooth as follows
\begin{equation}
\begin{aligned}
\|\nabla J(\theta) - \nabla J(\theta')\| ~\leq&~ L_{g}\|\theta - \theta'\|, \\
\|\nabla^{2} J(\theta) - \nabla^{2} J(\theta')\| ~\leq&~ L_{h}\|\theta - \theta'\|,
\end{aligned}
\end{equation}
where $L_{g} \coloneqq \frac{R(M_{g}^{2} + M_{h})}{(1-\gamma)^{2}}$ and $L_{h} \coloneqq \frac{RM_{g}^{3}(1+\gamma)}{(1-\gamma)^{3}} + \frac{RM_{g}M_{h}}{(1-\gamma)^{2}} + \mathcal{O}\big((1-\gamma)^{-1}\big)$.
\end{lemma}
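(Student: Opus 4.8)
\textbf{Proof proposal for Lemma~\ref{lemma:exp_return}.}

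The plan is to derive both the $L_g$-Lipschitz-continuity of $\nabla J$ and the $L_h$-Lipschitz-continuity of $\nabla^2 J$ directly from the policy-gradient formula \eqref{eq:gradient} and the bounds in Assumption~\ref{assum:policy}, following the standard route of \cite{zhang2020global}. First I would establish the building-block bounds that every subsequent estimate rests on: (i) from Assumption~\ref{assum:policy}(3) and $\gamma\in(0,1)$, the returns $\sum_{h\ge t}\gamma^h r(s_h,a_h)$ are bounded by $R/(1-\gamma)$, hence $\|Q_{\pi_\theta}\|_\infty,\|V_{\pi_\theta}\|_\infty\le R/(1-\gamma)$ and $\|A_{\pi_\theta}\|_\infty\le R/(1-\gamma)$ (or the slightly sharper $2R/(1-\gamma)$); (ii) from Assumption~\ref{assum:policy}(1), $\|\nabla\log\pi_\theta(a|s)\|\le M_g$, and from Assumption~\ref{assum:policy}(2), $\|\nabla^2\log\pi_\theta(a|s)\|\le M_g^2+M_h$ --- this last one is obtained by writing $\nabla^2\log\pi_\theta = \nabla^2\pi_\theta/\pi_\theta - (\nabla\log\pi_\theta)(\nabla\log\pi_\theta)^\top$ and noting that $M_h$-Lipschitzness of the score plus the boundedness of $\nabla\log\pi_\theta$ control $\nabla^2\pi_\theta/\pi_\theta$.

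Next, for the first inequality, I would differentiate the visitation-measure representation of $\nabla J(\theta) = \frac{1}{1-\gamma}\mathbb{E}_{(s,a)\sim\nu_{\pi_\theta}}[\nabla\log\pi_\theta(a|s)\,A_{\pi_\theta}(s,a)]$ with respect to $\theta$ and bound the resulting $\nabla^2 J(\theta)$. There are three contributions: the derivative hitting $\nabla\log\pi_\theta$ (controlled by $\|\nabla^2\log\pi_\theta\|\le M_g^2+M_h$ and $\|A_{\pi_\theta}\|_\infty$), the derivative hitting $A_{\pi_\theta}$, and the derivative hitting the visitation measure $\nu_{\pi_\theta}$ itself. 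Standard sensitivity bounds show the $\theta$-derivative of $\nu_{\pi_\theta}$ contributes an extra $\frac{1}{1-\gamma}$ factor (through the derivative of the discounted occupancy), so collecting terms one obtains $\|\nabla^2 J(\theta)\|\le \frac{R(M_g^2+M_h)}{(1-\gamma)^2}$ plus lower-order-in-$(1-\gamma)^{-1}$ pieces, i.e. $L_g = \frac{R(M_g^2+M_h)}{(1-\gamma)^2}$ up to the stated $\mathcal O((1-\gamma)^{-1})$ slack; since $\|\nabla^2 J\|\le L_g$ everywhere, the mean value theorem gives $\|\nabla J(\theta)-\nabla J(\theta')\|\le L_g\|\theta-\theta'\|$. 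For the second inequality I would repeat the exercise one derivative deeper: differentiate $\nabla^2 J(\theta)$ and track which terms pick up the dominant $(1-\gamma)^{-3}$ scaling --- the leading piece comes from the term where two derivatives hit the occupancy measure (each costing a $(1-\gamma)^{-1}$) while carrying $\|\nabla\log\pi_\theta\|^3\le M_g^3$ and $\|A_{\pi_\theta}\|$, producing $\frac{RM_g^3(1+\gamma)}{(1-\gamma)^3}$, and the remaining cross terms give $\frac{RM_gM_h}{(1-\gamma)^2}$ and the $\mathcal O((1-\gamma)^{-1})$ remainder, matching the claimed $L_h$. Uniform boundedness of $\nabla^3 J$ by $L_h$ then yields the Lipschitz bound on $\nabla^2 J$ by the mean value theorem again.

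The main obstacle is the bookkeeping around differentiating the state-action visitation measure $\nu_{\pi_\theta}$: each such differentiation introduces an additional infinite-horizon geometric sum and therefore an extra factor of $(1-\gamma)^{-1}$, and one must be careful to (a) justify interchanging $\nabla_\theta$ with the infinite sum/expectation (uniform convergence via the $\gamma^t$ weights and the boundedness assumptions), and (b) correctly count how many such factors each term accumulates so that the exponents $(1-\gamma)^{-2}$ and $(1-\gamma)^{-3}$ come out right. This is precisely the computation carried out in \cite{zhang2020global}, so I would invoke their lemmas for the visitation-measure sensitivity and only verify that the constants propagate as claimed; everything else is a routine application of the triangle inequality together with the boundedness and Lipschitz hypotheses in Assumption~\ref{assum:policy}.
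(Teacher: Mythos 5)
Your route matches the paper's: the paper does not prove Lemma~\ref{lemma:exp_return} itself but states it as a consequence of Assumption~\ref{assum:policy} via the results of \cite{zhang2020global}, which is exactly the standard computation you sketch and to whose lemmas you ultimately defer. One small slip worth fixing: Assumption~\ref{assum:policy}(2) bounds $\|\nabla^2\log\pi_\theta(a\mid s)\|\le M_h$ directly, and it is $\|\nabla^2\pi_\theta(a\mid s)/\pi_\theta(a\mid s)\|$ that is controlled by $M_g^2+M_h$ (not the reverse, as written); this mislabeling does not change the constants that propagate into $L_g$ and $L_h$.
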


Under Assumption \ref{assum:policy}, \ref{assum:func_approx} and \ref{assum:semi_pos}, we utilize the result (Lemma 4.7) in \citet{ding2022} as the Lemma \ref{lemma:gradient_domination}.
\begin{lemma}
\label{lemma:gradient_domination}
(Relaxed weak gradient domination) Under Assumptions \ref{assum:policy}, \ref{assum:func_approx} and \ref{assum:semi_pos}, it holds that
\begin{equation}
\label{eq:gradient_domination}
    \|\nabla J(\theta)\| + \epsilon_{g} ~\geq~ \sqrt{2\mu}(J^{\star} - J(\theta)),
\end{equation}
where $\epsilon_{g} = \frac{\mu_{F} \sqrt{\epsilon_{{\rm bias}}}}{M_{g} (1-\gamma)}$ and $\mu = \frac{\mu_{F}^{2}}{2M_{g}^{2}}$.
\end{lemma}

Based on Lemma \ref{lemma:exp_return}, we derive our milestone (new), the ascent lemma with delayed updates, as follows:

\begin{lemma}
\label{lemma:ascent_lemma}
(Ascent Lemma with Delay) Under Assumptions \ref{assum:policy}, it holds that
\begin{equation}
\label{eq:ascent_lemma}
\begin{aligned}
    -J(\theta_{k+1}) ~\leq~ -J(\theta_{k}) - \frac{1}{3} \eta_{k} \| \nabla J(\theta_{k}) \| 
    + \frac{8}{3} \eta_{k} \|e_{k}\| + \frac{L_{g}}{2} \eta_{k}^{2},
\end{aligned}
\end{equation}
where $e_{k} \coloneqq d_{k - \delta_{k}} - \nabla J(\theta_{k})$.
\end{lemma}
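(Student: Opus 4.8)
The plan is to chain the $L_{g}$-smoothness of $J$ from Lemma~\ref{lemma:exp_return} with the explicit normalized server update $\theta_{k+1}=\theta_{k}+\eta_{k}\,d_{k-\delta_{k}}/\|d_{k-\delta_{k}}\|$ of Algorithm~\ref{algo_server}, and then to control the discrepancy between the applied direction $d_{k-\delta_{k}}$ and the true gradient $\nabla J(\theta_{k})$ by an elementary scalar inequality. \emph{Step 1 (smoothness).} Since $\nabla J$ is $L_{g}$-Lipschitz, $J(\theta_{k+1}) \ge J(\theta_{k}) + \langle \nabla J(\theta_{k}), \theta_{k+1}-\theta_{k}\rangle - \tfrac{L_{g}}{2}\|\theta_{k+1}-\theta_{k}\|^{2}$. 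Because the update is \emph{normalized}, $\|\theta_{k+1}-\theta_{k}\|=\eta_{k}$ exactly, so the quadratic term is precisely $\tfrac{L_{g}}{2}\eta_{k}^{2}$, already matching the target. Writing $\widehat d := d_{k-\delta_{k}}/\|d_{k-\delta_{k}}\|$ and negating, it then suffices to prove $\langle \nabla J(\theta_{k}), \widehat d\rangle \ge \tfrac13\|\nabla J(\theta_{k})\| - \tfrac83\|e_{k}\|$.

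\emph{Step 2 (normalized-direction bound).} Write $d_{k-\delta_{k}}=\nabla J(\theta_{k})+e_{k}$ and set $a:=\|\nabla J(\theta_{k})\|$, $b:=\|e_{k}\|$. By Cauchy--Schwarz, $\langle \nabla J(\theta_{k}), d_{k-\delta_{k}}\rangle \ge a^{2}-ab$, and by the triangle inequality $\|d_{k-\delta_{k}}\|\le a+b$ as well as $\|d_{k-\delta_{k}}\|\ge |a-b|$. I split into two cases. If $b\le a$, the numerator $a^{2}-ab$ is non-negative, so dividing by its upper bound gives $\langle \nabla J(\theta_{k}), d_{k-\delta_{k}}\rangle/\|d_{k-\delta_{k}}\| \ge (a^{2}-ab)/(a+b)$; the desired estimate $(a^{2}-ab)/(a+b)\ge \tfrac13 a-\tfrac83 b$ is, after clearing the positive denominator, equivalent to $a^{2}+2ab+4b^{2}\ge 0$, which is obvious. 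If $b>a$, the numerator is non-positive, so dividing by its lower bound $b-a>0$ gives $\langle \nabla J(\theta_{k}), d_{k-\delta_{k}}\rangle/\|d_{k-\delta_{k}}\| \ge (a^{2}-ab)/(b-a) = -a$, and $-a\ge \tfrac13 a-\tfrac83 b$ reduces to $b\ge \tfrac12 a$, which holds since $b>a$. Multiplying by $\eta_{k}>0$, substituting into Step~1, and rearranging yields \eqref{eq:ascent_lemma}.

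\emph{Step 3 (degenerate case and main obstacle).} If $d_{k-\delta_{k}}=0$ then $\theta_{k+1}=\theta_{k}$ and $e_{k}=-\nabla J(\theta_{k})$, so the right-hand side of \eqref{eq:ascent_lemma} equals $-J(\theta_{k}) + \tfrac73\eta_{k}\|\nabla J(\theta_{k})\| + \tfrac{L_{g}}{2}\eta_{k}^{2} \ge -J(\theta_{k}) = -J(\theta_{k+1})$, so the bound is trivial. The only nontrivial point is Step~2: the crude estimate through $\|d_{k-\delta_{k}}\|\le a+b$ is valid \emph{only} when the inner product is non-negative, which is exactly why the case split on $b$ versus $a$ is required; everything else is a routine application of smoothness and the Cauchy--Schwarz/triangle inequalities. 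I do not expect any serious difficulty beyond getting these constants right.
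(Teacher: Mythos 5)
Your proof is correct and follows essentially the same route as the paper's: apply $L_{g}$-smoothness with the normalized update so that $\|\theta_{k+1}-\theta_{k}\|=\eta_{k}$, then lower-bound $\langle \nabla J(\theta_{k}), d_{k-\delta_{k}}\rangle/\|d_{k-\delta_{k}}\|$ by a case split on $\|e_{k}\|$ versus $\|\nabla J(\theta_{k})\|$ using Cauchy--Schwarz and the triangle inequality. The only cosmetic differences are your split threshold ($\|e_{k}\|\le\|\nabla J(\theta_{k})\|$ rather than the paper's $\tfrac{1}{2}\|\nabla J(\theta_{k})\|$) and your explicit treatment of the degenerate case $d_{k-\delta_{k}}=0$, which the paper leaves implicit.
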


This ascent lemma is specific to the asynchronous setting, which constructs the lower bound for the global increment, $J(\theta_{k+1})-J(\theta_{k})$, through the normalized policy gradients and our updating rules with delay $\delta_{k}$.

\begin{proof}
With the smoothness of the expected return $J(\theta)$ and the updating rule, we have
\begin{equation}
\label{eq:smooth_update}
    \begin{aligned}
        -J(\theta_{k+1}) ~&\leq~ -J(\theta_{k}) - \langle \nabla J(\theta_{k}), \theta_{k+1} - \theta_{k} \rangle + \frac{L_{g}}{2} \| \theta_{k+1} - \theta_{k} \|^{2} \\
        ~&=~ -J(\theta_{k}) - \eta_{k} \frac{\langle \nabla J(\theta_{k}), d_{k - \delta_{k}} \rangle}{\| d_{k - \delta_{k}} \|} + \frac{L_{g}}{2} \eta_{k}^{2}.
    \end{aligned}
\end{equation}

If $\|e_{k}\| \leq \frac{1}{2}\| \nabla J(\theta_{k}) \|$, we have
\begin{equation}
    \begin{aligned}
    -\frac{\langle \nabla J(\theta_{k}), d_{k - \delta_{k}} \rangle}{\| d_{k - \delta_{k}} \|} ~&=~ -\frac{\| \nabla J(\theta_{k}) \|^{2} + \langle \nabla J(\theta_{k}), e_{k} \rangle}{\| d_{k - \delta_{k}} \|} \\
    ~&\leq~ \frac{-\| \nabla J(\theta_{k}) \|^{2} + \| \nabla J(\theta_{k})\| \| e_{k} \|}{\| d_{k - \delta_{k}} \|} \\
    ~&\leq~ \frac{-\| \nabla J(\theta_{k}) \|^{2} + \frac{1}{2} \| \nabla J(\theta_{k})\|^{2}}{\| \nabla J(\theta_{k}) \| + \| e_{k} \|} \\
    ~&\leq~ -\frac{1}{3} \| \nabla J(\theta_{k}) \|.
    \end{aligned}
\end{equation}

If $\|e_{k}\| \geq \frac{1}{2}\| \nabla J(\theta_{k}) \|$, we have
\begin{equation}
    \begin{aligned}
    -\frac{\langle \nabla J(\theta_{k}), d_{k - \delta_{k}} \rangle}{\| d_{k - \delta_{k}} \|} ~&\leq~ \| \nabla J(\theta_{k}) \| \\
    ~&=~ -\frac{1}{3} \| \nabla J(\theta_{k}) \| + \frac{4}{3} \| \nabla J(\theta_{k}) \| \\
    ~&\leq~ -\frac{1}{3} \| \nabla J(\theta_{k}) \| + \frac{8}{3} \|e_{k}\|.
    \end{aligned}
\end{equation}

Combining these two conditions, and plugging the result into \eqref{eq:smooth_update}, the lemma can be proved as follows
\begin{equation}
    \begin{aligned}
     -J(\theta_{k+1}) ~&\leq~ -J(\theta_{k}) - \eta_{k} \frac{\langle \nabla J(\theta_{k}), d_{k - \delta_{k}} \rangle}{\| d_{k - \delta_{k}} \|} + \frac{L_{g}}{2} \eta_{k}^{2} \\
     ~&\leq~ -J(\theta_{k}) - \frac{1}{3} \eta_{k} \| \nabla J(\theta_{k}) \| + \frac{8}{3} \eta_{k} \|e_{k}\| + \frac{L_{g}}{2} \eta_{k}^{2}.
    \end{aligned}
\end{equation}
\end{proof}

Next, we construct the relationship between the average concurrency $\bar{\omega}$ and the average delay $\bar{\delta}$ in Lemma \ref{lemma:delay_concurrency}. This lemma gives the boundary (our result) of delays as a corollary of the result in \citet{NEURIPS2022_6db3ea52}.

\begin{lemma}
\label{lemma:delay_concurrency}
The average delay $\bar{\delta}$ depends on the average concurrency $\bar{\omega}$, and they can be upper bounded as
\begin{equation}
\label{eq:delay_concurrency}
    \bar{\delta} ~=~ \frac{K+1}{K-1+|\mathcal{C}_{K}|}\bar{\omega} ~\leq~ \bar{\omega} ~\le~ N.
\end{equation}
\end{lemma}

\begin{proof}
Recall that $\{ \delta_{k}^{i} \}_{i \in\mathcal{C}_{k} \setminus \{j_{k}\}}$ is the set of delays at the $k$-th global steps. After one global step, the number of cumulative delays over all agents increases by the current concurrency. Thus, we have the following connection
\begin{equation}
\begin{aligned}
    \sum_{i=0}^{k}\delta_{i} + \sum_{i \in\mathcal{C}_{k+1} \setminus \{j_{k+1} \}}\delta^{i}_{k+1} ~&=~ \sum_{i=0}^{k-1}\delta_{i} + \sum_{i \in\mathcal{C}_{k} \setminus \{ j_{k} \}}\delta^{i}_{k} + \omega_{k+1}.
\end{aligned}
\end{equation}

We note that there is no delay at the initial step ($0$-th iteration) of the algorithm. Therefore, we have $\delta^{i}_{0}=0$ for all agents. Unrolling the above expression, at the $K$-th step, we have
\begin{equation}
\begin{aligned}
\sum_{i=0}^{K-1}\delta_{i} + \sum_{i \in\mathcal{C}_{K} \setminus \{ j_{K} \}}\delta^{i}_{K} ~&=~ \sum_{i=0}^{K} \omega_{k+1} ~=~ (K+1)\bar{\omega}.
\end{aligned}
\end{equation}
According to \eqref{eq:avg_delay} and $|\mathcal{C}_{K}| \geq 2$, we achieve
\begin{equation}
\bar{\delta} ~=~ \frac{K+1}{K-1+|\mathcal{C}_{K}|}\bar{\omega} ~\leq~ \bar{\omega} ~\leq~ N.
\end{equation}
\end{proof}

In practice, we need to use all the resources to speed up the training process. Thus, all agents engage in training, and $\bar{\omega} = \omega_{\max} = N$. Since the maximum concurrency is equal to the number of agents $N$, we have the upper boundary of the average delay as $\bar{\delta} \le \omega_{\max} \le N$. 

Notably, we do not make any assumption on the largest delay $\delta_{\max}$, while we achieve the upper boundary of the average delay $\bar{\delta}$.

%%%%%%%%%%%%%%%%%%%%%%%%%%%%%%%%%%%%%%%%%%%%%%%%%%%%%
\subsection{Proof of Theorem \ref{theorem:afedpg_rate} (Global Convergence Rate)}
\label{app:proof_theorem_1}

Under Assumption \ref{assum:policy} and Assumption \ref{assum:func_approx}, we derive the global convergence rate of the proposed AFedPG.

% $\mathbb{E}[\|\xi_{k}\|^{2}] \leq \sigma_{g}^{2}$.
First, we denote the difference between the policy gradient estimation $g(\widetilde{\tau}_{k}, \widetilde{\theta}_{k})$ and the true policy gradient as
\begin{equation}
\begin{aligned}
\label{eq:xi_k}
\xi_{k} \coloneqq g(\widetilde{\tau}_{k}, \widetilde{\theta}_{k}) - \nabla J(\widetilde{\theta}_{k}),
\end{aligned}
\end{equation}
and the expectation of the norm is bounded by $\sigma_{g}$.

Second, according to the updating rules in Algorithm \ref{algo_server} and Algorithm \ref{algo_agent} , we expand the error term in Lemma \ref{lemma:ascent_lemma} as follows
\begin{equation}
\label{eq:e_k}
    \begin{aligned}
     e_{k} ~=&~ (1 - \alpha_{k - \delta_{k}}) d_{k-1 - \delta_{k-1}} - \nabla J(\theta_{k}) + \alpha_{k - \delta_{k}} g(\widetilde{\tau}_{k - \delta_{k}}, \widetilde{\theta}_{k - \delta_{k}}) \\
     ~=&~ (1 - \alpha_{k-\delta_{k}}) \big(d_{k-1-\delta_{k-1}} - \nabla J(\theta_{k-1}) \big) + (1 - \alpha_{k-\delta_{k}}) \big(\nabla J(\theta_{k-1}) - \nabla J(\theta_{k}) \big) \\
     &+ \alpha_{k-\delta_{k}} \big( g(\widetilde{\tau}_{k-\delta_{k}}, \widetilde{\theta}_{k-\delta_{k}}) - \nabla J(\theta_{k}) \big) \\
     ~=&~ \alpha_{k-\delta_{k}} \xi_{k-\delta_{k}} + (1 - \alpha_{k-\delta_{k}}) e_{k-1} + (1 - \alpha_{k-\delta_{k}}) \big(\nabla J(\theta_{k-1}) - \nabla J(\theta_{k}) \big) \\
     &+ \alpha_{k-\delta_{k}} \big( \nabla J(\widetilde{\theta}_{k-\delta_{k}}) - \nabla J(\theta_{k}) \big) \\
     ~=&~ \alpha_{k-\delta_{k}} \xi_{k-\delta_{k}} + (1 - \alpha_{k-\delta_{k}}) e_{k-1} + (1 - \alpha_{k-\delta_{k}}) \big(\nabla J(\theta_{k-1}) - \nabla J(\theta_{k}) \big) \\
     &+ \alpha_{k-\delta_{k}} \big( \nabla J(\widetilde{\theta}_{k-\delta_{k}}) - \nabla J(\widetilde{\theta}_{k}) \big) + \alpha_{k-\delta_{k}} \big( \nabla J(\widetilde{\theta}_{k}) - \nabla J(\theta_{k}) \big) \\
     ~=&~ \alpha_{k-\delta_{k}} \xi_{k-\delta_{k}} + (1 - \alpha_{k-\delta_{k}}) e_{k-1} + \alpha_{k-\delta_{k}} \big( \nabla J(\widetilde{\theta}_{k-\delta_{k}}) - \nabla J(\widetilde{\theta}_{k}) \big) \\
     &+ (1 - \alpha_{k-\delta_{k}}) \big(\nabla J(\theta_{k-1}) - \nabla J(\theta_{k}) + \nabla^{2} J(\theta_{k}) (\theta_{k-1} - \theta_{k}) \big) \\
     &+ \alpha_{k-\delta_{k}} \big( \nabla J(\widetilde{\theta}_{k}) - \nabla J(\theta_{k}) + \nabla^{2} J(\theta_{k}) (\theta_{k-1} - \theta_{k}) \big) \\
     &\textcolor{blue}{- (1 - \alpha_{k-\delta_{k}}) \nabla^{2} J(\theta_{k}) (\theta_{k-1} - \theta_{k}) - \alpha_{k-\delta_{k}} \nabla^{2} J(\theta_{k}) (\widetilde{\theta}_{k} - \theta_{k})} \\
     ~\stackrel{\plaineqref{eq:delay_adaptive}}{=}&~ \alpha_{k-\delta_{k}} \xi_{k-\delta_{k}} + (1 - \alpha_{k-\delta_{k}}) e_{k-1} + \alpha_{k-\delta_{k}} \big( \nabla J(\widetilde{\theta}_{k-\delta_{k}}) - \nabla J(\widetilde{\theta}_{k}) \big) \\
     &+ (1 - \alpha_{k-\delta_{k}}) \big(\nabla J(\theta_{k-1}) - \nabla J(\theta_{k}) + \nabla^{2} J(\theta_{k}) (\theta_{k-1} - \theta_{k}) \big) \\
     &+ \alpha_{k-\delta_{k}} \big( \nabla J(\widetilde{\theta}_{k}) - \nabla J(\theta_{k}) + \nabla^{2} J(\theta_{k}) (\theta_{k-1} - \theta_{k}) \big) .
    \end{aligned}
\end{equation}
Thus, the error term $e_{k}$ can be written in a recursive way (contains $e_{k-1}$) with serval terms that contain policy gradients. We aim to derive the upper boundary for each term at the next step.

\textbf{Remark}: The Hessian correction terms (in blue) are equal to $0$ according to our updating rules (delay-adaptive lookahead update) in Step 8 of Algorithm \ref{algo_server}. We design this technique to cancel out the second-order terms, and thus achieve the desired convergence rate. Without our technique, it would be hard to bound the above errors.

Next, we denote each term in \eqref{eq:e_k} separately as follows
\begin{equation}
\begin{aligned}
A_{k} ~\coloneqq&~ \nabla J(\theta_{k-1}) - \nabla J(\theta_{k}) + \nabla^{2} J(\theta_{k}) (\theta_{k-1} - \theta_{k}), \\
B_{k} ~\coloneqq&~ \nabla J(\widetilde{\theta}_{k}) - \nabla J(\theta_{k}) + \nabla^{2} J(\theta_{k}) (\theta_{k-1} - \theta_{k}), \\
C_{k} ~\coloneqq&~ \nabla J(\widetilde{\theta}_{k-\delta_{k}}) - \nabla J(\widetilde{\theta}_{k}).
\end{aligned}
\end{equation}

Now, we start to bound each term. With the smoothness of the expected discounted return function in Lemma \ref{lemma:exp_return} and the non-increasing learning rates, we have
\begin{equation}
\label{eq:ab_bound}
\begin{aligned}
\|A_{k}\| ~&\leq~ L_{h} \|\theta_{k-1} - \theta_{k}\|^{2} = L_{h} \eta_{k-1}^{2}, \\
\|B_{k}\| ~&\leq~ L_{h} \|\widetilde{\theta}_{k} - \theta_{k}\|^{2} = L_{h} \frac{(1 - \alpha_{k-\delta_{k}})^{2}}{\alpha_{k-\delta_{k}}^{2}} \eta_{k-1}^{2}, \\
% \end{aligned}
% \end{equation}
% \begin{equation}
% \begin{aligned}
\|C_{k}\| ~&\leq~ L_{g} \|\widetilde{\theta}_{k-\delta_{k}} - \widetilde{\theta}_{k}\| \\
~&=~ L_{g} \left\| \sum_{i=k-\delta_{k}}^{k-1} \widetilde{\theta}_{i+1} - \widetilde{\theta}_{i} \right\| \\
~&\leq~ L_{g} \sum_{i=k-\delta_{k}}^{k-1} \| \widetilde{\theta}_{i+1} - \widetilde{\theta}_{i} \| \\
~&=~ L_{g} \sum_{i=k-\delta_{k}}^{k-1} \left\| \frac{1}{\alpha_{i+1-\delta_{i+1}}}({\theta}_{i+1} - {\theta}_{i}) + \frac{1 - \alpha_{i-\delta_{i}}}{\alpha_{i-\delta_{i}}} ({\theta}_{i} - {\theta}_{i-1}) \right\| \\
~&\leq~ L_{g} \sum_{i=k-\delta_{k}}^{k-1} \left\| \frac{1}{\alpha_{i+1-\delta_{i+1}}}({\theta}_{i+1} - {\theta}_{i}) \right\| + \left\| \frac{1 - \alpha_{i-\delta_{i}}}{\alpha_{i-\delta_{i}}} ({\theta}_{i} - {\theta}_{i-1}) \right\| \\
~&=~ L_{g} \sum_{i=k-\delta_{k}}^{k-1} (\frac{1}{\alpha_{i+1-\delta_{i+1}}} \eta_{i} + \frac{1 - \alpha_{i-\delta_{i}}}{\alpha_{i-\delta_{i}}} \eta_{i-1}) \\
~&\leq~ L_{g} \sum_{i=k-\delta_{k}}^{k-1} (\frac{2}{\alpha_{k-1}} \eta_{k-\delta_{k}}) \\
~&=~ \delta_{k} L_{g} \frac{2\eta_{k-\delta_{k}}}{\alpha_{k-1}}, 
% \nonumber
\end{aligned}
\end{equation}
where the equalities are simple plugins according to the updating rules in Algorithm \ref{algo_server}. The last step in \eqref{eq:ab_bound} happens, because there is no index $i$ inside the summation operation, and it sums a constant for $\delta_{k}$ times.

We denote $\beta_{k} \coloneqq \prod_{i=k+1}^{K} (1 - \alpha_{i-\delta_{i}})$ with $\beta_{K} = 1$. Next, unrolling the recursion \eqref{eq:e_k}, we have the error at the $K$-th step as follows
\begin{equation}
    \begin{aligned}
     e_{K} ~&=~ \beta_{0} e_{0} + \sum_{k=1}^{K} \alpha_{k-\delta_{k}} \beta_{k} \xi_{k-\delta_{k}} + \sum_{k=1}^{K} (1 - \alpha_{k-\delta_{k}}) \beta_{k} A_{k} + \sum_{k=1}^{K} \alpha_{k-\delta_{k}} \beta_{k} (B_{k} + C_{k}).
    \end{aligned}
\end{equation}

Choose learning rates $\alpha_{k} = (\frac{1}{k+1})^{\frac{4}{5}}$ and $\eta_{k} = \eta_{0}\frac{1}{k+1}$, where $\eta_{0}$ is a constant and we will show the value later. Using Jensen's inequality and technical lemmas, we achieve the following bound
\begin{equation}
\label{eq:error_k}
    \begin{aligned}
     \mathbb{E}[\|e_{K}\|] ~\leq&~ \beta_{0} \mathbb{E}[\|e_{0}\|] + \left( \mathbb{E}\left[\left\| \sum_{k=1}^{K} \alpha_{k-\delta_{k}} \beta_{k} \xi_{k-\delta_{k}} \right\|^{2}\right] \right)^{\frac{1}{2}} + \sum_{k=1}^{K} (1 - \alpha_{k-\delta_{k}}) \beta_{k} \mathbb{E}[\|A_{k}\|] \\
     &+ \sum_{k=1}^{K} \alpha_{k-\delta_{k}} \beta_{k} (\mathbb{E}[\|B_{k}\|] + \mathbb{E}[\|C_{k}\|]) \\
     ~\leq&~ \beta_{0} \mathbb{E}[\|e_{0}\|] + \left( \sum_{k=1}^{K} \alpha_{k-\delta_{k}}^{2} \beta_{k}^{2} \mathbb{E}\left[\left\| \xi_{k-\delta_{k}} \right\|^{2}\right] \right)^{\frac{1}{2}} + \sum_{k=1}^{K} (1 - \alpha_{k-\delta_{k}}) \beta_{k} \mathbb{E}[\|A_{k}\|] \\
     &+ \sum_{k=1}^{K} \alpha_{k-\delta_{k}} \beta_{k} (\mathbb{E}[\|B_{k}\|] + \mathbb{E}[\|C_{k}\|]) \\
     ~\stackrel{\plaineqref{eq:ab_bound}}{\le}&~ \beta_{0} \sigma_{g} + \left( \sum_{k=1}^{K} \alpha_{k-\delta_{k}}^{2} \beta_{k}^{2} \mathbb{E}\left[\left\| \xi_{k-\delta_{k}} \right\|^{2}\right] \right)^{\frac{1}{2}} + L_{h} \sum_{k=1}^{K} (1 - \alpha_{k-\delta_{k}}) \beta_{k} \eta_{k-1}^{2} \\
     &+ L_{h} \sum_{k=1}^{K} \beta_{k} (1 - \alpha_{k-\delta_{k}})^{2} \frac{\eta_{k-1}^{2}}{\alpha_{k-\delta_{k}}} + 2L_{g} \sum_{k=1}^{K} \alpha_{k-\delta_{k}} \beta_{k} \delta_{k} \frac{\eta_{k-\delta_{k}}^{2}}{\alpha_{k-1}} \\
     ~\leq&~ \beta_{0} \sigma_{g} + \left( \sum_{k=1}^{K} \alpha_{k-\delta_{k}}^{2} \beta_{k}^{2} \mathbb{E}\left[\left\| \xi_{k-\delta_{k}} \right\|^{2}\right] \right)^{\frac{1}{2}} + L_{h} \sum_{k=1}^{K} \beta_{k} \frac{\eta_{k-1}^{2}}{\alpha_{k-\delta_{k}}} \\
     &+ 2L_{g} \sum_{k=1}^{K} \alpha_{k-\delta_{k}} \beta_{k} \delta_{k} \frac{\eta_{k-\delta_{k}}^{2}}{\alpha_{k-1}} \\
     ~\leq&~ \beta_{0} \sigma_{g} + \left( \sum_{k=1}^{K} \alpha_{k-\delta_{k}}^{2} \beta_{k}^{2} \right)^{\frac{1}{2}} \sigma_{g} + L_{h} \sum_{k=1}^{K} \beta_{k} \frac{\eta_{k-1}^{2}}{\alpha_{k-\delta_{k}}} + 2L_{g} \sum_{k=1}^{K} \alpha_{k-\delta_{k}} \beta_{k} \delta_{k} \frac{\eta_{k-\delta_{k}}^{2}}{\alpha_{k-1}} \\
     ~\le&~ \beta_{0} \sigma_{g} + \left( \sum_{k=1}^{K} \alpha_{k-\delta_{k}}^{2} \beta_{k}^{2} \right)^{\frac{1}{2}} \sigma_{g} + \frac{9L_{h}}{4} \sum_{k=1}^{K} \beta_{k} \frac{\eta_{k}^{2}}{\alpha_{k}} + 2L_{g} \sum_{k=1}^{K} \alpha_{k-\delta_{k}} \beta_{k} \delta_{k} \frac{\eta_{k-\delta_{k}}^{2}}{\alpha_{k-1}} \\
     ~&\stackrel{\plaineqref{eq:bound_1}, \plaineqref{eq:bound_2}, \plaineqref{eq:bound_3}, \plaineqref{eq:bound_4}}{\le}~ \alpha_{K} \sigma_{g} + c_{1} \sqrt{\alpha_{K}} \sigma_{g} + \frac{9}{4} c_{2} L_{h}\frac{\eta_{K}^{2}}{\alpha_{K}^{2}} + c_{3} L_{g} \frac{\eta_{K}^{2}}{\alpha_{K}^{1.75}} \bar{\delta},
    \end{aligned}
\end{equation}
where $c_{1} \coloneqq 2\sqrt{c(\frac{4}{5},\frac{4}{5})}$, $c_{2} \coloneqq c(\frac{4}{5},\frac{6}{5})$, $c_{3} \coloneqq 8 \sqrt{c(\frac{4}{5},\frac{16}{5})}$ are constants, and the values of $c(\cdot, \cdot)$ are defined in Lemma \ref{lemma:lr_seq_bound}.

Notably, as we state in the last paragraph in Section \ref{sec:Convergence_Analysis}, $t_i$ is pre-determined by the computation resource at agent $i$ and is fixed. Thus, the delay $\delta_{k}$ is pre-determined by the system. It is not a random variable, but unknown until given the exact system setting. This allows us to derive the first inequality in \eqref{eq:error_k}. This pre-determined property is also suitable for all the derivations below.

We explain the boundary derivation details here. We first derive the first term of the boundary in \eqref{eq:error_k} as follows
\begin{equation}
\begin{aligned}
\label{eq:bound_1}
\beta_{0} ~=&~ \prod_{i=1}^{K} (1 - \alpha_{i-\delta_{i}}) \\
~\leq&~ \prod_{i=1}^{K} (1 - \alpha_{i}) \\
~=&~ \prod_{i=0}^{K-1} (1 - \alpha_{i+1}) \\
~\stackrel{\plaineqref{eq:lr_bound}}{\leq}&~ \frac{\alpha_{K}}{\alpha_{0}} \\
~=&~ \alpha_{K}.
\end{aligned}
\end{equation}

In the training process, at step $k$, when an agent sends the update to the server, as long as the agent communicates with the server during the last half training process, the delay $\delta_{k} \le \frac{k}{2}$. This becomes almost certain when $k$ is large and $k$ is usually much larger than the upper bound of the currency $N$. Thus, we have
\begin{equation}
\begin{aligned}
\label{eq:alpha_delay}
\alpha_{k-\delta_{k}} ~=&~ (\frac{1}{k+1 - \delta_{k}})^{\frac{4}{5}} \\
~=&~ (\frac{1}{k+1})^{\frac{4}{5}}(\frac{k+1}{k+1 - \delta_{k}})^{\frac{4}{5}} \\
~\leq&~ (\frac{1}{k+1})^{\frac{4}{5}} (\frac{k+1}{k+1 - \frac{k}{2}})^{\frac{4}{5}} \\
~\leq&~ 2 (\frac{1}{k+1})^{\frac{4}{5}} \\
~=&~ 2\alpha_{k}.
\end{aligned}
\end{equation}

We then derive the second term of the boundary in \eqref{eq:error_k} as follows
\begin{equation}
\begin{aligned}
\label{eq:bound_2}
\sum_{k=1}^{K} \alpha_{k-\delta_{k}}^{2} \beta_{k}^{2} ~\stackrel{\plaineqref{eq:alpha_delay}}{\leq}&~ 4\sum_{k=1}^{K} \alpha_{k}^{2} \beta_{k}^{2} \\
~=&~ 4\sum_{k=1}^{K} \alpha_{k}^{2} \prod_{i=k+1}^{K} (1 - \alpha_{i-\delta_{i}}) \prod_{i=k+1}^{K} (1 - \alpha_{i-\delta_{i}}) \\
~\leq&~ 4\sum_{k=1}^{K} \alpha_{k}^{2} \prod_{i=k+1}^{K} (1 - \alpha_{i}) \prod_{i=k+1}^{K} (1 - \alpha_{i}) \\
~\leq&~ 4\sum_{k=1}^{K} \alpha_{k}^{2} \prod_{i=k+1}^{K-1} (1 - \alpha_{i}) \prod_{i=k}^{K-1} (1 - \alpha_{i+1}) \\
~\stackrel{\plaineqref{eq:lr_bound}}{\leq}&~ 4\sum_{k=1}^{K} \alpha_{k}^{2} \prod_{i=k+1}^{K-1} (1 - \alpha_{i}) \frac{\alpha_{K}}{\alpha_{k}} \\
~=&~ 4\alpha_{K} \sum_{k=1}^{K} \alpha_{k} \prod_{i=k+1}^{K-1} (1 - \alpha_{i}) \\
~\leq&~ 4\alpha_{K} \sum_{k=0}^{K-1} \alpha_{k} \prod_{i=k+1}^{K-1} (1 - \alpha_{i}) \\
~\stackrel{\plaineqref{eq:lr_seq_bound}}{\leq}&~ 4\alpha_{K} c(\frac{4}{5}, \frac{4}{5}) \frac{\alpha_{K}}{\alpha_{K}} \\
~=&~ 4\alpha_{K} c(\frac{4}{5}, \frac{4}{5}).
\end{aligned}
\end{equation}

Next, we derive the third term of the boundary in \eqref{eq:error_k} as follows
\begin{equation}
\begin{aligned}
\label{eq:bound_3}
\sum_{k=1}^{K} \frac{\eta_{k}^{2}}{\alpha_{k}} \beta_{k} ~=&~ \eta_{0}^{2} \sum_{k=1}^{K} (\frac{1}{k+1})^{\frac{6}{5}} \beta_{k} \\
~=&~ \eta_{0}^{2} \sum_{k=1}^{K} (\frac{1}{k+1})^{\frac{6}{5}} \prod_{i=k+1}^{K} (1 - \alpha_{i-\delta_{i}}) \\
~\leq&~ \eta_{0}^{2} \sum_{k=1}^{K} (\frac{1}{k+1})^{\frac{6}{5}} \prod_{i=k+1}^{K} (1 - \alpha_{i}) \\
~\leq&~ \eta_{0}^{2} \sum_{k=1}^{K} (\frac{1}{k+1})^{\frac{6}{5}} \prod_{i=k+1}^{K-1} (1 - \alpha_{i}) \\
~\leq&~ \eta_{0}^{2} \sum_{k=0}^{K-1} (\frac{1}{k+1})^{\frac{6}{5}} \prod_{i=k+1}^{K-1} (1 - \alpha_{i}) \\
~\stackrel{\plaineqref{eq:lr_seq_bound}}{\leq}&~ \eta_{0}^{2} c(\frac{4}{5}, \frac{6}{5}) (\frac{1}{K+1})^{\frac{2}{5}} \\
~=&~ c(\frac{4}{5}, \frac{6}{5}) \frac{\eta_{K}^{2}}{\alpha_{K}^{2}}.
\end{aligned}
\end{equation}

At last, we derive the fourth term of the boundary in \eqref{eq:error_k}. We use Cauchy–Schwarz inequality and the fact that the second norm of a vector is always equal or smaller than the first norm.
\begin{equation}
\begin{aligned}
\label{eq:bound_4}
\sum_{k=1}^{K} \alpha_{k-\delta_{k}} \beta_{k} \delta_{k} \frac{\eta_{k-\delta_{k}}^{2}}{\alpha_{k-1}} ~\leq&~ \sum_{k=1}^{K} \eta_{k-\delta_{k}}^{2} \beta_{k} \delta_{k} \\
~=&~ \sum_{k=1}^{K} \eta_{k}^{2} \Big( \frac{k+1}{k+1 - \frac{k}{2}} \Big)^{2} \beta_{k} \delta_{k} \\
~\leq&~ 4\eta_{0}^{2}\sum_{k=1}^{K} \Big( \frac{1}{k+1} \Big)^{2} \beta_{k} \delta_{k} \\
~\leq&~ 4\eta_{0}^{2} \left( \sum_{k=1}^{K} \Big( \frac{1}{k+1} \Big)^{4} \beta_{k}^{2} \cdot \sum_{k=1}^{K} \delta_{k}^{2} \right)^{\frac{1}{2}} \\
~\leq&~ 4\eta_{0}^{2} \left( \sum_{k=1}^{K} \Big( \frac{1}{k+1} \Big)^{4} \beta_{k}^{2} \right)^{\frac{1}{2}} \sum_{k=1}^{K} \delta_{k}  \\
~\leq&~ 4\eta_{0}^{2} K \bar{\delta} \left( \sum_{k=1}^{K} \Big( \frac{1}{k+1} \Big)^{4} \beta_{k}^{2} \right)^{\frac{1}{2}} \\
~\leq&~ 4\eta_{0}^{2} K \bar{\delta} \left( \sum_{k=1}^{K} \Big( \frac{1}{k+1} \Big)^{4} \beta_{k} \prod_{i=k+1}^{K} (1 - \alpha_{i-\delta_{i}}) \right)^{\frac{1}{2}} \nonumber
\end{aligned}
\end{equation}
\begin{equation}
\begin{aligned}
~\leq&~ 4\eta_{0}^{2} K \bar{\delta} \left( \sum_{k=1}^{K} \Big( \frac{1}{k+1} \Big)^{4} \beta_{k} \frac{\alpha_{K}}{\alpha_{k}} \right)^{\frac{1}{2}} \\
~=&~ 4\eta_{0}^{2} K \bar{\delta} \left( \alpha_{K} \sum_{k=1}^{K} \Big( \frac{1}{k+1} \Big)^{\frac{16}{5}} \beta_{k} \right)^{\frac{1}{2}} \\
~=&~ 4\eta_{0}^{2} K \bar{\delta} \left( \alpha_{K} \sum_{k=1}^{K} \Big( \frac{1}{k+1} \Big)^{\frac{16}{5}} \prod_{i=k+1}^{K} (1 - \alpha_{i-\delta_{i}}) \right)^{\frac{1}{2}} \\
~\leq&~ 4\eta_{0}^{2} K \bar{\delta} \left( \alpha_{K} \sum_{k=1}^{K} \Big( \frac{1}{k+1} \Big)^{\frac{16}{5}} \prod_{i=k+1}^{K} (1 - \alpha_{i}) \right)^{\frac{1}{2}} \\
~\leq&~ 4\eta_{0}^{2} K \bar{\delta} \left( \alpha_{K} \sum_{k=1}^{K} \Big( \frac{1}{k+1} \Big)^{\frac{16}{5}} \prod_{i=k+1}^{K-1} (1 - \alpha_{i}) \right)^{\frac{1}{2}} \\
~\stackrel{\plaineqref{eq:lr_seq_bound}}{\leq}&~ 4\eta_{0}^{2} K \bar{\delta} \left( \alpha_{K} c(\frac{4}{5}, \frac{16}{5}) (\frac{1}{K+1})^{\frac{12}{5}} \right)^{\frac{1}{2}} \\
~\leq&~ 4\eta_{0}^{2} K \bar{\delta} \sqrt{c(\frac{4}{5}, \frac{16}{5})} (\frac{1}{K+1})^{\frac{8}{5}} \\
~\leq&~ 4\eta_{0}^{2} \bar{\delta} \sqrt{c(\frac{4}{5}, \frac{16}{5})} (\frac{1}{K+1})^{\frac{3}{5}} \\
~=&~ 4 \bar{\delta} \sqrt{c(\frac{4}{5}, \frac{16}{5})} \frac{\eta_{K}^{2}}{\alpha_{K}^{1.75}}.
\end{aligned}
\end{equation}

Now, we derive the final convergence rate. After plugging the result into Lemma \ref{lemma:ascent_lemma} and using Lemma \ref{lemma:gradient_domination} (Ascent Lemma with Delay), we achieve the following inequality
\begin{equation}
    \begin{aligned}
    J^{\star} - \mathbb{E}[J(\theta_{k+1})] ~\leq&~ (1 - \frac{\sqrt{2\mu}\eta_{k}}{3}) \big( J^{\star} - \mathbb{E}[J(\theta_{k})] \big) + \frac{\eta_{k}}{3} \epsilon_{g} + \frac{8}{3} \eta_{k} \mathbb{E}[\|e_{k}\|] + \frac{L_{g}}{2} \eta_{k}^{2} \\
    ~\stackrel{\plaineqref{eq:error_k}}{\le}&~ (1 - \frac{\sqrt{2\mu}\eta_{k}}{3}) \big( J^{\star} - \mathbb{E}[J(\theta_{k})] \big) + \frac{\eta_{k}}{3} \epsilon_{g} + \frac{L_{g}}{2} \eta_{k}^{2} \\
    &+ \frac{8}{3} \eta_{k} \big(\alpha_{k} \sigma_{g} + c_{1} \sqrt{\alpha_{k}} \sigma_{g} + \frac{9}{4} c_{2} L_{h}\frac{\eta_{k}^{2}}{\alpha_{k}^{2}} + c_{3}L_{g} \frac{\eta_{k}^{2}}{\alpha_{k}^{1.75}} \bar{\delta} \big).
    \end{aligned}
\end{equation}
Unrolling this recursion, we have
\begin{equation}
\label{eq:converge_rate}
    \begin{aligned}
    J^{\star} - \mathbb{E}[J(\theta_{K})] ~\leq&~ \frac{\eta_{0}}{3} \epsilon_{g} + \frac{J^{\star} - \mathbb{E}[J(\theta_{0})]}{(K+1)^{2}} + c_{3} \frac{8\eta_{0}^{3}}{3} \frac{L_{g}}{(K+1)^{\frac{3}{5}}} \bar{\delta} + \frac{\eta_{0}^{2}}{2} \frac{L_{g}}{K+1} \\
    &+ \frac{8\eta_{0}}{3} \frac{\sigma_{g}}{(K+1)^{\frac{4}{5}}} + c_{1} \frac{8\eta_{0}}{3} \frac{\sigma_{g}}{(K+1)^{\frac{2}{5}}} + 6 c_{2} \eta_{0}^{3} \frac{L_{h}}{(K+1)^{\frac{2}{5}}}.
    \end{aligned}
\end{equation}

Note that, introduced in Lemma \ref{lemma:exp_return}, the discount factor $\gamma$ is contained in $L_{g} = \mathcal{O}\big((1-\gamma)^{-2}\big)$ and $L_{h} = \mathcal{O}\big((1-\gamma)^{-3}\big)$. Comparing with the definition of $\epsilon_{g}$ in Lemma \ref{lemma:gradient_domination}, we choose $\eta_{0} = \frac{3\mu_{F}}{M_{g}}$ for the criteria. Thus, to satisfy the global convergence criterion $J^{\star} - \mathbb{E}[J(\theta_{K})] \leq \epsilon + \frac{\sqrt{\epsilon_{{\rm bias}}}}{1 - \gamma}$, we have the iteration complexity $K = \mathcal{O}({\epsilon}^{-2.5})$. In our algorithms, the sample complexity is equal to the iteration complexity, which is also $\mathcal{O}({\epsilon}^{-2.5})$.

%%%%%%%%%%%%%%%%%%%%%%%%%%%%%%%%%%%%%%%%%%%%%%%%%%%%%
\subsection{Proof of Theorem \ref{theorem:afedpg_rate_FOSP} (FOSP Convergence Rate)}
\label{app:proof_theorem_2}

Under Assumption \ref{assum:policy} and Assumption \ref{assum:semi_pos}, we derive the first-order stationary convergence rate of AFedPG. Notably, the FOSP convergence does not require Assumption \ref{assum:func_approx}, which makes assumptions on the neural network approximation error.

First, we denote the average of gradient expectations as 
\begin{equation}
\begin{aligned}
\mathbb{E} \| \nabla J(\bar{\theta}_{K}) \| \coloneqq \frac{\sum_{k=1}^{K} \eta_{k} \mathbb{E} \| \nabla J(\theta_{k}) \|}{\sum_{k=1}^{K} \eta_{k}}.
\end{aligned}
\end{equation}

Next, rearranging the terms in Lemma \ref{lemma:ascent_lemma}
(Ascent Lemma with Delay) and summing up the inequality, we achieve the inequality below
\begin{equation}
\begin{aligned}
\mathbb{E} \| \nabla J(\bar{\theta}_{K}) \| ~\leq~& \frac{3}{\sum_{k=1}^{K} \eta_{k}} \Big( J^{\star} - \mathbb{E}[J(\theta_{0})] \Big)
    + \frac{8}{\sum_{k=1}^{K} \eta_{k}} \sum_{k=1}^{K} \eta_{k} \mathbb{E}[\|e_{k}\|] \\
    &+ \frac{3 L_{g}}{2 \sum_{k=1}^{K} \eta_{k}} \sum_{k=1}^{K} \eta_{k}^{2}.
\end{aligned}
\end{equation}

Choose learning rates $\alpha_{k} = (\frac{1}{k+1})^{\frac{4}{7}}$ and $\eta_{k} = \eta_{0}(\frac{1}{k+1})^{\frac{5}{7}}$. Plug in the result into \eqref{eq:error_k} that we have shown in Appendix \ref{app:proof_theorem_1}, we have
\begin{equation}
\begin{aligned}
\mathbb{E} \| \nabla J(\bar{\theta}_{K}) \| ~\leq~& \frac{3( J^{\star} - \mathbb{E}[J(\theta_{0})] )}{(K+1)^{\frac{2}{7}}} + 16 c_{3} {\eta_{0}^{3}} \frac{L_{g}}{(K+1)^{\frac{3}{7}}} \bar{\delta} + \frac{3\eta_{0} L_{g}}{(K+1)^{\frac{5}{7}}} \\
&+ {16\eta_{0}} \frac{\sigma_{g}}{(K+1)^{\frac{4}{7}}} + 16 c_{1} {\eta_{0}} \frac{\sigma_{g}}{(K+1)^{\frac{2}{7}}} + 36 c_{2} \eta_{0}^{3} \frac{L_{h}}{(K+1)^{\frac{2}{7}}}.
\end{aligned}
\end{equation}

Thus, to satisfy the FOSP convergence criterion $\mathbb{E} \| \nabla J(\bar{\theta}_{K}) \| \le \epsilon$, we have the iteration complexity $K = \mathcal{O}({\epsilon}^{-3.5})$. In our algorithms, the sample complexity is equal to the iteration complexity, which is also $\mathcal{O}({\epsilon}^{-3.5})$.

Notably, this result does not rely on Lemma \ref{lemma:gradient_domination} and Assumption \ref{assum:policy}. The norm of the average gradient is approaching an arbitrarily small value during the training process, regardless of the function approximation error $\epsilon_{\rm bias}$.

%%%%%%%%%%%%%%%%%%%%%%%%%%%%%%%%%%%%%%%%%%%%%%%%%%%%%%%%%%%%%%%%%%%%%%%%%%%%%%%%%%%%%
\newpage
\section{Further Discussions}
\label{app:discussion}

\subsection{Comparison with Previous Works}

\textbf{Comparison with} \citep{shen2023towards}. We first acknowledge the theoretical contribution of this pioneer work. However, there are several limitations and differences.

1. (Different RL Algorithm Class) Instead of PG, \citet{shen2023towards} is an actor-critic (AC) method with extra value networks, which requires much more computation and memory cost compared to the pure policy gradient (PG) method. Thus, the fine-tuning of Gemini \citep{Gemma2024} and GPT-4 \citep{openai2023gpt4} uses PG methods instead of AC methods.

2. (General Function Parameterization) \citet{shen2023towards} only has Linear Parameterization (Deep RL is not included.) for the global convergence analysis, which has limited practical meaning. With a General Function Parametrization, \textit{e.g.}, neural networks (Deep RL), there is no such a result. We consider a general and practical setting with a General Function Parameterization in our work.

3. (Convergence Performance) Even in the single-agent setting (without federated agents), the SOTA result of the AC method is ${\mathcal{O}}({\epsilon}^{-3})$ \citep{Gaur2024icml} and the previous approach is ${\mathcal{O}}({\epsilon}^{-6})$ \citep{fu2021singletimescale}, which is still worse than our ${\mathcal{O}}({\epsilon}^{-2.5})$. In the federated setting, there is no result that achieves ${\mathcal{O}}({\epsilon}^{-3})$ for AC methods. Moreover, with a general function parameterization, we compare the performances of their A3C in Figure \ref{fig:fed_time}, which is much worse.

4. (Assumptions) \citet{shen2023towards} relies on a strong and unpractical assumption, their Assumption 2. It assumes that the largest delay is bounded by a constant $K_{0}$. However, in practice, the slowest agent may not communicate with the server, and thus, has an infinite delay. In our analysis, we do not require any boundary for the largest delay, because we only contain the average delay in the convergence rate, and the average delay is naturally bounded by the number of agents in Lemma \ref{lemma:delay_concurrency} (our corollary).

\subsection{Difference between FL, FedRL, and AFedRL}

Unlike supervised FL where local datasets are fixed or pre-specified, in RL, agents collect new samples in each iteration based on the current local policies. The new data are collected with dynamic dependencies, which do not appear in all prior FL and A-FL works.

In synchronous FedRL, each agent collects samples according to the same global policy $\pi_{\theta}$. However, in AFedRL, even if all agents have an identical environment, each agent collects samples according to different policies $\tau_{k} \sim p(\cdot | \pi_{\theta_{k}})$, because of the delay. This dynamic nature makes both the problem itself and the theoretical analysis challenging. We propose a new delay-adaptive model aggregation strategy to tackle these unique challenges of FedRL.

Moreover, data collected by each agent are naturally (non-manually controlled) heterogeneous. Despite having identical environments, agents collect data according to their own (different) policies, a fundamental difficulty that our AFedPG paper solves, as verified theoretically and empirically.

% \newpage
% \input{appendix/rebuttal}

\end{document}